\documentclass{article}

% if you need to pass options to natbib, use, e.g.:
%     \PassOptionsToPackage{numbers, compress}{natbib}
% before loading neurips_2022
\PassOptionsToPackage{comma}{natbib}

% ready for submission
 \usepackage[final]{neurips_2022}

% to compile a preprint version, e.g., for submission to arXiv, add add the
% [preprint] option:
%    \usepackage[preprint]{neurips_2022}

% to compile a camera-ready version, add the [final] option, e.g.:
%     \usepackage[final]{neurips_2022}

% to avoid loading the natbib package, add option nonatbib:
%    \usepackage[nonatbib]{neurips_2022}

\usepackage[utf8]{inputenc} % allow utf-8 input
\usepackage[T1]{fontenc}    % use 8-bit T1 fonts
\usepackage[hidelinks]{hyperref}       % hyperlinks
\usepackage{multirow}
\usepackage{url}            % simple URL typesetting
\usepackage{booktabs}       % professional-quality tables
\usepackage{amsfonts}       % blackboard math symbols
\usepackage{nicefrac}       % compact symbols for 1/2, etc.
\usepackage{microtype}      % microtypography
\usepackage{xcolor}         % colors

\usepackage{graphicx}       % figures

\usepackage{dsfont}         % indicator function

\bibliographystyle{abbrvnat}

\usepackage{caption}

\usepackage{subcaption}
\usepackage{amsmath}
\usepackage{bm} % bold greek symbols
\usepackage{amsthm} % theorem definitions
\usepackage{wrapfig}

\theoremstyle{plain}
\newtheorem{proposition}{Proposition}

% \renewcommand{\arraystretch}{1.1}

%\setlength{\textfloatsep}{6pt plus 1.0pt minus 2.0pt}

%\usepackage[font=small,labelfont=bf]{caption} %JK: makes captions smaller

%\captionsetup{font=small,skip=0pt}

\title{What Can the Neural Tangent Kernel Tell Us About Adversarial Robustness?}

% The \author macro works with any number of authors. There are two commands
% used to separate the names and addresses of multiple authors: \And and \AND.
%
% Using \And between authors leaves it to LaTeX to determine where to break the
% lines. Using \AND forces a line break at that point. So, if LaTeX puts 3 of 4
% authors names on the first line, and the last on the second line, try using
% \AND instead of \And before the third author name.

\author{%
  Nikolaos Tsilivis \\
  Center for Data Science\\
  New York University\\
  \texttt{nt2231@nyu.edu} \\
  % examples of more authors
   \And
   Julia Kempe \\
   Center for Data Science and\\
   Courant Institute of Mathematical Sciences\\
   New York University \\
   \texttt{kempe@nyu.edu} \\
  % \AND
  % Coauthor \\
  % Affiliation \\
  % Address \\
  % \texttt{email} \\
  % \And
  % Coauthor \\
  % Affiliation \\
  % Address \\
  % \texttt{email} \\
  % \And
  % Coauthor \\
  % Affiliation \\
  % Address \\
  % \texttt{email} \\
}

\begin{document}

\maketitle

\begin{abstract} 
The adversarial vulnerability of neural nets, and subsequent techniques to create robust models have attracted significant attention; yet we still lack a full understanding of this phenomenon. Here, we study adversarial examples of trained neural networks through analytical tools afforded by recent theory advances connecting neural networks and kernel methods, namely the Neural Tangent Kernel (NTK), following a growing body of work that  leverages the NTK approximation to successfully analyze important
deep learning phenomena and design algorithms for new applications.
We show how NTKs allow to generate adversarial examples in a ``training-free'' fashion, and demonstrate that they transfer to fool their finite-width neural net counterparts in the ``lazy'' regime. We leverage this connection to provide an alternative view on robust and non-robust features, which have been suggested to underlie the adversarial brittleness of neural nets. Specifically, we define and study features induced by the eigendecomposition of the kernel to better understand the role of robust and non-robust features, the reliance on both for standard classification and the robustness-accuracy trade-off. We find that such features are surprisingly consistent across architectures, and that
robust features tend to correspond to the largest eigenvalues of the model, and thus are learned early during training. Our framework allows us to 
identify and visualize non-robust yet useful features.
%
%to vizualize both robust and non-robust parts of the model.
%We observe  that just retaining the robust features of standard models does not suffice for adversarially robust classification. 
Finally, we shed light on the robustness mechanism underlying adversarial training of neural nets used in practice: quantifying the evolution of the associated empirical NTK, we demonstrate that its dynamics falls much earlier into the ``lazy'' regime and
manifests a much stronger form of the well known bias to prioritize learning features within the top eigenspaces of the kernel, compared to standard training.

%   Furthermore, inspecting the spectra of the kernels allows us to visualise the function space of different models and demonstrate that abstractions previously proposed in the literature (existence of robust and non-robust features in common computer vision datasets) are extremely accurate in capturing the exact roots of the adversarial vulnerability of neural networks. 
% This allow us to propose a novel definition of neural nets features, which as we show, formally captures and verifies many anecdotal properties of networks; standard models depend on both robust and non-robust features, different models learn surprisingly similar features from the data and
%   Finally, we seek to understand what a robust model is constituted of by a careful empirical analysis of the dynamics of adversarially trained models, via their NTKs
% , reveals a form of inductive bias absent from normal training, which we believe is what makes robust models robust.
\end{abstract}

\section{Introduction}

Despite the tremendous success of deep neural networks in many computer vision and language modeling tasks, as well as in scientific discoveries, their properties and the reasons for their success are still poorly understood. Focusing on computer vision, a particularly surprising phenomenon evidencing that those machines drift away from how humans perform image recognition is the presence of \textit{adversarial examples}, images that are almost identical to the original ones, yet are misclassified by otherwise accurate models. 

Since their discovery \citep{Sze+14}, a vast amount of work has been devoted to understanding the sources of adversarial examples and explanations include, but are not limited to, the close to linear operating mode of neural nets \citep{GSS15}, the curse of dimensionality carried by the input space \citep{GSS15,Gab+19}, insufficient model capacity \citep{Tsi+19,Nakk19} or spurious correlations found in common datasets \citep{Ily+19}. In particular, one widespread 
viewpoint is that adversarial vulnerability is the result of a model's sensitivity to imperceptible yet well-generalizing features in the data, so called {\em useful non-robust} features, giving rise to a trade-off between accuracy and robustness \citep{Tsi+19,Zha+19}.
This gradual understanding has enabled the design of training algorithms, that provide convincing, yet partial, remedies to the problem; the most prominent of them being adversarial training and its many variants \citep{GSS15,Mad+18,robustbench20}. Yet we are far from a mature, unified theory of robustness that is powerful enough to universally guide engineering choices or defense mechanisms.

%In this work, we study adversarial robustness (or the lack of it) by focusing on the connection of neural networks with kernel machines. In the infinitely wide and infinitesimal learning rate regime, neural networks provably become kernel machines with a data-independent, but architecture dependent kernel that remains constant during training. The analytical tools afforded by the rich theory of kernels have resulted in progress in understanding the optimization landscape and generalization capabilities of neural networks, while also inspiring practical advances in diverse areas of applications such as the design of better classifiers, neural architecture search, pruning neurons at initialization and dataset distillation. We find that the NTK can also inform us about the adversarial vulnerabilities of models, demystifying notions such as the transferability of perturbations, and offering .

In this work, we aim to get a deeper understanding of adversarial robustness (or lack thereof) by focusing on the recently established connection of neural networks with kernel machines. Infinitely wide neural networks, trained via gradient descent with infinitesimal learning rate, provably become kernel machines with a data-independent, but architecture dependent kernel - its Neural Tangent Kernel (NTK) -  that remains constant during training \citep{JHG18,Lee+19,Aro+19b,Liu+20}. The analytical tools afforded by the rich theory of kernels have resulted in progress in understanding the optimization landscape and generalization capabilities of neural networks \citep{Du+19b,Aro+19a}, together with the discovery of interesting deep learning phenomena \citep{Fort+20,Jim+21}, while also inspiring practical advances in diverse areas of applications such as the design of better classifiers \citep{Sha+20}, efficient neural architecture search \citep{Chen+21}, low-dimensional tasks in graphics \citep{Tan+20} and dataset distillation \citep{Ngu+21}. While the NTK approximation is increasingly utlilized, even for finite width neural nets, little is known about the adversarial robustness properties of these infinitely wide models.

%Infinitely wide neural networks, however, have only recently emerged as topics of interest and many of their properties have yet to be explored. 
{\bf Our contribution:} Our work inscribes itself into the quest to 
leverage analytical tools afforded by kernel methods, in particular spectral analysis, to track properties of interest in the associated neural nets, in this case as they pertain to robustness. 
%In particular, we are interested in the question:
%\begin{center}
%   \textit{How well can the NTK predict the adversarial robustness of neural networks and what can it tell %us about it?}
% \end{center}
% To this end, we perform a series of experiments on trained neural networks and find a remarkable ability of the NTK to predict the adversarial robustness (or the lack of it) of models.
% We study adversarial perturbations and robustness of kernels so as to enrich our understanding of adversarial robustness in general machine learning models.
To this end, we first demonstrate that adversarial perturbations generated {\em analytically} with the NTK can successfully lead the associated trained wide neural networks (in the kernel-regime) to misclassify, thus allowing kernels to faithfully predict the lack of robustness of those trained neural networks. In other words, adversarial (non-) robustness transfers from kernels to networks; and adversarial perturbations generated via kernels resemble those generated by the corresponding trained networks. One implication of this transferability is that we can analytically devise adversarial examples that do not require access to the trained model and in particular its weights; instead these ``blind spots'' may be calculated a-priori, before training starts. 
% Although similar transferability has already been known and exploited in prior work that trains substitute models, we demonstrate that this notion holds from first principles that only require the a-priori description of the model architecture. The analytical expressions afforded by the kernel might provide a better understanding of the elusive concept of transferability also across architectures, as the corresponding expressions for the associated kernels can be compared directly.
%, especially as more and more analytical (and differentiable) NTK expressions and libraries become available (e.g. \cite{Nov+20}).

\begin{figure}
    \centering
    \includegraphics[scale=0.09]{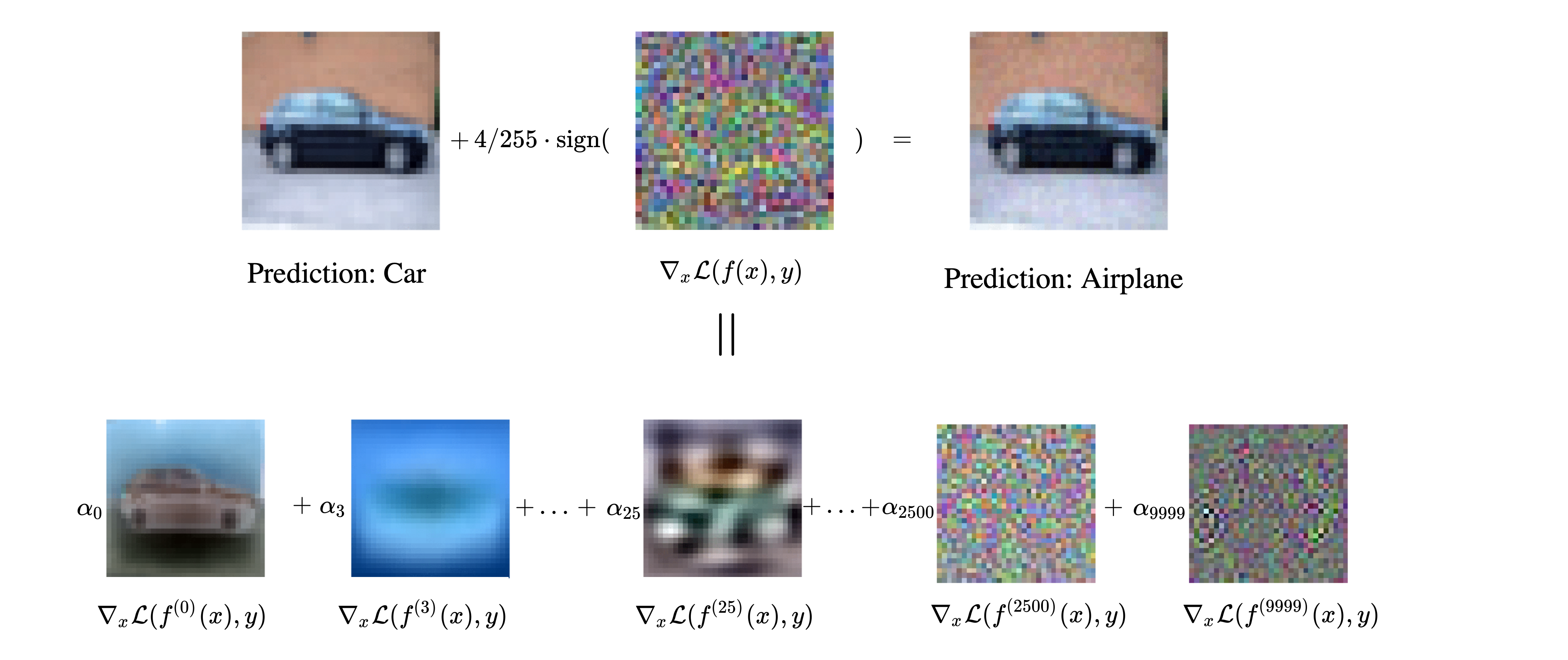}
    \caption{\textbf{Top}. Standard setup of an adversarial attack, where a barely perceivable perturbation 
    %(here, inside an $\ell_\infty$ ball of radius 4 / 255)
    is added to an image to confuse an accurate classifier. \textbf{Bottom}. The correspondence between neural networks and kernel machines allows to visualize a decomposition of this perturbation, each part attributed to a different feature of the model. The first few features tend to be {\em robust}.}
    % allowing a deeper understanding of standard adversarial examples for deep neural networks.}
    \label{fig:grad_decompose}
\end{figure}

A perhaps even more crucial implication of the NTK approach to robustness relates to the {\em understanding} of adversarial examples. Indeed, we show how the spectrum of the NTK provides an alternative way to define {\em features} of the model, to classify them according to their robustness and usefulness for correct predictions and visually inspect them via their contribution to the adversarial perturbation  (see Fig.~\ref{fig:grad_decompose}). This in turn  allows us to verify previously conjectured properties of standard classifiers; dependence on both \textit{robust} and \textit{non-robust} features in the data \citep{Tsi+19}, and tradeoff of accuracy and robustness during training. In particular we observe that features tend to be rather invariable across architectures, and that robust features tend to correspond to the {\em top} of the eigenspectrum (see Fig.~\ref{fig:features}), and as such are learned first by the corresponding wide nets \citep{Aro+19a,JHG18}. Moreover, we are able to visualize useful non-robust features of standard models (Fig.~\ref{fig:non_rob_feats}).
While this conceptual feature distinction has been highly influential in recent works that study the robustness of deep neural networks (see for example \citep{ZhLi20,KLR21,SMK21}), to the best of our knowledge, none of them has explicitly demonstrated the dependence of networks on such feature functions (except for simple linear models \citep{Goh19}). Rather, these works either reveal such features in some indirect fashion, or accept their existence as an assumption. Here, we show that Neural Tangent Kernel theory endows us with a natural definition of features through its eigen-decomposition and provides a way to \textit{visualise and inspect robust and non-robust features directly} on the function space of trained neural networks. 

\begin{figure}
    \centering
    \includegraphics[scale=0.3]{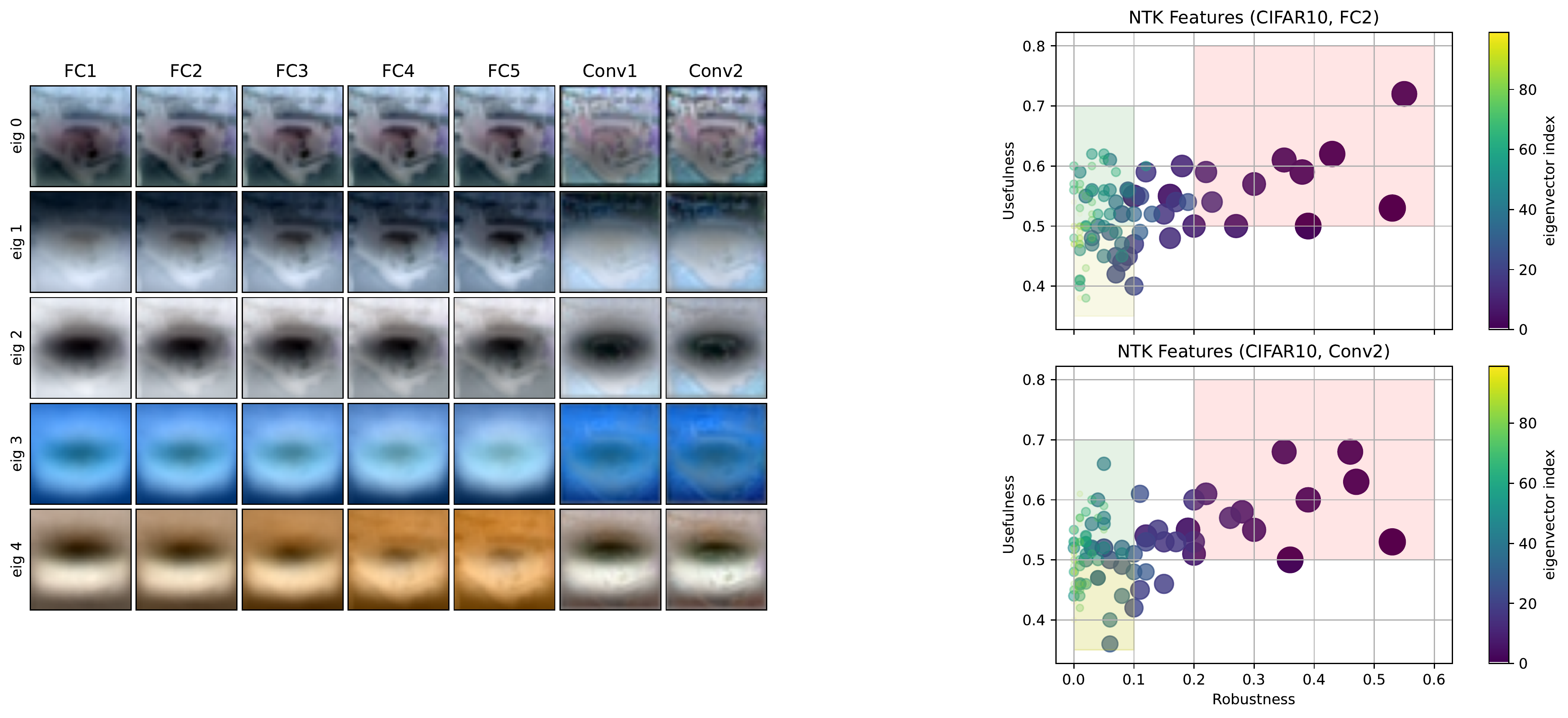}
    \caption{\textbf{Left}: Top 5 features for 7 different kernel architectures for a car image extracted from the CIFAR10 dataset when trained on car and plane images. \textbf{Right}: Features according to their robustness (x-axis) and usefulness (y-axis). 
Larger/darker bullets correspond to larger eigenvalues. {\em Useful} features have $>0.5$-usefulness; shaded boxes are meant to help visualize useful-robust regions.}
 %We see that features induced by the top eigenvectors are the only truly robust ones. Shaded regions show useful-and-robust features (pink), useful-but-not-robust (green) and neither-robust-nor-useful (yellow). Robustness is measured against FGSM attacks for $\epsilon = 8/255$.
 % %The usefulness of a feature is defined as its predictive ability on hold-out clean data, while its robustness is measured by its robust accuracy (under an FGSM attack) there.

    \label{fig:features}
\end{figure}

Interestingly, this connection also enables us to empirically demonstrate that robust features of standard models alone are not enough for robust classification. Aiming to understand, then, what makes robust models robust, we track the {\em evolution} of the data-dependent {\em empirical} NTK during \textit{adversarial training} of neural networks used in practice.
Prior experimental work has found that networks with non-trivial width to depth ratio which are trained with large learning rates, depart from the NTK regime and fall in the so-called ``rich feature'' regime, where the NTK changes substantially during training \citep{Gei+19,Fort+20,Bar+21,Jim+21}. In our work, which to the best of our knowledge is the first to provide insights on how the kernel behaves during adversarial training, we find that the NTK evolves much faster compared to standard training, simultaneously both changing its features and assigning more importance to the more robust ones, giving direct insight into the mechanism at play during adversarial training (see Fig.~\ref{fig:polarrotation}).
In summary, the contributions of our work are the following:
\begin{itemize}
    \item We discuss how to generate adversarial examples for infinitely-wide neural networks via the NTK, and show that they transfer to fool their associated (finite width) nets in the appropriate regime, yielding a "training-free" attack without need to access model weights (Sec. \ref{blackbox_attack}).
    %In particular this constitutes a new ``black-box'' way to generate adversarial examples without access to a model's weights (Sec. \ref{blackbox_attack}).
    \item Using the spectrum of the NTK, we give an alternative definition of features, 
    providing a natural decomposition or perturbations into robust and non-robust parts \citep{Tsi+19,Ily+19} (Fig.~\ref{fig:grad_decompose}). We confirm that robust features overwhelmingly correspond to the top part of the eigenspectrum; hence they are learned early on in training. 
    We bolster previously conjectured hypotheses that prediction relies on both robust and non-robust features and that robustness is traded for accuracy during standard training. Further, we show that only utilizing the robust features of standard models is not sufficient for robust classification (Sec. \ref{ntk_feats}).
    
%    , bolstering previously conjectured hypotheses that prediction relies on both robust and non-robust features, and that robustness is traded for accuracy during standard training (Sec. \ref{ntk_feats}).
    
    \item We turn to finite-width neural nets with standard parameters to study the {\em dynamics} of their empirical NTK during \textit{adversarial training}. We show that the kernel rotates in a way that enables both new (robust) feature learning and that drastically increases of the importance (relative weight) of the robust features over the non-robust ones. We further highlight the structural differences of the kernel change during adversarial training versus standard training and observe that the kernel seems to enter the ``lazy'' regime much faster (Sec. \ref{sec_dynamics}). 
\end{itemize}

% Collectively, our findings unify many phenomena that are present in the adversarial liability of models and further elucidate both the vulnerability of standard models and the robustness of the adversarially trained ones.

Collectively, our findings may help explain many phenomena present in the adversarial ML literature and further elucidate both the vulnerability of standard models and the robustness of adversarially trained ones. We provide code to visualize features induced by kernels, giving a unique and principled way to inspect features induced by standardly trained nets (available at \url{https://github.com/Tsili42/adv-ntk}).

{\bf Related work:} To the best of our knowledge the only prior work  that leverages NTK theory to derive perturbations in some adversarial setting is due to \cite{YuWu21}, yet with entirely different focus. It deals with what is coined {\em generalization attacks}: the process of altering the training data distribution to prevent models to generalise on clean data. 
%In contrast, our work analyses trained models to analytically derive their weakness to adversarial perturbations at \textit{inference} time. 
\cite{Bai+21} study aspects of robust models through their linearized sub-networks, but do not leverage NTKs. 

\section{Preliminaries}
\label{prelim}

We introduce background material  and definitions important to our analysis. Here, we restrict ourselves to binary classification, to keep notation light. We defer the multiclass case, complete definitions and a more detailed discussion of prior work to the Appendix. 

\subsection{Adversarial Examples} Let $f$ be a classifier, $\mathbf{x}$ be an input (e.g. a natural image) and $y$ its label (e.g. the image class). Then, given that $f$ is an accurate classifier on $\mathbf{x}$, $\Tilde{\mathbf{x}}$ is an adversarial example \citep{Sze+14} for $f$ if 
\begin{enumerate}
\item[i)] the distance $ d(\mathbf{x},\tilde{\mathbf{x}})$ is small. Common choices in computer vision are the $\ell_p$ norms, especially the $\ell_\infty$ norm on which we focus henceforth, and 
\item[ii)] $f(\tilde{\mathbf{x}}) \neq y$. That is, the perturbed input is being misclassified.
\end{enumerate}
Given a loss function $\mathcal{L}$, such as cross-entropy, one can construct an adversarial example $\Tilde{\mathbf{x}} = \mathbf{x} + \bm{\eta}$ by finding the perturbation $\bm{\eta}$ that produces the maximal increase of the loss, solving
\begin{equation}\label{eq:adv1}
    \bm{\eta} = \arg \max_{\| \bm{\eta} \|_\infty \leq \epsilon} \mathcal{L}(f(\mathbf{x} + \bm{\eta}), y),
\end{equation}
for some $\epsilon > 0$ that quantifies the dissimilarity between the two examples. In general, this is a non-convex problem and one can resort to first order methods  \citep{GSS15}
\begin{equation}\label{eq:fgsm}
    \tilde{\mathbf{x}} = \mathbf{x} + \epsilon \cdot \mathrm{sign} \left( \nabla_{\mathbf{x}} \mathcal{L}(f(\mathbf{x}), y) \right),
\end{equation}
or iterative versions for solving it \citep{KGB17,Mad+18}.
% \begin{equation}\label{eq:pgd}
%     \mathbf{x}^{k+1} = \Pi_{\mathcal{B}_{\mathbf{x}_0}^\epsilon} \left( \mathbf{x}^k + \alpha \cdot \mathrm{sign} (\nabla_{\mathbf{x}^k} \mathcal{L}(f(\mathbf{x}^k), y) \right), \; \; \; k = 0, \ldots, N-1,
% \end{equation}
% where $\mathbf{x}^0 = \mathbf{x}$ is the original example, $\alpha$ is the learning step of the search algorithm, $N$ is the number of the iterations, $\tilde{\mathbf{x}} = \mathbf{x}^N$ is the final adversarial example and $\Pi_{\mathcal{B}_{\mathbf{x}^0}^\epsilon}$ denotes the projection of each iteration onto the allowed constraint set. 
The former method is usually called \textit{Fast Gradient Sign Method (FGSM)} and the latter \textit{Projected Gradient Descent (PGD)}. These methods are able to produce examples that are being misclassified by common neural networks with a probability that approaches 1 \citep{CaWa17}. Even more surprisingly, it has been observed that adversarial examples crafted to ``fool'' one machine learning model are consistently capable of ``fooling'' others \citep{PMG16,Pap+17}, a phenomenon that is known as the \textit{transferability} of adversarial examples. Finally, {\em adversarial training} refers to the alteration of the training procedure to include adversarial samples for teaching the model to be robust \citep{GSS15,Mad+18} and empirically holds as the strongest defense against adversarial examples \citep{Mad+18,Zha+19}.

\subsection{Robust and Non-Robust Features}
\label{ssec:feats}
% Prior work has tried to shed light on the remarkable discrepancy between the gradients of robust and non-robust models; while the gradient of the loss of naturally trained models is ``noisy'', the gradient of adversarially trained ones contains semantically meaningful information \citep{Tsi+19,Ily+19}.
Despite a vast amount of research, the reasons behind the existence of adversarial examples are not perfectly clear. A line of work has argued that a central reason is the presence of robust and non-robust features in the data that standard models learn to rely upon \citep{Tsi+19,Ily+19}. In particular it is conjectured that reliance on
{\em useful but non-robust} features during training is responsible for the brittleness of neural nets. Here, we slightly adapt the feature definitions of \citep{Ily+19}\footnote{We distinguish useful and robust features based on their accuracy as classifiers, not in terms of correlation with the labels as in \cite{Ily+19}, allowing a natural extension to the multi-class setting. For robustness, we consider any  accuracy bounded away from zero as robust, quantifying that an adversary cannot drive accuracy to zero entirely.}, and extend them to multi-class problems (see Appendix \ref{App:feats}).

Let $\mathcal{D}$ be the data generating distribution with $x \in \mathcal{X}$ and $y \in \{\pm 1\}$. We define a \textit{feature} as a function $\phi: \mathcal{X} \to \mathbb{R}$ and distinguish how they perform as classifiers. Fix $\rho, \gamma \geq 0$:

\begin{enumerate}
\item \textbf{$\rho$-Useful} feature:  A feature $\phi$ is called \textit{$\rho$-useful} if 
\begin{equation}\label{eq:useful}
\mathbb{E}_{x, y \sim \mathcal{D}}  \big[\mathds{1}_{\{\textrm{sign}[\phi(x)] = y \}}\big]  = \rho 
\end{equation}

\item \textbf{$\gamma$-Robust} feature: A feature $\phi$ is called \textit{$\gamma$-robust} if it remains useful under any perturbation inside a bounded ``ball'' $\mathcal{B}$, that is if
\begin{equation}\label{eq:robust}
\mathbb{E}_{x, y \sim \mathcal{D}} \big[\inf_{\delta \in \mathcal{B}}\mathds{1}_{\{ \textrm{sign}[\phi(x+\delta)] = y \}}\big] = \gamma 
\end{equation}
\end{enumerate}
In general, a feature adds predictive value if it gives an advantage above guessing the most likely label, i.e. $\rho > \max_{y' \in \{\pm1\}} \mathbb{E}_{x, y \sim \mathcal{D}} [\mathds{1}_{\{y'=y}\}]$, and we will speak of ``useful'' features in this case, omitting the $\rho$. We will call such a feature \textbf{useful, non-robust} if it is useful, but $\gamma$-robust only for $\gamma=0$ or very close to $0$, depending on context.

The vast majority of works imagines features as being induced by the {\em activations} of neurons in the net, most commonly those of the penultimate layer ({\em representation-layer} features), but the previous formal definitions are in no way restricted to activations, and we will show how to exploit them using the eigenspectrum of the NTK.
In particular, in Sec.~\ref{ntk_feats}, we demonstrate that the above framework agrees perfectly with features induced by the eigenspectrum of the NTK of a network, providing a natural way to decompose the predictions of the NTK into such feature functions. In particular we can identify  robust, useful, and, indeed, useful non-robust features.

%some of which more useful than others, others more robust while there are also functions $\phi$ that useful, but non-robust.

% , and then we explore features and their properties of fully connected and convolutional architectures.

\subsection{Neural Tangent Kernel}

Let $f: \mathbb{R}^d \to \mathbb{R}$ be a (scalar) neural network with a linear final layer parameterized by a set of weights $\mathbf{w} \in \mathbb{R}^p$ and $\{ \mathcal{X}, \mathcal{Y} \}$ be a dataset of size $n$, with $\mathcal{X} \in \mathbb{R}^{n \times d}$ and $\mathcal{Y} \in \{\pm 1\}^n$. Linearized training methods study the first order approximation
\begin{equation}\label{eq:first_order_NTK}
    f(\mathbf{x}; \mathbf{w}_{t+1}) = f(\mathbf{x}; \mathbf{w}_t) + \nabla_{\mathbf{w}} f(\mathbf{x}; \mathbf{w}_t)^\top (\mathbf{w}_{t+1} - \mathbf{w}_t).
\end{equation}
The network gradient $\nabla_{\mathbf{w}} f(\mathbf{x}; \mathbf{w}_t)$ induces a kernel function $\Theta_t: \mathbb{R}^d \times \mathbb{R}^d \to \mathbb{R}$, usually referred as the \textit{Neural Tangent Kernel (NTK)} of the model
\begin{equation}\label{eq:Gram}
    \Theta_t(\mathbf{x}, \mathbf{x}^\prime) = \nabla_{\mathbf{w}} f(\mathbf{x}; \mathbf{w}_t)^\top \nabla_{\mathbf{w}} f(\mathbf{x}^\prime; \mathbf{w}_t).
\end{equation}
This kernel describes the dynamics with infinitesimal learning rate (gradient flow).
In general, the tangent space spanned by the $\nabla_{\mathbf{w}} f(\mathbf{x}; \mathbf{w}_t)$ twists substantially during training, and learning with the Gram matrix of  Eq.~\eqref{eq:Gram} (empirical NTK) corresponds to training along an intermediate tangent plane.
Remarkably, however, in the infinite width limit with appropriate initialization and low learning rate, it has been shown that $f$ becomes a \textit{linear} function of the parameters \citep{JHG18,Liu+20}, and the NTK remains {\em constant} ($\Theta_t =\Theta_0=:\Theta$). Then, for learning with $\ell_2$ loss the training dynamics of infinitely wide networks admits a closed form solution corresponding to kernel regression \citep{JHG18,Lee+19,Aro+19b}
\begin{equation}\label{eq:kernel_prediction}
    f_t(\mathbf{x}) = \Theta(\mathbf{x}, \mathcal{X})^\top \Theta^{-1}(\mathcal{X}, \mathcal{X}) (I - e^{-\lambda \Theta(\mathcal{X}, \mathcal{X}) t}) \mathcal{Y},
\end{equation}
where $\mathbf{x} \in \mathbb{R}^d$ is any input (training or testing), $t$ denotes the time evolution of gradient descent, $\lambda$ is the (small) learning rate and, slightly abusing notation, $\Theta(\mathcal{X}, \mathcal{X}) \in \mathbb{R}^{n \times n}$ denotes the matrix containing the pairwise training values of the NTK, $\Theta(\mathcal{X}, \mathcal{X})_{ij} = \Theta(\mathbf{x}_i, \mathbf{x}_j)$, and similarly for $\Theta(\mathbf{x}, \mathcal{X}) \in \mathbb{R}^{n}$. To be precise, Eq.~\eqref{eq:kernel_prediction} gives the {\em mean} output of the network using a weight-independent kernel with variance depending on the initialization\footnote{For that reason, in the experiments, we often compare this with the centered prediction of the actual neural network, $f - f_0$, as is commonly done in similar studies \citep{COB19}.}. 
%Note that $f_t$ gives an estimate of the prediction of the neural network during training, which is available at initialization before the actual training procedure takes place. 

%We will refer to \eqref{eq:kernel_prediction} as the \textit{NTK prediction}, written as $f_{\mathrm{NTK}}$ in some cases by dropping the time index, to highlight the difference with the actual prediction of a neural network. 

% and many studies have confirmed empirically this agreement for finite-width neural networks \citep{Lee+19} - see \citep{Lee+20} for extensive comparisons.

\section{Transfer Results in the Kernel Regime}
\label{blackbox_attack}

% It has been observed in practise that the 1st order approximation of \ref{eq:first_order_NTK} is fairly accurate in predicting the outputs of actual neural networks trained with (stochastic) gradient descent. Here, we pose and answer thereafter the question:
% \begin{center}
%     \textit{How well can the NTK predict how vulnerable to adversaries the actual neural networks are?}
% \end{center}
In this section, we show how to generate adversarial examples from NTKs and discuss their similarity to the ones generated by the actual networks. Note that for network results, we restrict ourselves to wide networks initialized in the ``lazy'' regime with small learning rates (the ``kernel regime'').

\subsection{Generation of Adversarial Examples for Infinitely Wide Neural Networks}

Adversarial examples arise in the context of {\em classification}, while the NTK learning process is described by a regression as in Eq.~\eqref{eq:kernel_prediction}. The arguably simplest way to align with the framework presented in Eq.~\eqref{eq:adv1} is to treat the outputs of the kernel similar to logits of a neural net, mapping them to a probability distribution via the sigmoid/softmax function and apply cross-entropy loss.

A simple calculation (see Appendix \ref{App:fgsm_ntk}, together with the generalization to the multi-class case) gives:

{\em The optimal one step adversarial example of a scalar, infinitely wide, neural network is given by}
    \begin{equation}\label{eq:ntk-fgsm-binary}
        \begin{split}
            \tilde{\mathbf{x}} & = \mathbf{x} - y \epsilon \cdot \mathrm{sign} \left( \nabla_{\mathbf{x}} f_t(\mathbf{x}) \right),
        \end{split}
    \end{equation}
    for $\| \tilde{\mathbf{x}} - \mathbf{x} \|_\infty \leq \varepsilon$, where $\nabla_{\mathbf{x}} f_t(\mathbf{x}) = \nabla_\mathbf{x} \Theta(\mathbf{x}, \mathcal{X})^\top \Theta^{-1}(\mathcal{X}, \mathcal{X}) (I - e^{-\lambda \Theta(\mathcal{X}, \mathcal{X}) t}) \mathcal{Y}$.

One can conceive other ways to generate adversarial perturbations for the kernel, either by changing the loss function (as previously done in neural networks (e.g. \citep{CaWa17})) or through a Taylor expansion around the test input, and we present such alternative derivations in Appendix \ref{App:fgsm_ntk}. However, in practice we observe little difference between that approach and the one presented here.

% One might chose to generate adversarial perturbations directly from the output of the kernel, for instance through a Taylor expansion around the test input, and we present such derivations also in Appendix \ref{App:fgsm_ntk}. However, in practice we observe little difference between that approach and the one presented here. We also note that changes in the loss function allow to design more sophisticated state-of-the art attacks, as previously done in neural networks \citep{CaWa17}.

% Next, we generate such examples for neural networks and discuss their similarity to the ones generated by the actual nets.

% \section{Adversarial attacks on Infinitely Wide Neural Networks}

% 

\subsection{Transfer Results and Kernel Attacks}

Predictions from NTK theory for infinitely wide neural networks have been used successfully for their large finite width counterparts, so it seems reasonable to conjecture that adversarial perturbations generated via the kernel as in Eq.~\eqref{eq:ntk-fgsm-binary} resemble those directly computed for the corresponding neural net as per Eq.~\eqref{eq:fgsm}. In particular, this would imply that adversarial perturbations derived from the NTK should not only fool the kernel machine itself, but also lead wide neural nets to misclassify.
\begin{wrapfigure}{r}{0.3\textwidth}
    \centering
    \includegraphics[width=0.3\textwidth]{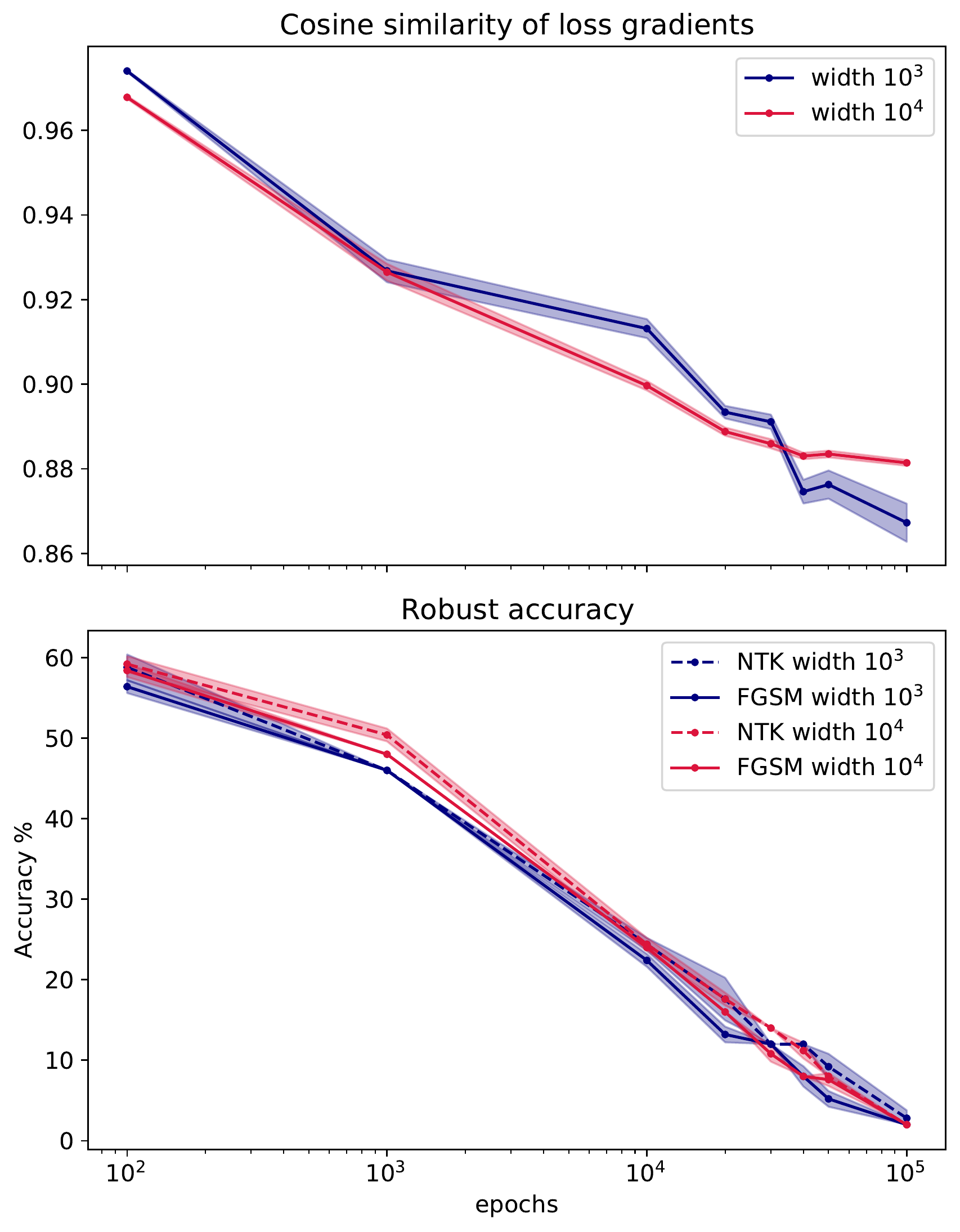}
    \caption{\textbf{Top}. Cosine similarity between the loss gradient of the neural net and of the NTK prediction for the same time point. \textbf{Bottom}. Robust accuracy of neural net against its own adversarial examples (solid) and corresponding NTK examples (dashed). CIFAR10, car vs plane.}
    \label{fig:finite_width_gradients}
\end{wrapfigure}
While similar transfer results in different contexts have been observed indirectly, via the {\em effects} of the perturbation on metrics like accuracy \citep{YuWu21,Ngu+21}, we aim to look deeper to compare perturbations {\em directly}.  High similarity would imply that {\em any} gradient based white-box attack on the neural net can be successfully mimicked by a ``black-box'' kernel derived attack.
 
% \begin{figure}
%     \centering
%     \begin{subfigure}[b]{0.3\textwidth}
%         \centering
%         \includegraphics[scale=0.2]{}
%         \caption{Cosine similarity between the loss gradient of the neural net and the loss gradient of the NTK prediction for the same time point.}
%         \label{fig:cos_sim}
%     \end{subfigure}\hspace{2mm}  %%\hfill
%     \begin{subfigure}[b]{0.3\textwidth}
%         \centering
%         \includegraphics[scale=0.2]{}
%         \caption{Robust accuracy of target neural net vs its own adversarial examples (solid) and its NTK's examples (dashed).}
%         \label{fig:fgsmVsntk_acc}
%     \end{subfigure}\hspace{2mm}
%     \caption{\textbf{Left, Middle}. Comparison of adversarial quantities obtained from the NTK and the associated neural net. \comment{keep only CIFAR} \textbf{Right}. Pending.}
%     \label{fig:practise_theory_comparison}
% \end{figure}

\textbf{Setting}. To this end, we train multiple two-layer neural networks on image classifications tasks extracted from MNIST and CIFAR-10 and compare adversarial examples generated by Eqs.~\eqref{eq:fgsm} (attacking the neural network) and \eqref{eq:ntk-fgsm-binary} (attacking the kernel). The networks are trained with small learning rate and are sufficiently large, so lie close to the NTK regime.

We track cosine similarity between the gradients of the loss from the NTK predictions and the gradients from the actual neural net as training evolves. Then, we generate adversarial perturbations from both the neural net and the kernel machine, and test whether those produced by the latter can fool the former. Full experimental details can be found in Appendix \ref{App:bbox}.

\textbf{Results}. Our experiments confirm a very strong alignment of loss gradients from the neural nets and the NTK across the whole duration of training, as can be seen in Fig.~\ref{fig:finite_width_gradients} (top). Then, as expected, kernel-generated attacks produce a similar drop in accuracy throughout training as the networks ``own'' white-box attacks, eventually driving robust accuracy to  $0\%$, as seen in Fig.~\ref{fig:finite_width_gradients} (bottom). We reproduce these plots for MNIST in Appendix \ref{App:bbox}, leading to similar conclusions.

%Given the high similarity of the gradients, we are expecting that gradient-based attacks on the kernel will produce drops in the accuracy of the neural net, similar to those produced by its own white-box attacks. Indeed, we see this is the case in Figure \ref{fig:fgsmVsntk_acc}, where eventually the robust accuracy is driven to $0\%$ from both the network's examples and its NTK's.

% As expected from these plots, we show that kernel perturbations drive accuracy of the associated neural net to $0\%$ in a similar way as for the kernel itself.

When concerned with security aspects of neural nets, adversarial attacks are mainly characterised as either \textit{white-box} or \textit{black-box} attacks \citep{Pap+17}. White box attacks assume full access to the neural network and in particular its weights; prominent examples include FGSM/PGD attacks. Black box attacks, on the other hand, can only {\em query} the model to try to infer the loss gradient, either through training separate surrogate models \citep{PMG16} or through carefully crafted input-output pairs fed to the target model \citep{Che+17,Ily+18,And+20}. NTK theory and the experiments of this section suggest a threat model in which the attacker does not require access to the model or its weights, nor training of a substitute model. For fixed architecture and training data, all the information required for the computation of Eq.~\eqref{eq:ntk-fgsm-binary} is available at initialization, making the ``NTK-attack'' akin to a ``training free'' substitution attack, and, at least in the kernel-regime for wide nets considered here, as effective as white-box attacks.

\section{NTK Eigenvectors Induce Robust and Non-Robust Features}
\label{ntk_feats}

This close connection between adversarial perturbations from the kernel and the corresponding neural net gives us the opportunity to bring to bear kernel tools on the study of adversarial robustness and its relation to features in a more direct fashion.
Several recent works leverage properties of the NTK, and specifically its spectrum, to study aspects of approximation and generalization in neural networks \citep{Aro+19a,Bas+19,BiMa19,Bas+20}. Here we show how the spectrum relates to robustness and helps to clarify the notion of robust/non-robust features.

We define {\em features} induced by the eigendecomposition of the Gram matrix $\Theta(\mathcal{X}, \mathcal{X}) = \sum_{i = 1}^n \lambda_i \mathbf{v}_i \mathbf{v}_i^\top$. We will be most interested in the {\em end} of training, when the model has access to all the features it can extract from the training data $\mathcal{X}$. As $t \to \infty$, Eq.~\eqref{eq:kernel_prediction} becomes $f_{\infty} (\mathbf{x}) = \Theta(\mathbf{x}, \mathcal{X})^\top \Theta(\mathcal{X}, \mathcal{X})^{-1} \mathcal{Y}$ and can be decomposed as $f_{\infty}(\mathbf{x}) = \Theta(\mathbf{x}, \mathcal{X})^\top \sum_{i = 1}^n \lambda_i^{-1} \mathbf{v}_i \mathbf{v}_i^\top \mathcal{Y} = \sum_{i = 1}^n f^{(i)} (\mathbf{x})$, where
\begin{equation}\label{eq:ntkfeat_definition}
    f^{(i)}: \mathbb{R}^d \to \mathbb{R}^k, \; f^{(i)} (\mathbf{x}) := \lambda_i^{-1} \Theta(\mathbf{x}, \mathcal{X})^\top \mathbf{v}_i \mathbf{v}_i^\top \mathcal{Y}.
\end{equation}

%where $f^{(i)}: \mathbb{R}^d \to \mathbb{R}^k$ represents the $i$-th eigenfunction.

% \comment{this part needs a better introduction}
Each $f^{(i)}$ can be seen as a {\em unique feature} captured from the (training) data. Note that these functions map the input to the output space, thus matching the definitions of Sec.~\ref{ssec:feats}.
% \comment{Appendix if not overkill}.
Also observe that all $f^{(i)}$'s jointly recover the original prediction of the model, while each one, intuitively, should contribute something different to it.
% Interestingly, hence, it is the same mechanism that gives rise to the spectral bias that is also behind the evolution towards incoherence of the loss gradient.
% Further, kernel's (and the associated neural net's) prediction depend on these features, and as discussed previously, training for longer is equivalent to collecting a greater number of such features.

Importantly, these features induce a decomposition of the gradient of the loss into parts, each representing gradients of a unique feature as already advertised in Fig.~\ref{fig:grad_decompose}. The binary case is particularly elegant as it gives rise to a linear decomposition of the gradient as 
    \begin{equation}
        \nabla_\mathbf{x} \mathcal{L} (f_{\infty}(\mathbf{x}), y) = \sum_{i = 1}^n \alpha_i \nabla_\mathbf{x} \mathcal{L} (f^{(i)}(\mathbf{x}), y),
    \end{equation}
for some $\alpha_i$ depending on $\mathbf{x}$ and $y$ (see Appendix \ref{App:ntk_feats}).
But if $f^{(i)}$'s are features, how do they look like?

\paragraph{Feature properties of common architectures:}

% With these definitions in place, we can now analyze robustness of features across a range of architectures, fully connected and convolutional of varying depth, for MNIST and a binary task from CIFAR. We approximate usefulness and robustness via a hold-out validation set using FGSM for robustness. Full details can be found in the Appendix.

\begin{wrapfigure}{r}{0.35\textwidth}
    \centering
    \includegraphics[width=0.35\textwidth]{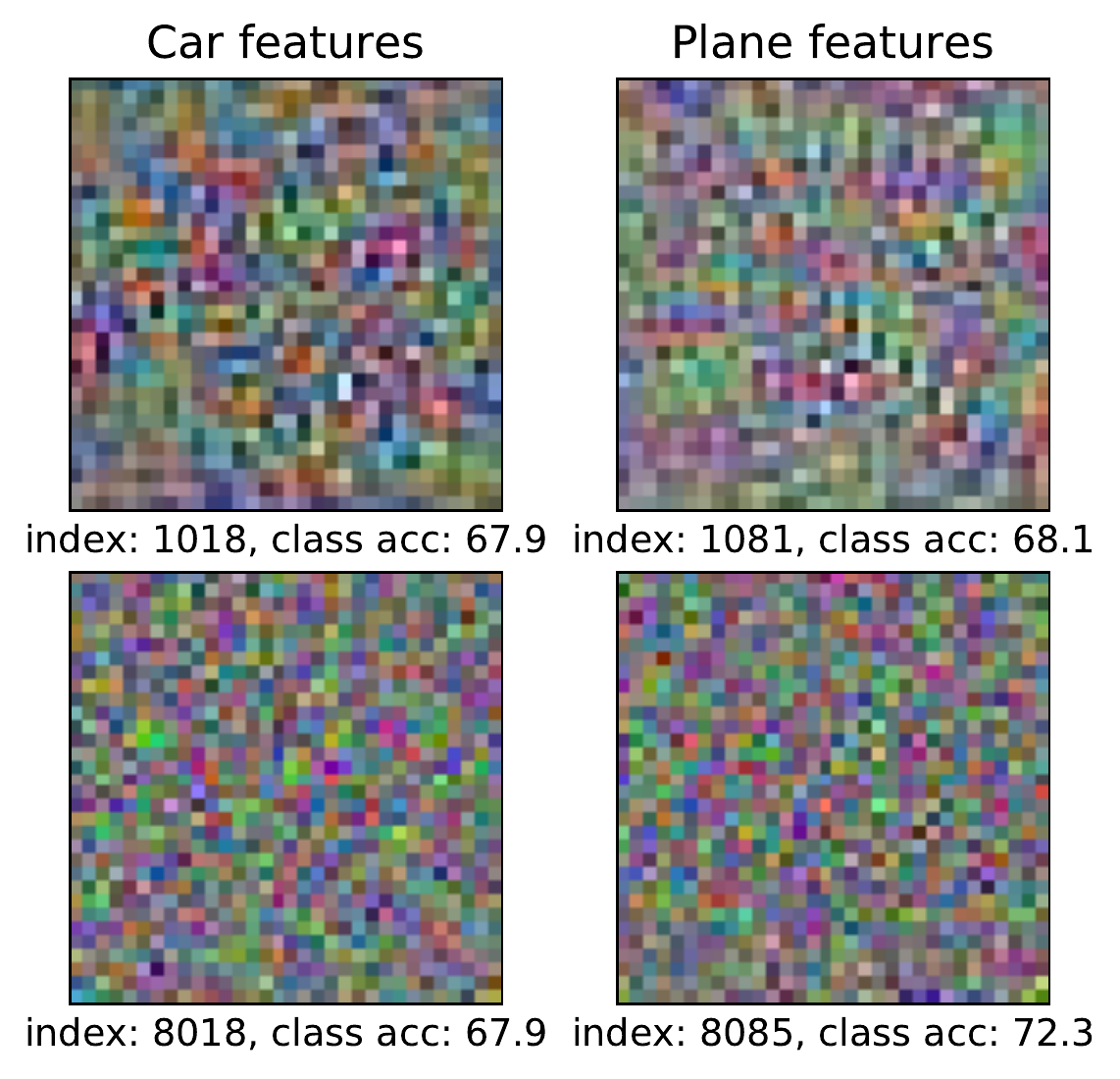}
    \caption{\label{fig:non_rob_feats}Non-robust, useful features earlier and later in the spectrum, for CIFAR10 car and plane.}
\end{wrapfigure}

With these definitions in place, we can now analyze the characteristics of features for commonly used architectures, leveraging their associated NTK. To be consistent with the previous section, we consider classification problems from MNIST (10 classes) and CIFAR-10 (car vs airplane). We compose the Gram matrices from the whole training dataset (50000 and 10000, respectively), and compute the different feature functions $f^{(i)}$ using the eigendecomposition of the matrix. We estimate the \textbf{usefulness} of a feature $f^{(i)}$ by measuring its accuracy on a hold-out validation set, and its \textbf{robustness} by perturbing each input of this set, using an FGSM attack on feature $f^{(i)}$. We consider several different Fully Connected and Convolutional Kernels, whose expressions are available through the Neural Tangents library \citep{Nov+20}, built on top of JAX \citep{Brad+18}. We summarize our findings on how these features behave:

    {\em Functions $f^{(i)}$ represent visually distinct features.} We visualise each feature $f^{(i)}$ by plotting its gradient with respect to $\mathbf{x}$. Fig.~\ref{fig:features} shows the gradient of the first 5 features for various architectures for a specific image from the CIFAR-10 dataset. We observe that features are fairly consistent across models, and they are interpretable: for example the 4th feature seems to represent the dominant color of an image, while the 5th one seems to be capturing horizontal edges.
    
    {\em Networks use both robust and non-robust features for prediction.} It has been speculated that neural networks trained in a standard (non adversarial) fashion rely on both robust and non-robust features. Our feature definition in Eq.~\eqref{eq:ntkfeat_definition} shows that this is indeed the case. The NTK of common neural networks consists of both robust features that match human expectations, such as the ones depicted in Fig.~\ref{fig:features}, but also on features that are predictive of the true label, while not being robust to adversarial perturbations of the input (Fig.~\ref{fig:non_rob_feats}). Fig.~\ref{fig:features} depicts the first 100 features of a fully connected and a convolutional tangent kernel in Usefulness-Robustness space. The upper left region of the plots shows a large amount of useful, yet non-robust features. These features seem random to human observers.
    
    {\em Robustness lies at the top.} We observe in Fig.~\ref{fig:features} that features corresponding to the top eigenvectors tend to be robust. This is consistent among different models and between the two datasets (see Appendix \ref{App:ntk_feats}). Since these eigenvectors are the ones fitted first during training \citep{Aro+19a,JHG18}, it is no wonder that the loss gradient evolves from coherence to noise, as observed in Fig.~\ref{fig:grads}. This also explains the apparent trade-off between robustness and accuracy of neural networks as training progresses: useful, robust features are fitted first, followed by useful, but non-robust ones. This ties in well with both empirical findings \citep{Rah+19} and theoretical case studies \citep{Bas+19,BiMa19,Bas+20} that demonstrate that low frequency \textit{functions} are fitted first during training and provide favorable generalization properties and we would associate robust features with these low-frequency  parts (in function space).
    
    {\em Robust features alone are not enough. } In light of these findings, it might be reasonable to conjecture that
we could obtain robust models by retaining the robust features of the prediction, while discarding the non-robust ones. The spectral approach gives a principled way to disentangle features and create kernel machines keeping only the robust ones. Our results show that in general it is not possible to obtain non-trivial performance without compromising robustness in this fashion, strengthening the case for the necessity of data augmentation in the form of adversarial training (see Appendix \ref{ssec:notenough}).

\section{Kernel Dynamics during Adversarial Training}\label{sec_dynamics}

%begin{wrapfigure}{r}{0.2\textwidth}
%\centering
%    \includegraphics[width=0.2\textwidth]{}
%    \includegraphics[width=0.2\textwidth]{}
%    \caption{Kernel trajectories in polar space. Darker color indicates earlier epochs. We observe that during adversarial training all the rotation and particularly the expansion happens much earlier and more pronounced, compared to standard training.}
%    \label{fig:NTK_rotation}
%\end{wrapfigure}

Given the apparent necessity for adversarial training to produce robust models, how does it achieve this goal? To shed some light on this fundamental question, we depart from the ``lazy'' NTK regime and study the evolution of the NTK of adversarially trained models.
% Departing from the NTK regime, we aim to qualitatively understand what is the role of the top principal space for robustness and how it affects the dynamics during adversarial training. 
For a neural network trained with gradient descent, as the learning rate $\eta \rightarrow 0$, the continuous time dynamics can be written as
\begin{equation}
    \frac{\partial w}{\partial t} = - \eta \nabla_w \mathcal{L} = -\eta \nabla_w f^\top \frac{\partial \mathcal{L}}{\partial f} \,\,\,\,\,\textnormal{and}\,\,\,\,\, \frac{\partial f}{\partial t} = - \eta \underbrace{\nabla_w f \nabla_w f^\top}_{\Theta_t} \frac{\partial \mathcal{L}}{\partial f}.
\end{equation}
In the NTK regime, this kernel $\Theta_t$ remains fixed at its initial value. However, outside this regime, it has been demonstrated, both empirically \citep{Gei+19,Fort+20,Bar+21,Jim+21} and theoretically \citep{ABP21}, that $\Theta_t$ is not constant during training, and is changing as the weights move. In adversarial training, moreover, there is the additional effect that at each weight update, the data changes as well. For that reason, understanding the dynamics of adversarial training requires tracking the evolution of a kernel 
%\begin{equation}
    $\Theta_t (\mathcal{X}_t, \mathcal{X}_t)$,
%\end{equation}
where $\mathcal{X}_t$ denotes the current (mini) batch of training data. Notice that the tangent vector $\nabla_w f(\mathcal{X}_t)$ is still describing the instantaneous change of $f$ on the current batch of data, thus $\Theta_t (\mathcal{X}_t, \mathcal{X}_t)$ is informative of the local geometry of the function space, justifying its value as a quantity to be measured during adversarial training.

We train a deep convolutional architecture on CIFAR-10 (multiclass) with standard (SGD) and adversarial training using PGD with an $\ell_\infty$ constraint. Full implementations details and accuracy curves can be found in Appendix \ref{App:empirical_ntk}, together with the reproduction of the same experiment on MNIST, where the observations are similar. We track the following quantities during training:

\begin{figure}
\begin{subfigure}[b]{0.47\textwidth}
    \centering
    \includegraphics[width=0.99\textwidth]{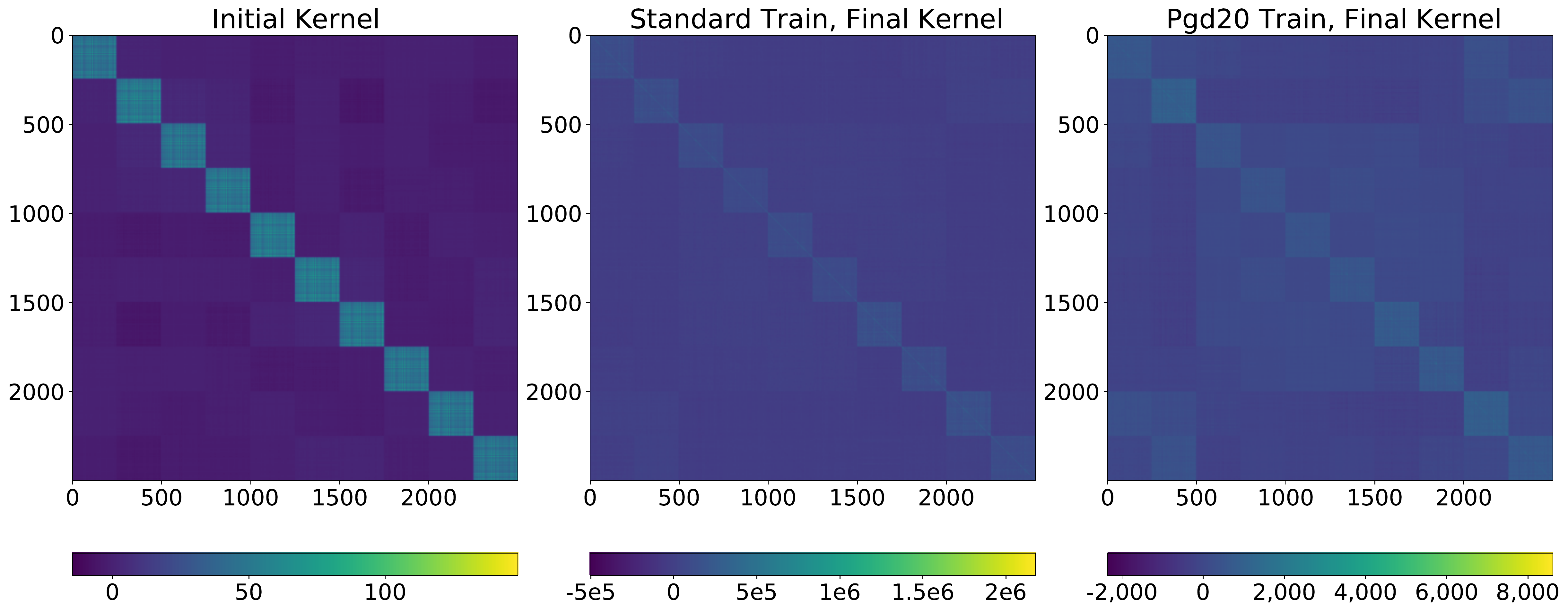}

    \label{fig:kernel_screenshot}
    \end{subfigure}
    %\hfill        % horizontal whitespace
    \hspace{5mm}
\begin{subfigure}[b]{0.47\textwidth}
    \centering
    \includegraphics[width=0.99\textwidth]{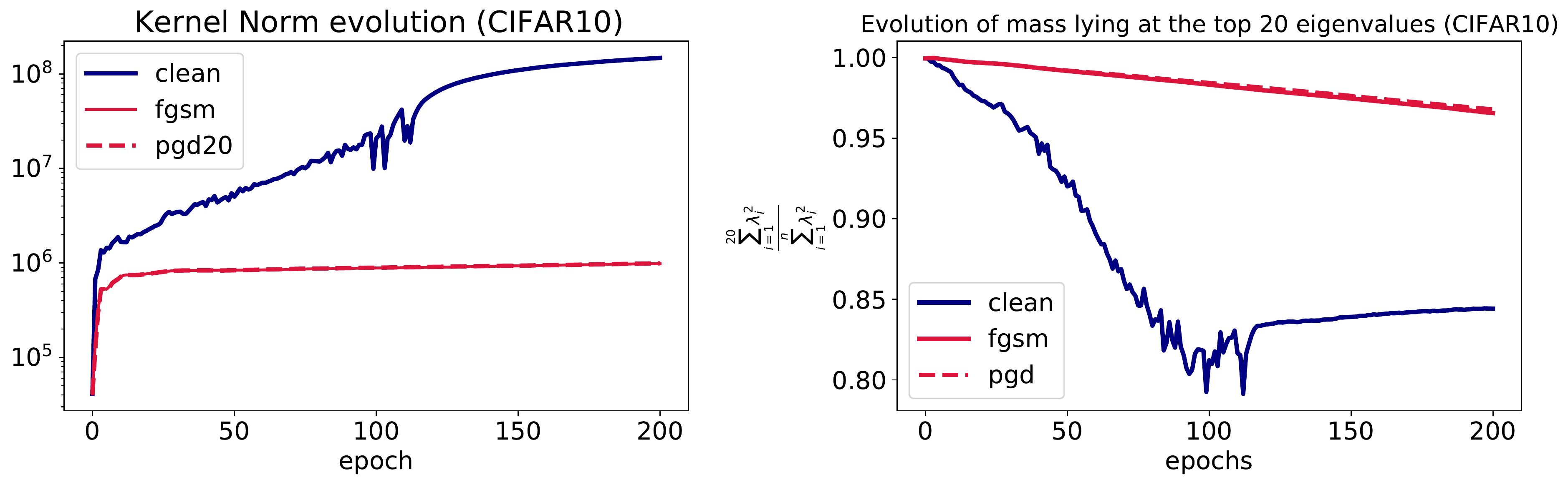}
    \label{fig:norm_and_mass}
\end{subfigure}
\caption{\textbf{Left}: Kernel Matrices for a mini batch of size 256. Left to Right: Kernel at initialization, Kernel after standard training, Kernel after adversarial training (20 PGD steps). The standard kernel grows significantly more than the adversarial one.  \textbf{Right}: (a) Kernel Frobenius norm evolution, and (b) concentration on the top 20 eigenvalues during standard and adversarial training. Setting: CIFAR10, $\ell_\infty = 8 /255$.}
\label{fig:kernels}
\end{figure}

% We focus on a CNN architecture that can: 1) exhibit a large gap between standard and robust accuracy so we can disentangle effects, and 2) is simple enough that allows kernel computation during the resource demanding procedure of adversarial training.
% Prior experimental work shows that when performing standard training outside the NTK regime, the kernel first rapidly rotates to increase alignment with the target function and then continues in a more canonical way by just expanding its magnitude. Figure  plots a scale invariant distance between pairs of empirical kernels during training

\textbf{Kernel distance.} We compare two kernels using a \textit{scale invariant distance}, which quantifies the relative rotation between them, as used in other works studying NTK dynamics (e.g. \cite{Fort+20}):
\begin{equation}\label{eq:rot}
d(\Theta_i, \Theta_j) = 1 - \frac{\mathrm{Tr}(\Theta_i \Theta_j^\top)}{\sqrt{\mathrm{Tr}(\Theta_i \Theta_i^\top)} \sqrt{\mathrm{Tr}(\Theta_j \Theta_j^\top)}}.
\end{equation}
\textbf{Polar dynamics}. Zooming in on the change that the initial kernel undergoes, we define a \textit{polar space} on which we measure the movement of the kernel:
\begin{equation}\label{eq:polar}
   \begin{split}
        r_t = \frac{\| \Theta_t - \Theta_0 \|_F}{\| \Theta_f - \Theta_0 \|_F}, \; \; \;
        \theta_t = \arccos\left(1 - d(\Theta_t, \Theta_0)\right),
    \end{split}
\end{equation}
where $\Theta_0, \Theta_f$ are the initial and final kernel, respectively.
Fig.~\ref{fig:polarrotation} presents a heatmap of kernel distances at different time steps for both standard and adversarial training, as well as both training trajectories in polar space.

\begin{figure}
\begin{subfigure}[t]{0.7\textwidth}
           \centering
        \includegraphics[width=\linewidth]{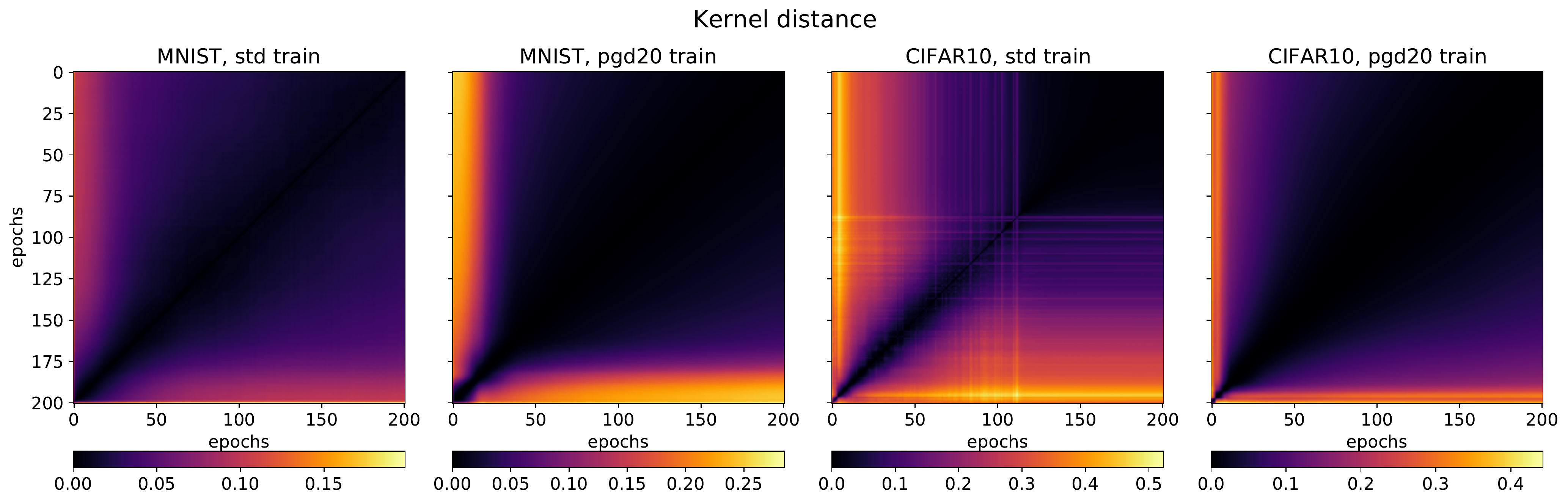}
        %\caption{}
        \label{fig:kernelrotations}
        \end{subfigure}
        \hfill        % horizontal whitespace
        %\hspace{5mm}
\begin{subfigure}[t]{0.29\textwidth}
\includegraphics[scale=0.18]{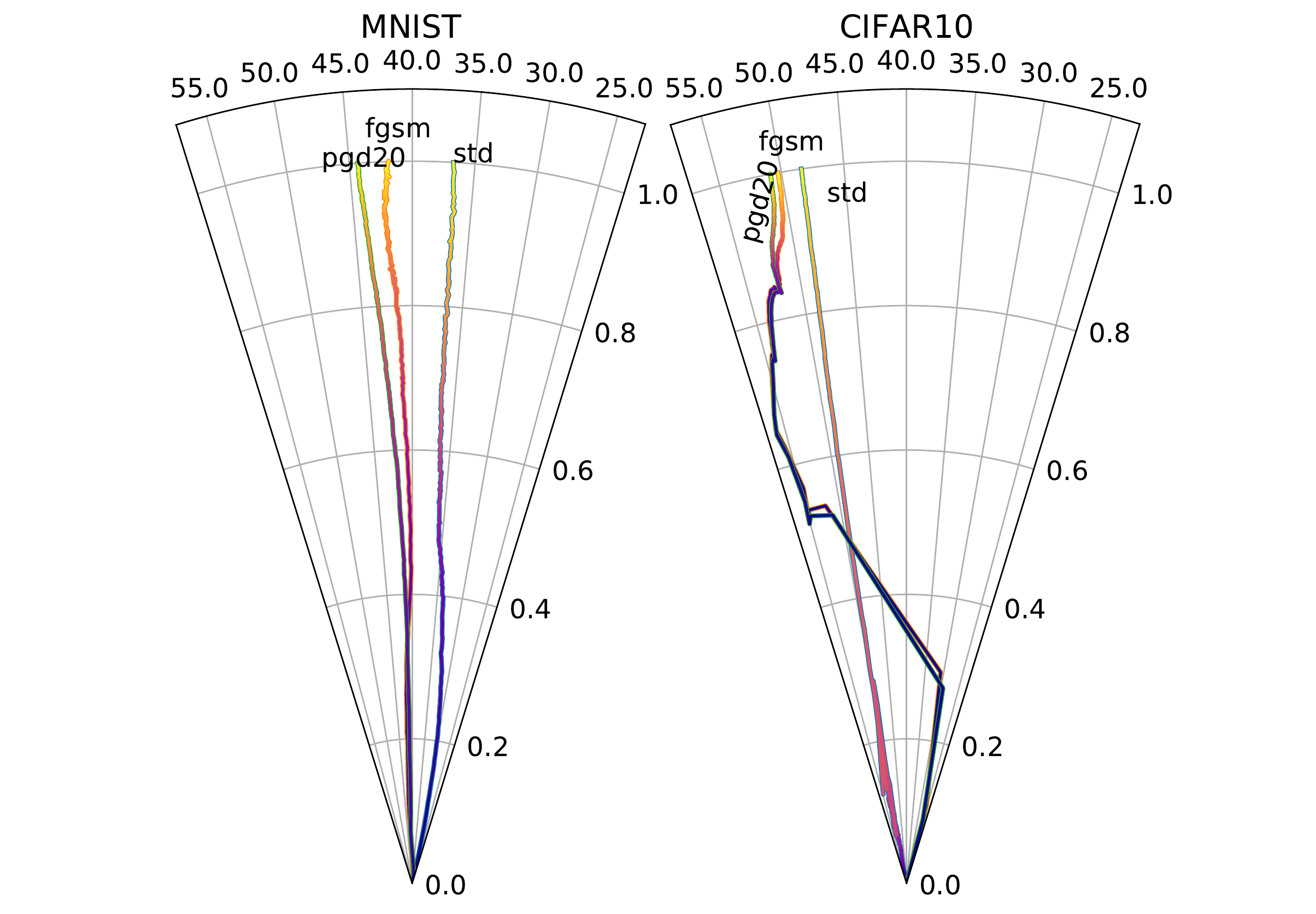}
\vfill
%        \caption{}
\end{subfigure}
\caption{\textbf{Left:} Rotation (Eq.~\eqref{eq:rot}) of the empirical NTK during standard, and adversarial training. Left to right: MNIST, standard, MNIST adversarial, CIFAR standard, CIFAR adversarial. \textbf{Right:} Kernel trajectories in polar space (Eq.~\eqref{eq:polar}) for MNIST (left) and CIFAR10 (right). Darker colors indicate earlier epochs.}\label{fig:polarrotation}
\end{figure}

% \begin{wrapfigure}{r}{0.3\textwidth}
%     \centering
%     \includegraphics[width=0.3\textwidth]{NTK_dynamics/top_mass_cifar10_20tight (1).pdf}
%     \caption{\label{fig:NTK_energy}Mass on the top 20 eigenvalues during training.}
%     \vspace{-5mm}
% \end{wrapfigure}
\textbf{Concentration on subspaces.} 
To quantify weight concentration on the top region of the spectrum, we track the (normalized) Frobenius norm of subspaces as $\sum_{i = 1}^p \lambda_i^2  /\sum_{i = 1}^n \lambda_i^2$, for various cut-offs $p$, where we have indexed the eigenvalues from largest to smallest. Fig.~\ref{fig:kernels} depicts concentration on the top 20 eigenvalues during training.

Our findings show that similar to what has been reported in prior work \citep{Fort+20},
 the kernel rotates significantly in the beginning of training and then slows down for both standard and adversarial training. However, in the latter case, this second phase begins a lot earlier.
As Fig.~\ref{fig:polarrotation} illuminates, the kernel moves a greater distance than when performing standard training, but after a few epochs it stops both rotating and expanding; note that this is not the case for standard training where the kernel increases its magnitude substantially later in training, and in fact grows to have a norm orders of magnitude larger than during adversarial training (see Fig.~\ref{fig:kernels}).
In hindsight, this behavior is perhaps not surprising, as each element of the kernel measures similarity between data points, and a robust machine should be more conservative when estimating similarity. The observation that during adversarial training the kernel becomes relatively static relatively fast might indicate that {\em linear} dynamics govern the later phase of adversarial training. It has been observed in previous works \citep{Gei+19,Fort+20,Jim+21} that linearization after a few initial epochs of rapid rotation often closely matches performance of full network training. Our results indicate that a similar phenomenon occurs even under the data shift of adversarial training (see Appendix \ref{App:linear_advtrain} for a study of linearized adversarial training), opening avenues to design robust machines more efficiently.

% adversarial data will look less similar, even inside the same class.
% Each element of the kernel measures in an inner product space the similarity between two training points, and, thus, these elements must remain small to enable robust classification (see also Figure in the Appendix); on the contrary, during standard training there is strong in-class clustering.
% \begin{wrapfigure}{r}{0.35\textwidth}
%     \centering
%     \includegraphics[width=0.35\textwidth]{NTK_dynamics/top_mass_cifar10_20tight (1).pdf}
%     \caption{\label{fig:NTK_energy}Mass on the top 20 eigenvalues during training.}
%     \vspace{-5mm}
% \end{wrapfigure}
Moreover, endowed with the knowledge that at least for kernels trained with static data robust features lie at the top, we study polar dynamics of the top space only (see Fig.~\ref{fig:top_space_dynamics}) to observe that there is substantial rotation in this space, suggesting that robust features are learned early on not only during standard, but in particular during adversarial training. Even more interestingly, Fig.~\ref{fig:kernels} demonstrates that not only the robust features change, but their relative weight as measured by the concentration on the top-20 space is increasing simultaneously relative to standard training as well, and remains large; in fact, significantly larger than during standard training.
As each eigenvalue weights the importance of the corresponding feature on the final prediction, this implies that the kernel ``learns'' to depend more on the most robust features.

Put together, these findings reveal different kernel dynamics during standard and adversarial training: the kernel rotates much faster, expands much less and becomes ``lazy'' much earlier than during standard training. Fully understanding the properties of converged adversarial kernels remains an important avenue for future work, that might allow to design faster algorithms for robust classification.

\section{Final Remarks}

We have studied adversarial robustness through the lens of the NTK across multiple architectures and data sets  both in the idealized NTK regime and the ``rich feature'' regime. When connecting the spectrum of the kernel with fundamental properties characterizing robustness our phenomenological study reveals a universal picture of the emergence of robust and non-robust features and their role during training.
There are certain limitations and unexplored themes in our work; Sec.~\ref{blackbox_attack} argues that transferable attacks from the NTK may be as effective as white-box attacks, but this warrants an in-depth study across architectures, kernels and data sets (which has not been the main focus of this work).
Sec.~\ref{ntk_feats} visualises features for fairly simple models, since the computation of kernel derivatives is a costly procedure. It would be interesting to use our framework to visualise features from more complicated architectures. Finally, our work in Sec.~\ref{sec_dynamics} invites more research on the kernel at the end of adversarial training, similar to what has been done for standard models \citep{Long21}.

We hope that our viewpoint can motivate further theoretical understanding of adversarial phenomena (such as transferability) and the design of better and/or faster adversarial learning algorithms, by further analyzing the kernels from robust deep neural networks.

\section*{Acknowledgements}

The authors would like to thank Jingtong Su, Alberto Bietti, Yunzhen Feng, and Artem Vysogorets for fruitful discussions and feedback in various stages of this work. NT thanks Dimitris Tsipras for a helpful discussion in the beginning of this project. 
The authors would like to acknowledge support through the National Science Foundation under NSF Award 1922658. This work was supported in part through the NYU IT High Performance Computing resources, services, and staff expertise.

\bibliography{refs-no-url.bib}
\appendix

% \section{Experimental Setup}

%  \comment{Describe the main choices made for the experiments, simple models, sgd small learning rate, mnist and cifar10. Pytorch for the first ones, Jax with NT library for the rest.}
 
%  We use fully connected and (fully) convolutional networks of varying depth and compute their (training) NTK $\Theta (\mathcal{X}, \mathcal{X})$. We consider, as previously, a binary task from CIFAR-10 and the multi-classification problem of digit recognition from MNIST. Recall that each feature can get a score that characterises its usefulness (equation \eqref{eq:useful}) and one that captures its robustness (equation \eqref{eq:robust}). We approximate both of these quantities using a hold-out validation dataset, and for the robustness we use FGSM to estimate the infimum.

%%%%%%%%%%%%%%%%%%%%%%%%%%%%%%%%%%%%%%%%%%%%%%%%%%%%%%%%%%%%
\section*{Checklist}

%%% BEGIN INSTRUCTIONS %%%
The checklist follows the references.  Please
read the checklist guidelines carefully for information on how to answer these
questions.  For each question, change the default \answerTODO{} to \answerYes{},
\answerNo{}, or \answerNA{}.  You are strongly encouraged to include a {\bf
justification to your answer}, either by referencing the appropriate section of
your paper or providing a brief inline description.  For example:
\begin{itemize}
  \item Did you include the license to the code and datasets? \answerYes{See Sec.~\ref{ntk_feats} and Appendix.}
  \item Did you include the license to the code and datasets? \answerNo{The code and the data are proprietary.}
  \item Did you include the license to the code and datasets? \answerNA{}
\end{itemize}
Please do not modify the questions and only use the provided macros for your
answers.  Note that the Checklist section does not count towards the page
limit.  In your paper, please delete this instructions block and only keep the
Checklist section heading above along with the questions/answers below.
%%% END INSTRUCTIONS %%%

\begin{enumerate}

\item For all authors...
\begin{enumerate}
  \item Do the main claims made in the abstract and introduction accurately reflect the paper's contributions and scope?
    \answerYes{}{}
  \item Did you describe the limitations of your work?
    \answerYes{}{}{}
  \item Did you discuss any potential negative societal impacts of your work?
    \answerYes{Our work sheds light properties of adversarial examples to make mahcine learning models more reliable in the long run.}
  \item Have you read the ethics review guidelines and ensured that your paper conforms to them?
    \answerYes{}
\end{enumerate}

\item If you are including theoretical results...
\begin{enumerate}
  \item Did you state the full set of assumptions of all theoretical results?
    \answerYes{}
        \item Did you include complete proofs of all theoretical results?
    \answerYes{}
\end{enumerate}

\item If you ran experiments...
\begin{enumerate}
  \item Did you include the code, data, and instructions needed to reproduce the main experimental results (either in the supplemental material or as a URL)?
    \answerYes{}
  \item Did you specify all the training details (e.g., data splits, hyperparameters, how they were chosen)?
    \answerYes{}
        \item Did you report error bars (e.g., with respect to the random seed after running experiments multiple times)?
    \answerYes{}
        \item Did you include the total amount of compute and the type of resources used (e.g., type of GPUs, internal cluster, or cloud provider)?
    \answerYes{}
\end{enumerate}

\item If you are using existing assets (e.g., code, data, models) or curating/releasing new assets...
\begin{enumerate}
  \item If your work uses existing assets, did you cite the creators?
    \answerYes{}
  \item Did you mention the license of the assets?
    \answerYes{}
  \item Did you include any new assets either in the supplemental material or as a URL?
    \answerYes{}
  \item Did you discuss whether and how consent was obtained from people whose data you're using/curating?
    \answerNA{}
  \item Did you discuss whether the data you are using/curating contains personally identifiable information or offensive content?
    \answerNA{}
\end{enumerate}

\item If you used crowdsourcing or conducted research with human subjects...
\begin{enumerate}
  \item Did you include the full text of instructions given to participants and screenshots, if applicable?
    \answerNA{}
  \item Did you describe any potential participant risks, with links to Institutional Review Board (IRB) approvals, if applicable?
    \answerNA{}
  \item Did you include the estimated hourly wage paid to participants and the total amount spent on participant compensation?
    \answerNA{}
\end{enumerate}

\end{enumerate}

\appendix

% \section{Experimental Setup}

%  \comment{Describe the main choices made for the experiments, simple models, sgd small learning rate, mnist and CIFAR-10. Pytorch for the first ones, Jax with NT library for the rest.}
 
%  We use fully connected and (fully) convolutional networks of varying depth and compute their (training) NTK $\Theta (\mathcal{X}, \mathcal{X})$. We consider, as previously, a binary task from CIFAR-10 and the multi-classification problem of digit recognition from MNIST. Recall that each feature can get a score that characterises its usefulness (equation \eqref{eq:useful}) and one that captures its robustness (equation \eqref{eq:robust}). We approximate both of these quantities using a hold-out validation dataset, and for the robustness we use FGSM to estimate the infimum.

\section{Robust and Non-Robust features}\label{App:feats}

The idea that data features are to be blamed for the adversarial weakness of machine learning models was proposed in \citep{Ily+19,Tsi+19}.
In particular, \cite{Ily+19} show that training with adversarially perturbed images labeled with the ``wrong'' label yields classifiers with non-trivial test performance (``learning from non-robust features only''), while, in a dual experiment, they demonstrate that standard training with ``robustified'' data (data that presumably are ``denoised'' from non-robust features) produces a classifier with non-trivial \textit{robust} accuracy (``relies only on robust features''). Motivated by these observations, the authors propose a model of robust/non-robust features that are hidden in the data, and whose presence determines the eventual robustness of models.
To accompany the definitions of Sec. 2.2, we extend them for multiclass classification, since Sec. 4 introduces our NTK feature framework for both binary and multiclass problems.
% Formally, features are real functions of the data $\phi: \mathcal{X} \to \mathbb{R}$ that describe how strongly a feature appears in the image (e.g. $\phi$ could represent the existence of an ``ear'' in the image, and $\phi(x_{\mathrm{Human}}) > \phi(x_{\mathrm{Car}})$). The vast majority of works imagines features as being induced by the {\em activations} of neurons in the net, most commonly those of the penultimate layer ({\em representation-layer} features), but this formal definition is in no way restricted to activations, and we will show how to exploit it using the eigenspectrum of the NTK. The key concepts in the robust feature framework are {\em useful} features - meaning, informally, that they are positively correlated with the label (in expectation over the data) and useful {\em robust} features - if they remain correlated with the label for any point inside a bounded neighborhood of the data. {\em Useful, non-robust} features, then, help with standard classification (since they are useful) but hinder accuracy in the adversarial setting (since they are non-robust and their label can be flipped). 
% Here, we extend naturally our definitions of a feature to the multi-class case.

Let $\mathcal{D}$ be the data generating distribution, with $x \in \mathcal{X}$ (input space) and $y \in \{1, \ldots, k\}$ (action space). We define features $\phi: \mathcal{X} \to \mathbb{R}^k$ as functions from the input to the action space, and categorize them as follows, according to their performance as classifiers. Fix $\rho, \gamma \geq 0$:

\begin{enumerate}
\item \textbf{$\rho$-Useful} feature:  A feature $\phi$ is called \textit{$\rho$-useful} if
\begin{equation}\label{eq:usefulapp}
\mathbb{E}_{x, y \sim \mathcal{D}}\big[  \mathds{1}_{\{\arg \max_{i \in [k]} \phi_i(x) = y\} } \big] = \rho
\end{equation}

\item \textbf{$\gamma$-Robust} feature: A feature $\phi$ is called \textit{$\gamma$-robust} if it is predictive of the true label under any perturbation inside a bounded ``ball'' $\mathcal{B}$, that is if
\begin{equation}\label{eq:robustapp}
\mathbb{E}_{x, y \sim \mathcal{D}}\big[ \inf_{\delta \in \mathcal{B}} \mathds{1}_{ \{\arg \max_{i \in [k]} \phi_i(x+\delta) = y \} } \big] = \gamma 
\end{equation}

\item \textbf{Useful, non-robust} feature:  A feature is called \textbf{useful, non-robust} if it confers an advantage above guessing the most likely label, i.e.  $\exists \rho > \max_{i \in [k]} \mathbb{E}_{x, y \sim \mathcal{D}}[\mathds{1}_{\{i=y\}}] $, but is $\gamma$-robust only for $\gamma \approx 0$ (within some precision).

\end{enumerate}

The above framework was introduced by \citep{Ily+19,Tsi+19}, and we have slightly adapted it in terms of accuracy as classifiers derived from features. \cite{Goh19} showed how such feature functions arise in a simple linear model, and proposed two mechanisms to construct useful, non-robust features. In \citep{ZhLi20}, the authors view the weights of neural networks as features, and show that adversarial training ``purifies/robustifies'' them.

\section{Derivation of Adversarial Perturbations for Kernel Regression}
\label{App:fgsm_ntk}

In this section, we derive expressions for adversarial attacks on Neural Tangent Kernels presented in the main paper, as well as additional derivations obtained from first-order expansions around the input. 

\subsection{Adversarial Perturbations from Cross-Entropy Loss}

We first derive the expression in Eq. (8) of the paper. Let $\mathbf{x} \in \mathbb{R}^d$ be an input to the NTK prediction 
\begin{equation}\label{eq:ntk_App}
    f_t(\mathbf{x}) = \Theta(\mathbf{x}, \mathcal{X})^\top \Theta^{-1}(\mathcal{X}, \mathcal{X}) (I - e^{-\lambda \Theta(\mathcal{X}, \mathcal{X}) t}) \mathcal{Y},
\end{equation}
where $(\mathcal{X}, \mathcal{Y})$ is a dataset of size $n$.
We consider the binary and the multiclass case separately.

In the \textbf{binary} case, where $y \in \{\pm 1\}$, we feed expression Eq.~\eqref{eq:ntk_App} to a sigmoid $\sigma (x) = (1 + e^{-x})^{-1}$ and maximize the cross entropy loss between the output and the true label:
\begin{equation}
    \mathcal{L} (\mathbf{x}, y) = - \hat{y} \log \left(\sigma(f_t(\mathbf{x}))\right) - (1-\hat{y}) \log \left(1 - \sigma(f_t(\mathbf{x})) \right),
\end{equation}
where we set $\hat{y}=\frac{y+1}{2}$ to lie in $\{0, 1\}$. We compute the gradient of the loss with respect to $\mathbf{x}$:
\begin{equation}\label{eq:ntk-fgsm}
    \begin{split}
        \nabla_\mathbf{x} \mathcal{L} (\sigma(f_t(\mathbf{x})), y) & = -  \frac{\hat{y}}{\sigma(f_t(\mathbf{x}))} \nabla_\mathbf{x}\sigma(f_t(\mathbf{x})) +  \frac{(1-\hat{y})}{1 - \sigma(f_t(\mathbf{x}))} \nabla_\mathbf{x} \sigma(f_t(\mathbf{x})) \\
        & = \frac{\sigma(f_t(\mathbf{x})) - \hat{y}}{\sigma(f_t(\mathbf{x}))(1 - \sigma(f_t(\mathbf{x})))} \nabla_\mathbf{x} \sigma(f_t(\mathbf{x})) 
         = \left(\sigma(f_t(\mathbf{x})) - \hat{y} \right)  \nabla_{\mathbf{x}} f_t(\mathbf{x}).
    \end{split}
\end{equation}
% \tilde{\mathbf{x}} & = \mathbf{x} - y \epsilon \cdot \mathrm{sign} \left( \nabla_{\mathbf{x}} f_t(\mathbf{x}) \right),
So the optimal one-step attack, under an $\ell_\infty$ adversary, reduces to computing perturbation
\begin{equation}
    \bm{\eta} = \epsilon \cdot \mathrm{sign}\left(\left(\sigma(f_t(\mathbf{x})) - \hat{y} \right)  \nabla_{\mathbf{x}} f_t(\mathbf{x})\right) = - \epsilon y \cdot \mathrm{sign}\left( \nabla_{\mathbf{x}} f_t(\mathbf{x}) \right),
\end{equation}
since $\sigma(u) \in (0, 1)$ for all $u \in \mathbb{R}$.

% Copy stuff over from the ICLR submission for completeness, see footnote in Section on adversarial perturbations for kernels.

In the case of a \textbf{k-class} classification problem with one hot labels $\mathcal{Y} \in \mathbb{R}^{n \times k}$, we can express the cross entropy loss between the NTK predictions Eq.~\eqref{eq:ntk_App} and the labels as:
\begin{equation}
    \begin{split}
        \mathcal{L} (\mathbf{x}, y) = - \log \frac{e^{f_{t, y}(\mathbf{x})}}{\sum_{r = 1}^k e^{f_{t, r}(\mathbf{x})}} = - f_{t, y}(\mathbf{x}) + \log \sum_{r = 1}^k e^{f_{t, r}(\mathbf{x})},
    \end{split}
\end{equation}
where $f_{t, r}$ denotes the $r$-th output of Eq.~\eqref{eq:ntk_App}. Computing the loss gradient as before yields the optimal perturbation $\bm{\eta}$,

\begin{equation}\label{eq:ntk-fgsm-multi}
     \bm{\eta} = \epsilon \cdot \mathrm{sign} \left( - \nabla_\mathbf{x} f_{t, y}(\mathbf{x}) + \frac{\sum_{r = 1}^k e^{f_{t, r}(\mathbf{x})} \nabla_\mathbf{x} f_{r, y}(\mathbf{x})}{\sum_{r = 1}^k e^{f_{r, t}(x)}} \right).
\end{equation}

The above calculations allow us to speed up the computation of the attacks in the case of NTKs with closed form expression, since the gradient
\begin{equation}
    \nabla_\mathbf{x} f_{t, r}(\mathbf{x}) = D \Theta(\mathbf{x}, \mathcal{X})^\top \Theta^{-1}(\mathcal{X}, \mathcal{X}) (I - e^{-\lambda \Theta(\mathcal{X}, \mathcal{X}) t}) \mathcal{Y}_{:, r}, 
\end{equation}
with D being  the Jacobian of $\Theta$ wrt to $\mathbf{x}$, can be pre-computed, without the need for auto-differentiation tools. We leverage this in the experiments of Sec. 3.
%\ref{blackbox_attack}.

\subsection{Alternative Approaches to Generate Perturbations}

One can derive other perturbation variants by changing the loss function from cross-entropy to other functions studied in the literature in this context (e.g. \citep{CaWa17}). Alternatively, we can study the output $f_t(x)$ on a test input $x$ directly to devise strategies to most efficiently perturb it, using a Taylor expansion around the input, leading to a linear expression (shown here for scalar kernels):
\begin{equation}\label{eq:prediction_onadv_simplified}
    f(\mathbf{x} + \bm{\eta}) \approx f(\mathbf{x}) + \bm{\eta}^T \mathbf{z},
\end{equation}
for some $z \in \mathbb{R}^d$ that depends on the training data and the NTK kernel only.

\paragraph{Binary case:} Suppose we would like to evaluate a model described by Eq.~(7) at the end of training,
\begin{equation}\label{eq:kernel_pred}
    f_{\infty} (\mathbf{x}) = \Theta(\mathbf{x}, \mathcal{X})^\top \Theta(\mathcal{X}, \mathcal{X})^{-1} \mathcal{Y}
\end{equation}

on \textit{slightly} perturbed variations of the original {\em training} data. Then, slightly abusing  notation, we set, $\Tilde{\mathcal{X}} = \mathcal{X} + \bm{\epsilon}$, that is $\Tilde{\mathbf{x}}_i = \mathbf{x}_i + \bm{\eta}_i$ for all $\mathbf{x}_i \in \mathcal{X}$ for small, but unknown, perturbations $\bm{\eta}_i$. By taking a first-order Taylor expansion in the perturbation, we can write the $ij$-th element of $\Theta(\Tilde{\mathcal{X}}, \mathcal{X})$ as follows:
\begin{equation}\label{eq:taylor_exp}
    \begin{split}
        \Theta(\Tilde{\mathbf{x}}_i, \mathbf{x}_j) = \Theta(\mathbf{x}_i + \bm{\eta}_i, \mathbf{x}_j)
        \approx \Theta(\mathbf{x}_i, \mathbf{x}_j) + \nabla_{\mathbf{x}_i} \Theta(\mathbf{x}_i, \mathbf{x}_j)^T \bm{\eta}_i.
    \end{split}
\end{equation}
For each row $\underbrace{\Theta_{i,:}(\Tilde{\mathcal{X}}, \mathcal{X})}_{\in \mathbb{R}^{1 \times n}}$ we obtain:
\begin{equation}\label{eq:theta_row}
    \Theta_{i,:}(\Tilde{\mathcal{X}}, \mathcal{X})^T = \Theta_{i,:}(\mathcal{X}, \mathcal{X})^T + \underbrace{\begin{pmatrix}
        \nabla_{\mathbf{x}_i} \Theta(\mathbf{x}_i, \mathbf{x}_1)^T \\
        \nabla_{\mathbf{x}_i} \Theta(\mathbf{x}_i, \mathbf{x}_2)^T \\
        \vdots \\
        \nabla_{\mathbf{x}_i} \Theta(\mathbf{x}_i, \mathbf{x}_n)^T
    \end{pmatrix}}_{\mathbf{A}_i \in \mathbb{R}^{n \times d}}
    \bm{\eta}_i.
\end{equation}
Hence, $\Theta(\Tilde{\mathcal{X}}, \mathcal{X})$ can be written as $\Theta(\mathcal{X}, \mathcal{X}) + \bm{\Delta}$ for a perturbation matrix $\bm{\Delta}$, with $i$-th row $\bm{\Delta}_{i,:} = \bm{\eta}_i^T \mathbf{A}_i^T$. Substituting into Eq.~(\ref{eq:kernel_pred}), we get:
\begin{equation}\label{eq:total_pred}
    \begin{split}
        f(\Tilde{\mathcal{X}}) = (\Theta(\mathcal{X}, \mathcal{X}) + \bm{\Delta}) \Theta (\mathcal{X}, \mathcal{X})^{-1} \mathcal{Y} 
         = \mathcal{Y} + \bm{\Delta} \Theta(\mathcal{X}, \mathcal{X})^{-1} \mathcal{Y}.
    \end{split}
\end{equation}
Thus, the output of the model on $\Tilde{\mathbf{x}}_i$ is:
\begin{equation}\label{eq:sample_pred}
    \begin{split}
        f(\Tilde{\mathbf{x}}_i) & = y_i + \bm{\Delta}_i \Theta(\mathcal{X}, \mathcal{X})^{-1} \mathcal{Y} \\
        & = y_i + \bm{\eta}_i^T \mathbf{A}_i^T  \Theta (\mathcal{X}, \mathcal{X})^{-1} \mathcal{Y} =: y_i + \bm{\eta}_i^T \mathbf{z}_i,
    \end{split}
\end{equation}
leading to the linear expression advertised in Eq.~(\ref{eq:prediction_onadv_simplified}).
The adversarial perturbation $\bm{\eta}_i$ changes the output by $\bm{\eta}_i^T \mathbf{z_i}=\bm{\eta}_i^T \mathbf{A}_i^T \Theta(\mathcal{X}, \mathcal{X})^{-1} \mathcal{Y}$, an expression which allows us to {\em compute} the adversarial perturbation to maximally change the output within the desired constraints on $\bm{\eta_i}$. 

Since Eq.~(\ref{eq:kernel_pred}) describes regression models with LSE ($L_2$-loss), while adversarial examples typically are studied for classification models, we use thresholding (i.e. taking the sign of the output in the case of binary  $\{-1,1\}$ classification tasks) or by outputting the maximum prediction (in the case of multiclass problems) to turn Eq.~(\ref{eq:kernel_pred}) into a classifier.

Inspecting Eq.~(\ref{eq:sample_pred}), maximal ``confusion" of the classification model is achieved by aligning $\bm{\eta}_i$ with $-y_i\mathbf{z_i}$ (directed towards the decision boundary).
In case of the commonly used $\ell_\infty$ restriction, i.e. $\| \bm{\eta}_i \|_\infty \leq \epsilon$, the optimal adversarial perturbation is given by:
% We can choose $\bm{\eta}_i$ as we please. We would like our model to output $-1$ when the corresponding label is $1$ and vice versa. So, if $\bm{\eta}_i^T \mathbf{A}_i^T \left( \mathbf{H}(\mathcal{X}, \mathcal{X}) \right)^{-1} \mathcal{Y}$ is positive, then $-y_i \bm{\eta}_i^T \mathbf{A}_i^T \left( \mathbf{H}(\mathcal{X}, \mathcal{X}) \right)^{-1} \mathcal{Y}$ assures that we move the prediction towards the wrong label. Lastly, we can find the $\bm{\eta}_i$ vector that maximizes this effect. Under an 
%
\begin{equation}\label{eq:opt_perturb}
    \bm{\eta}_i = -\epsilon y_i \cdot \mathrm{sign}(\mathbf{A}_i^T  \Theta (\mathcal{X}, \mathcal{X})^{-1} \mathcal{Y}).
\end{equation}
%
% The above discussion give us a tool to attack wide models that have been trained with gradient descent, without assuming \textit{any} knowledge about its weights. In practise, we get a randomly initialized copy of said model, we calculate equation \eqref{eq:NTK_perturbed} for each image in the dataset and we have small perturbations that maximally alter model's predictions.
 The computation of this optimal adversarial perturbation requires an expression for the NTK and its gradient with respect to the training data. For models where an {\em analytical} expression of the NTK is available, only access to the labeled training data is necessary (as presented, for instance, in Sec.~\ref{two-layer}). In more complicated models or those that deviate from the assumptions for Eq.~(\ref{eq:kernel_pred}) one can compute an {\em empirical} kernel by sampling over kernels at initialization over a few instances and obtain the matrices $\mathbf{A_i}$ with autodifferentiation tools.

% \comment{Mention two choices: one that queries model and one that estimates the prediction of the model from the NTK itself.}

Eq.~(\ref{eq:sample_pred}) has been derived for perturbations of the {\em training} data. 
Consider now the case when we evaluate Eq.~(\ref{eq:kernel_pred}) on perturbations of unseen {\em test} data, that is on $\Tilde{\mathcal{X}} + \bm{\epsilon}$. Then, Eq.~(\ref{eq:total_pred}) becomes:
\begin{equation}\label{eq:total_pred_test}
    \begin{split}
        f(\Tilde{\mathcal{X}} + \bm{\epsilon})  = (\Theta(\Tilde{\mathcal{X}}, \mathcal{X}) + \bm{\Delta})  \Theta(\mathcal{X}, \mathcal{X})^{-1} \mathcal{Y} 
        = f(\Tilde{\mathcal{X}}) + \bm{\Delta}  \Theta(\mathcal{X}, \mathcal{X})^{-1} \mathcal{Y}.
    \end{split}
\end{equation}
Again, solely the second term depends on the perturbation, so we proceed by choosing a maximally perturbing direction as before. The only difference lies in the matrix $\bm{\Delta}$ that now depends on the test set $\Tilde{\mathcal{X}}$
\begin{equation}
    \bm{\Delta}_{i, :} = \bm{\eta}_i^T \begin{pmatrix}
        \nabla_{\Tilde{\mathbf{x}}_i} \Theta(\Tilde{\mathbf{x}}_i, \mathbf{x}_1)^T \\
        \nabla_{\Tilde{\mathbf{x}}_i} \Theta(\Tilde{\mathbf{x}}_i, \mathbf{x}_2)^T \\
        \vdots \\
        \nabla_{\Tilde{\mathbf{x}}_i} \Theta(\Tilde{\mathbf{x}}_i, \mathbf{x}_n)^T
    \end{pmatrix}^T
\end{equation}

In practice, an adversary can calculate the NTK $\Theta(\mathcal{X}, \mathcal{X})$ offline and calculate the optimal perturbation on a new test input $\Tilde{\mathbf{x}}_i$ by computing the corresponding row of the matrix $\bm{\Delta}$. Importantly, no information on the test data {\em labels} is needed. 

\paragraph{Multiclass case:}

We adapt the derivations of the binary case to the setting where the output dimension is larger than one in the underlying regression setting (see below), resulting in a multiclass classifier. This leads to the multi-dimensional analogue of the linear Eq.~(\ref{eq:prediction_onadv_simplified}) for $f(x) \in \mathbb{R}^{k}$, $y\in \mathbb{R}^{k}$:
\begin{equation}\label{eq:multi-linear}
    f(\mathbf{x}_i + \bm{\eta}_i) = \mathbf{y}_i + \begin{pmatrix}
     \bm{\eta}_i^T \mathbf{z}_1 \\
     \bm{\eta}_i^T \mathbf{z}_2 \\
    \vdots \\
     \bm{\eta}_i^T \mathbf{z}_k
    \end{pmatrix}.
\end{equation}
Again, the $z \in \mathbb{R}^d$ can be computed from the NTK and its derivative as well as the training data labels. Exactly analogous considerations as in the binary case allow to adapt this expression to perturbations of the {\em test} data.

At this point we have a choice of how to adversarially perturb the classifier to achieve the largest effect on the network output. We present the two most obvious methods.

{\em Max-of-$\ell_1$ perturbation: } Similar in spirit to traditional approaches in adversarial attacks (\cite{CaWa17}) we choose $\bm{\eta_i}$ such as to most efficiently decrease the correct response $r^*= \mathrm{arg} \max_j (\mathbf{y}_i)_j$ while maximally increasing one of the false responses $r \neq r^*$.
The solution is given by:
\begin{equation}
        \bm{\eta}_i = \varepsilon \cdot \mathrm{sign}(\mathrm{arg} \max_{r = 1, r \neq r^\star}^k \| \mathbf{z}_r - \mathbf{z}_{r^\star} \|_1). 
\end{equation}

 It is obtained by solving $$\bm{\eta}_i = \mathrm{arg} \max_{\| \bm{\eta}_i \|_\infty \leq \epsilon} \max_{r = 1, r \neq r^\star}^k f_r(\mathbf{x}_i + \bm{\eta}_i) - f_{r^\star}(\mathbf{x}_i + \bm{\eta}_i).$$ Then 
\begin{equation}
    \begin{split}
        \bm{\eta}_i & = \mathrm{arg} \max_{\| \bm{\eta}_i \|_\infty \leq \epsilon} \max_{r = 1, r \neq r^\star}^k \bm{\eta}_i^T \mathbf{z}_r - \bm{\eta}_i^T \mathbf{z}_{r^\star} \\ 
        & = \mathrm{arg} \max_{\| \bm{\eta}_i \|_\infty \leq \epsilon} \max_{r = 1, r \neq r^\star}^k \bm{\eta}_i^T (\mathbf{z}_r - \mathbf{z}_{r^\star}) \\
        & = \varepsilon \cdot \mathrm{sign}(\mathrm{arg} \max_{r = 1, r \neq r^\star}^k \| \mathbf{z}_r - \mathbf{z}_{r^\star} \|_1). 
    \end{split}
\end{equation}
%
% This is (approximately) equivalent to maximizing the cross-entropy between the labels and the soft-max of the outputs $f_r(\mathbf{x} + \bm{\eta}), r \in [k]$, since:

%
% This is (approximately) equivalent to maximizing the cross-entropy between the labels and the soft-max of the outputs $f_r(\mathbf{x} + \bm{\eta}), r \in [k]$, since:

{\em Sum-of-$\Delta z$ perturbation:}
For one-hot vectors $\mathbf{y}_i$ we could, instead, maximize the cross-entropy between the labels and the new outputs, thus choosing to produce a maximally mixed output. If $r^*$ is the correct label, this yields
\begin{equation}\label{eq:sum_delta_z}
\bm{\eta}_i = \varepsilon \cdot \mathrm{sign}(\sum_{r \neq r^*}^n (\mathbf{z}_r - \mathbf{z}_{r^\star})).
\end{equation}
derived as follows
\begin{equation}
    \begin{split}
        L_{ce} (f(\mathbf{x}_i + \bm{\eta}), \mathbf{y}_i) & = -\sum_{r = 1}^k y_i^{(r)} \log \left(\frac{e^{y_i^{(r)} + \bm{\eta}_i^T \mathbf{z}_r}}{\sum_{r^\prime = 1}^k e^{y_i^{(r^\prime)} + \bm{\eta}_i^T \mathbf{z}_r^\prime}} \right) \\
        & = - \log \left(\frac{e^{1+\bm{\eta}_i^T \mathbf{z}_{r^\star}}}{\sum_{r = 1, r \neq r^\star}^k e^{\bm{\eta}_i^T \mathbf{z}_r} + e^{1+\bm{\eta}_i^T \mathbf{z}_{r^\star}}}\right) \\
%        & = \log \left(\frac{\sum_{r = 1, r \neq r^\star}^k e^{\bm{\eta}_i^T \mathbf{z}_r} + e^{1+\bm{\eta}_i^T \mathbf{z}_{r^\star}}}{e^{1+\bm{\eta}_i^T \mathbf{z}_{r^\star}}} \right) \\ 
        & = \log \left(\sum_{r \neq r^\star} e^{\bm{\eta_i}^T (\mathbf{z}_r - \mathbf{z}_{r^\star}) - 1} + 1\right).
        % & \approx \log (\frac{e^{\max_{r = 1, r \neq r^\prime}^k \bm{\eta}^T \mathbf{z}_r} + e e^{\bm{\eta}_i^T \mathbf{z}_{r^\star}}}{e e^{\bm{\eta}_i^T \mathbf{z}_{r^\star}}}) \\ & = \log(1 + e^{\max_{r = 1, r \neq r^\prime}^k \bm{\eta}^T \mathbf{z}_r - \bm{\eta}^T \mathbf{z}_r^\star - 1}).
    \end{split}
\end{equation}
Maximizing this cross entropy amounts to maximizing
$$\sum_{r \neq r^\star} e^{\bm{\eta_i}^T (\mathbf{z}_r - \mathbf{z}_{r^\star})}.$$
For small perturbations we can develop the exponential to first order\footnote{The resulting expression for the maximum also holds when developing to second order.}, which leads to finding the maximum of 
$$\bm{\eta_i}^T \sum_{r \neq r^\star}(\mathbf{z}_r - \mathbf{z}_{r^\star}),$$

yielding Eq.~\eqref{eq:sum_delta_z}.

%\comment{A suggestion/thought: should we have here a 2-3 lines paragraph summing up the essence of our attacks? Meaning that, the ntk allows us to derive the expression for the output of the model on adversarial data, and then we analytically design the optimal perturbation. Compare to with what people know/usually do: taylor expand the loss, and solve the optimization problem numerically.}

{\em Derivation of Eq.~\eqref{eq:multi-linear}:}
While we remain with $\mathcal{X} \in \mathbb{R}^{n \times d}$ as in the binary case, the other quantities change as $\mathcal{Y} \in \mathbb{R}^{nk}$, $f(\mathcal{X}) \in \mathbb{R}^{nk}$ and $\Theta(\mathcal{X}, \mathcal{X}) \in \mathbb{R}^{nk \times nk}$, i.e. for each data pair $(\mathbf{x}_i, \mathbf{x}_j)$ we have $\Theta(\mathbf{x}_i, \mathbf{x}_j) \in \mathbb{R}^{k \times k}$. Let $\Theta_{lm}(\mathbf{x}_i, \mathbf{x}_j)$ denote the entry of $\Theta(\mathbf{x}_i, \mathbf{x}_j)$ that corresponds to the $l$-th and the $m$-th output of the model (evaluated at $\mathbf{x}_i$ and $\mathbf{x}_j$). Then, with similar reasoning that led to Eq.~(\ref{eq:taylor_exp}) we now obtain:
\begin{equation}
    \Theta_{lm} (\mathbf{x}_i + \bm{\eta}, \mathbf{x}_j) \approx \Theta_{lm}(\mathbf{x}_i, \mathbf{x}_j) + \nabla_{\mathbf{x}_i} \Theta_{lm}^T(\mathbf{x}_i, \mathbf{x}_j) \bm{\eta}.
\end{equation}
For the prediction of the model on the whole dataset, we have:
\begin{equation}
    f(\mathbf{\Tilde{X}}) = \mathcal{Y} + \underbrace{\bm{\Delta}}_{\in \mathbb{R}^{nk \times nk}} (\Theta(\mathbf{X}, \mathbf{X}))^{-1} \mathcal{Y},
\end{equation}
which for a given sample $\mathbf{x}_i$ gives:
\begin{equation}
    f(\mathbf{x}_i + \bm{\eta}_i) = \underbrace{\mathbf{y}_i}_{\in \mathbb{R}^k} + \bm{\Delta}_{ik:(i+1)k, :}
    \left ( \Theta(\mathbf{X}, \mathbf{X}) \right )^{-1} \mathcal{Y},
\end{equation}
where $\bm{\Delta}_{(i-1)k:ik, :}$ is equal to
\begin{equation}
    \underbrace{\begin{pmatrix}
        \overbrace{\begin{pmatrix} \nabla_{\mathbf{x}_i} \Theta_{11}(\mathbf{x}_i, \mathbf{x}_1) & \ldots & \nabla_{\mathbf{x}_i} \Theta_{1k}(\mathbf{x}_i, \mathbf{x}_1) & \nabla_{\mathbf{x}_i} \Theta_{11}(\mathbf{x}_i, \mathbf{x}_2) & \ldots & \nabla_{\mathbf{x}_i} \Theta_{1k}(\mathbf{x}_i, \mathbf{x}_n) \end{pmatrix}^T}^{\in \mathbb{R}^{nk \times d}} \bm{\eta}_i \\
        \begin{pmatrix} \nabla_{\mathbf{x}_i} \Theta_{21}(\mathbf{x}_i, \mathbf{x}_1) & \ldots & \nabla_{\mathbf{x}_i} \Theta_{2k}(\mathbf{x}_i, \mathbf{x}_1) & \nabla_{\mathbf{x}_i} \Theta_{21}(\mathbf{x}_i, \mathbf{x}_2) & \ldots & \nabla_{\mathbf{x}_i} \Theta_{2k}(\mathbf{x}_i, \mathbf{x}_n) \end{pmatrix}^T \bm{\eta}_i \\
        \vdots \\
        \begin{pmatrix} \nabla_{\mathbf{x}_i} \Theta_{k1}(\mathbf{x}_i, \mathbf{x}_1) & \ldots & \nabla_{\mathbf{x}_i} \Theta_{kk}(\mathbf{x}_i, \mathbf{x}_1) & \nabla_{\mathbf{x}_i} \Theta_{k1}(\mathbf{x}_i, \mathbf{x}_2) & \ldots & \nabla_{\mathbf{x}_i} \Theta_{kk}(\mathbf{x}_i, \mathbf{x}_n) \end{pmatrix}^T \bm{\eta}_i
    \end{pmatrix}}_{\in \mathbb{R}^{k \times nk}}.
\end{equation}

\section{Transfer Results for Wide Two-Layer Networks}\label{two-layer}
\label{App:bbox}

In this section, we present additional experimental details for Sec. 3.2 and show the results of the experiments on MNIST. We train two-layer neural networks of the form
\begin{equation}\label{eq:model}
     f(\mathbf{x}) = \frac{1}{\sqrt{m}} \mathbf{A} \max(\mathbf{W} \mathbf{x}, 0), \; \; \mathbf{A} \in \{ \pm 1 \}^{m \times k}, \mathbf{W} \sim \mathcal{N}(0, 0.01^2 I_{m \times d}),
\end{equation}
where the first layer is initialized with the normal distribution, the second layer is frozen to its initial random values in $\{\pm1\}$, and $m$ denotes the width of the network. The NTK of this architecture is given by
\begin{equation}\label{eq:H_inf}
    \Theta(\mathbf{x}_i, \mathbf{x}_j) = (\frac{1}{2} - \frac{\arccos{(\frac{\mathbf{x}_i^\top\mathbf{x}_j}{\| \mathbf{x}_i \| \| \mathbf{x}_j \|}})}{2\pi}) \mathbf{x}_i^\top\mathbf{x}_j.
\end{equation}

% \begin{wrapfigure}{r}{0.35\textwidth}
%      \centering
%      \includegraphics[width=0.35\textwidth]{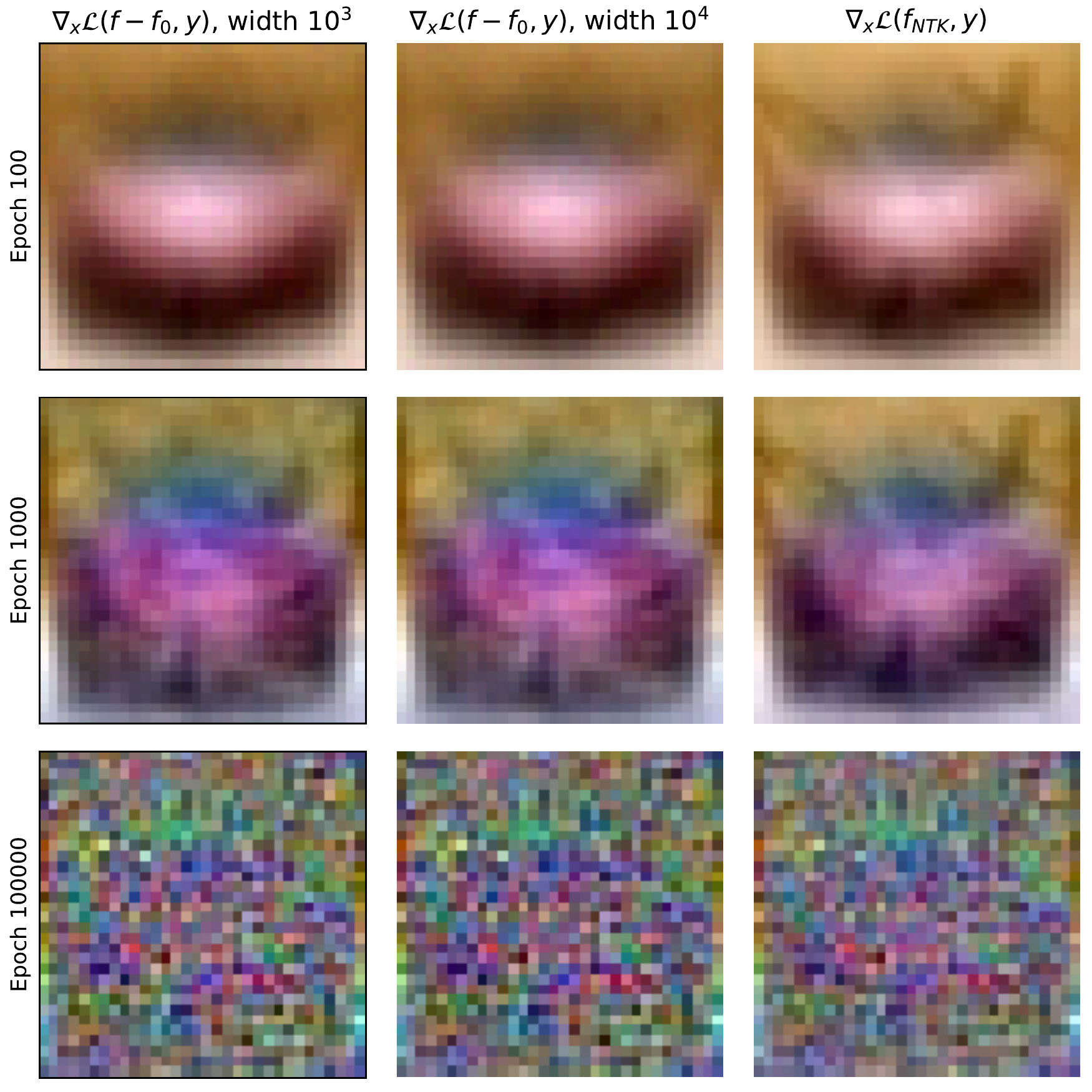}
%      \caption{\label{fig:img_vis} Illustration of the similarity of loss gradients for NTK (right column) and neural nets of width $10^3$ (left) and $10^4$(middle) for a specific image.}
%      \vspace{-5mm}\label{fig:grad_images}
%  \end{wrapfigure}
 
We choose this family of models in order to be consistent with early works that analyzed training and generalization properties of neural networks in the NTK regime \citep{Aro+19a,Du+19a}. We perform experiments on image classification on MNIST and on a binary task extracted from CIFAR-10 (car vs airplane). We train the networks in a regression fashion, minimizing the $\ell_2$ loss between the predictions and one-hot vectors, using full-batch gradient descent on the entire dataset (full training data for MNIST and 5K images for each of car and airplane in binary CIFAR). We keep the learning rate fixed to $10^{-2}$ and vary the width of the network in $\{10^3, 10^4\}$. We train 3 networks for each dataset until convergence ($10^5$ epochs), each initialized with a different random seed. When we measure quantities from the neural net, we subtract the initial prediction $f_0$, since the NTK expression Eq.~\eqref{eq:ntk_App} does not take  the initialization of the network into account. When attacking the models ($\ell_\infty$ attacks), we use perturbation budget $\epsilon=0.3$ for MNIST and $\epsilon=8/255$ for CIFAR-10. The experiments are performed with PyTorch \citep{Pas+19}.
 
For each model,  we calculate the loss gradients with respect to the input during training, and compare them to those derived for the NTK in Eqs.~\eqref{eq:ntk-fgsm} and \eqref{eq:ntk-fgsm-multi} for the binary and the multiclass task, respectively, using cosine similarity:
\begin{equation}
    \frac{\nabla_x \mathcal{L}(f_{t}, y)^\top \nabla_x \mathcal{L}(f - f_0, y)}{\| \nabla_x \mathcal{L}(f_{t}, y) \|_2 \|\nabla_x \mathcal{L}(f - f_0, y)\|_2},
\end{equation}
where $f_t$ is the NTK prediction defined in Eq.~\eqref{eq:ntk_App}, $f$ denotes the output of the neural net and $f_0$ is the initial prediction of the neural net (prior to training).
In order to match the time-scales, we manually align the networks on epoch = $10^3$ with a time-point for the NTK, and based on this number, we match the rest of the epochs assuming linear dependence (as theory predicts - Eq.~\eqref{eq:ntk_App}). Fig. \ref{fig:mnist_ntkvSnn} shows cosine similarity of loss gradients and robust accuracy of the network (evaluated against its own adversarial examples, and those from the NTK) for MNIST. Fig. \ref{fig:grads} illustrates the similarity of loss gradients of neural nets and their NTKs for 3 different epochs.

\begin{figure}[h]
    \centering
    \begin{subfigure}[b]{0.47\textwidth}
        \centering
        \includegraphics[scale=0.25]{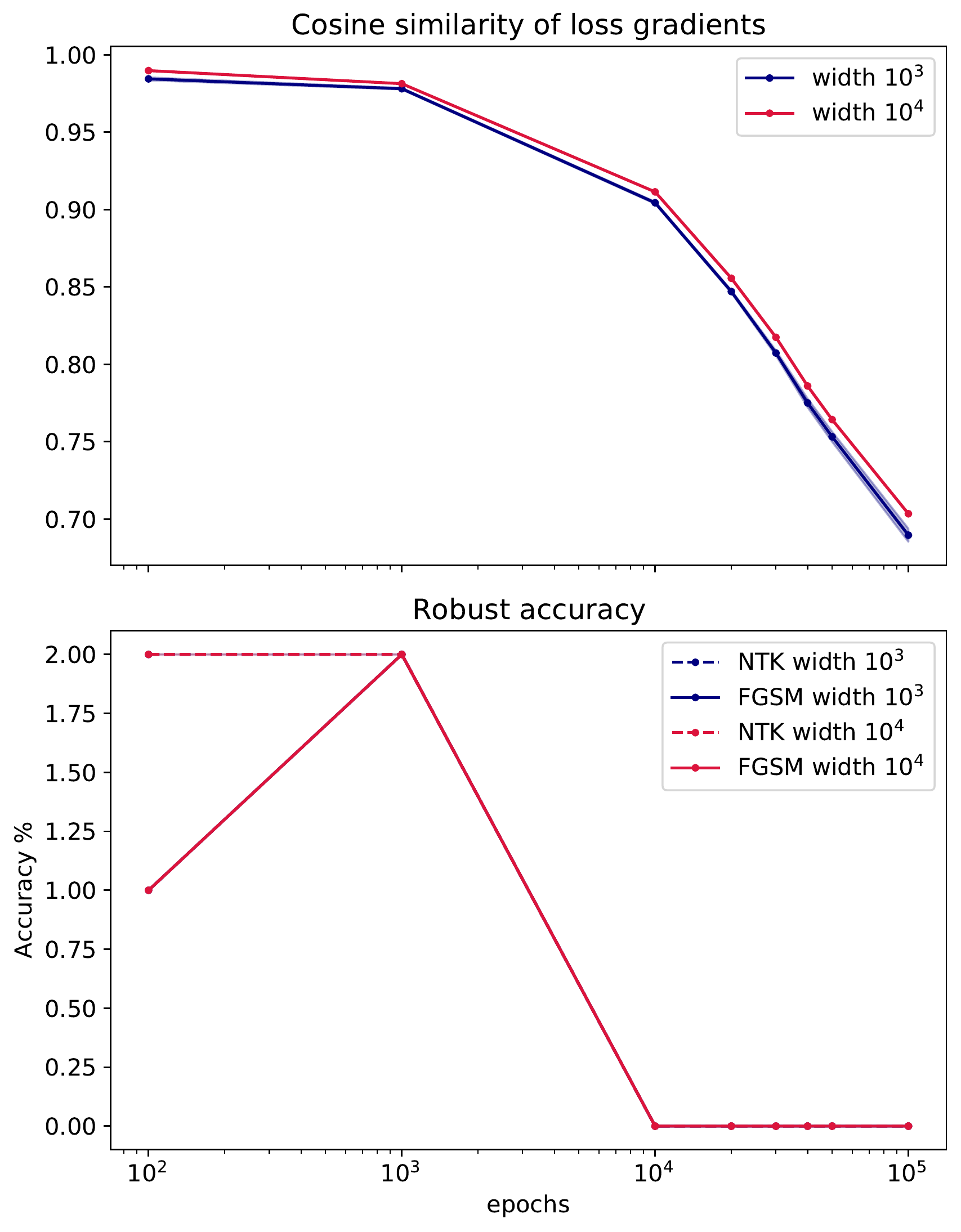}
        \caption{Comparison of NTK and neural net derived quantities for digit recognition (MNIST) during training.}
        \label{fig:mnist_ntkvSnn}
    \end{subfigure}
    \hfill
    \begin{subfigure}[b]{0.5\textwidth}
        \centering
        \includegraphics[scale=0.3]{theory_vs_practise/compare_images.pdf}
        \caption{Loss gradients.}
        \label{fig:grads}
    \end{subfigure}
    \caption{Visualizing the similarity of NTK and NN adversarial quantities. (a) \textbf{Top}. Cosine similarity between the loss gradient of the neural net and of the NTK prediction for the same time point (MNIST). \textbf{Bottom}. Robust accuracy of  neural net against its own adversarial examples (solid) and corresponding NTK examples (dashed) for MNIST. Blue and red lines overlap in the second plot, and the effect of the random seed is insignificant. (b) Illustration of the similarity of loss gradients for NTK (\textbf{right} column) and neural nets of width $10^3$ (\textbf{left}) and $10^4$(\textbf{middle}) for a specific image extracted from CIFAR-10. Columns show gradients for different epochs ($10^2, 10^3, 10^4$, respectively).}
    \label{}
\end{figure}

% We find that loss gradients calculated from the neural nets and their equivalent kernel machines are very well aligned across the whole duration of training. This suggests that the effect of those perturbations will also be similar when tested on the neural net and, indeed, this can be observed in Figure \ref{fig:fgsmVsntk_acc}. We see that NTK perturbations, when applied to their associated networks, produce effectively the same outcome as the perturbations generated by the weights of the actual net; they drive its accuracy to 0$\%$. As expected, on all plots, the wider the network (darker lines), the better the agreement between the NTK and the actual quantity.

Notice the very small discrepancy between the loss gradients of different networks (initialized with different random seeds) in Fig. \ref{fig:mnist_ntkvSnn}.
They are all centered around the loss gradient of the NTK, a manifestation of transferability of adversarial examples, at least for models with the same architecture. The NTK framework might possibly provide a wider explanation of this phenomenon, also across architectures. For instance, for fully connected kernels, the NTK expression for kernels of depth $l$ is a relatively simple function of expressions for depth $l-1$ \citep{JHG18,BiMa19} which could explain transferability across architectures of varying depth.

\section{NTK Features: Additional Details}
\label{App:ntk_feats}

In this section, we present additional material for Sec. 4; we show derivations that are missing from the main text, and complement the plots by showing the same information for more architectures and datasets.

\subsection{Loss Gradient Decomposition}
First, recall our definitions of features from Sec. 4. Let $\mathcal{X}, \mathcal{Y}$ be a dataset, where $\mathcal{X} \in \mathbb{R}^{n \times d}$ and $\mathcal{Y} \in \{ \pm 1\}^n$ (binary classification). Then, kernel regression on this dataset gives predictions of the form
$f_{\infty} (\mathbf{x}) = \Theta(\mathbf{x}, \mathcal{X})^\top \Theta(\mathcal{X}, \mathcal{X})^{-1} \mathcal{Y}$. Given, the eigendecomposition of the Gram Matrix $\Theta(\mathcal{X}, \mathcal{X})$, we can decompose the prediction as follows
\begin{equation}
	f_{\infty}(\mathbf{x}) = \Theta(\mathbf{x}, \mathcal{X})^\top \sum_{i = 1}^n \lambda_i^{-1} \mathbf{v}_i \mathbf{v}_i^\top \mathcal{Y} = \sum_{i = 1}^n f^{(i)} (\mathbf{x}),
\end{equation}
where $f^{(i)}: \mathbb{R}^d \to \mathbb{R}^k, f^{(i)} (\mathbf{x}) = \lambda_i^{-1} \Theta(\mathbf{x}, \mathcal{X})^\top \mathbf{v}_i \mathbf{v}_i^\top \mathcal{Y}$. Notably, this means that the gradient of the cross entropy loss can be also understood as a composition of gradients coming from these features, as the following proposition shows.

\begin{figure}[h]
    \centering
    \includegraphics[scale=0.3]{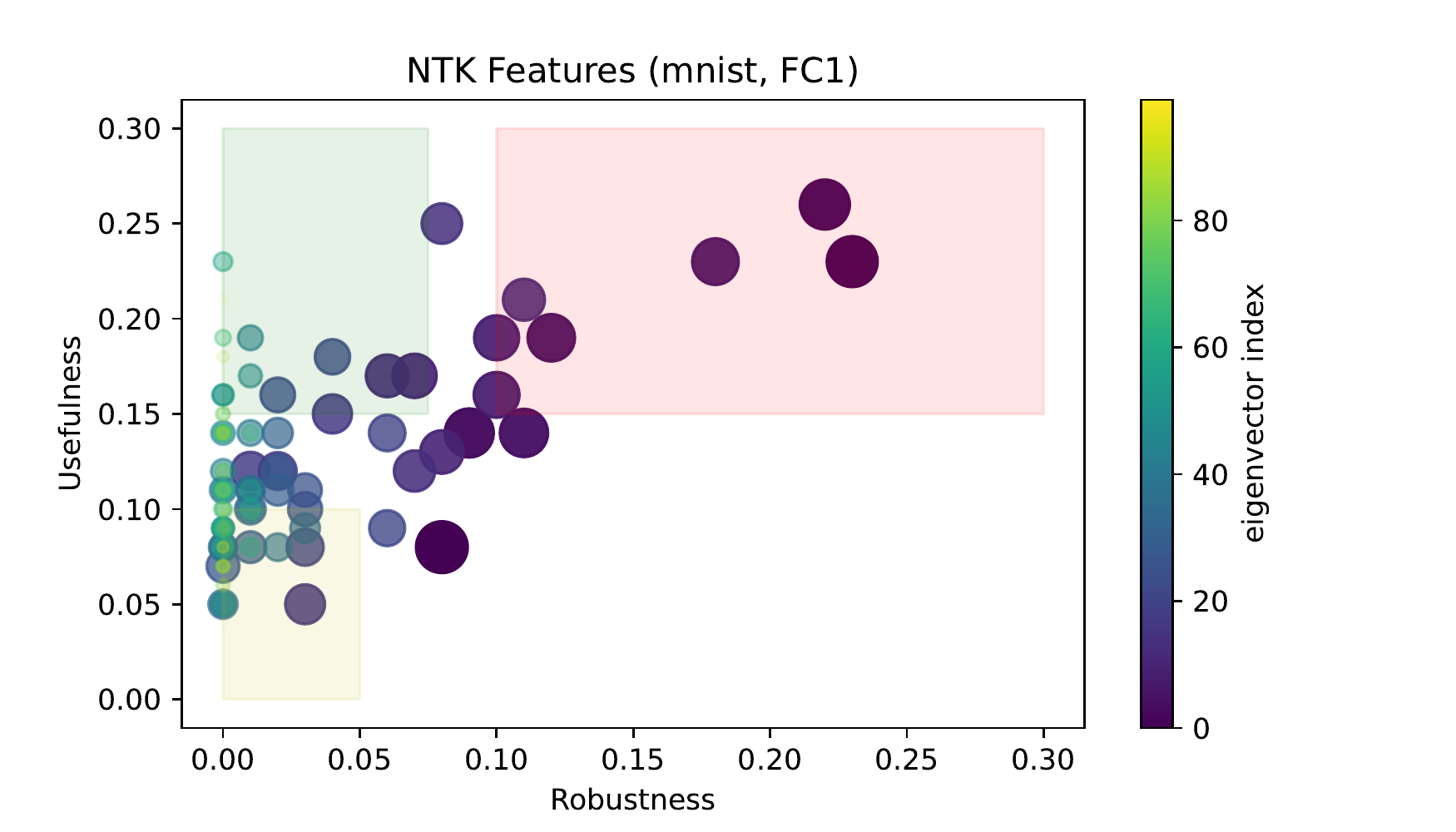}
    \includegraphics[scale=0.3]{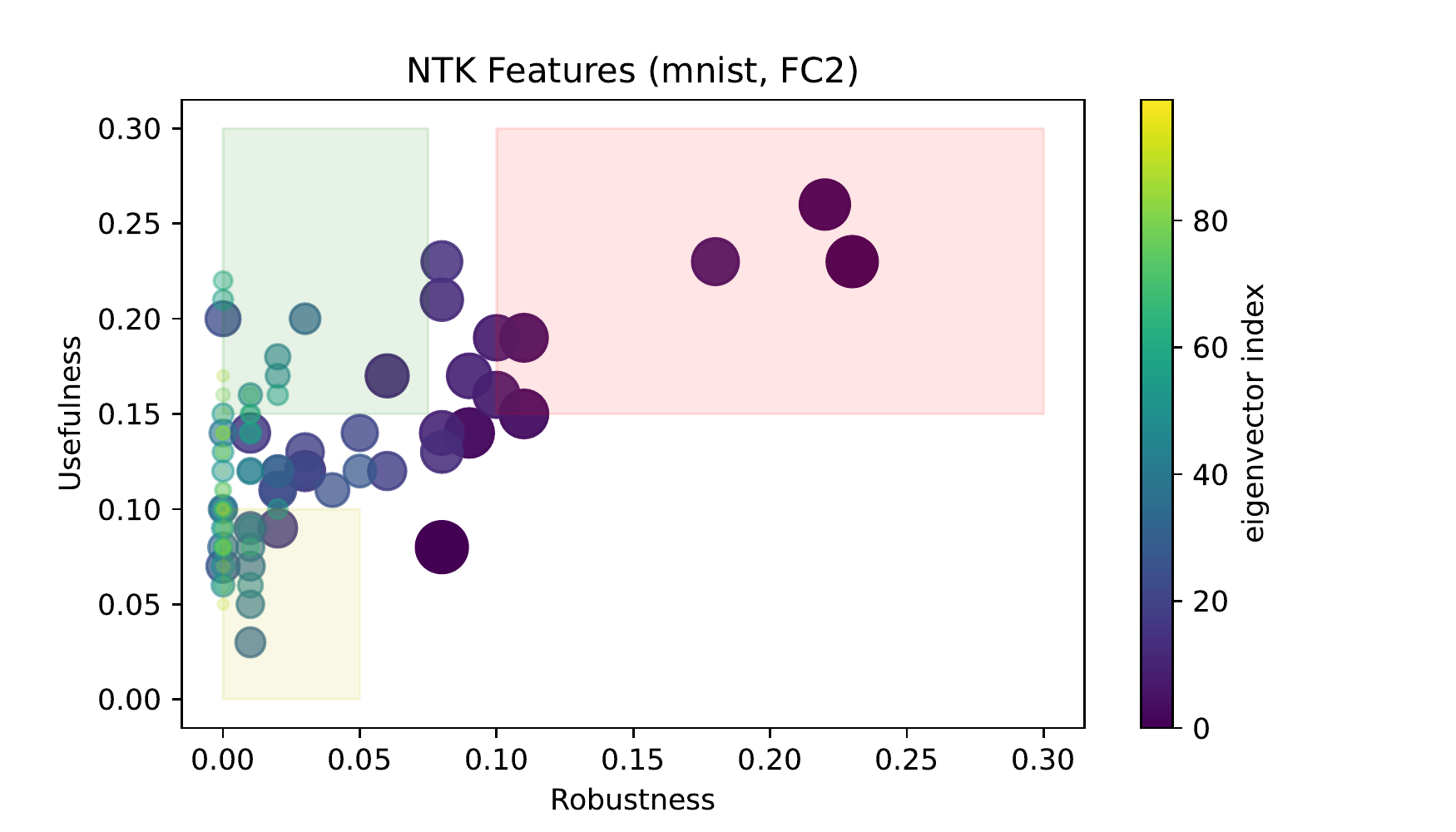}
    \includegraphics[scale=0.3]{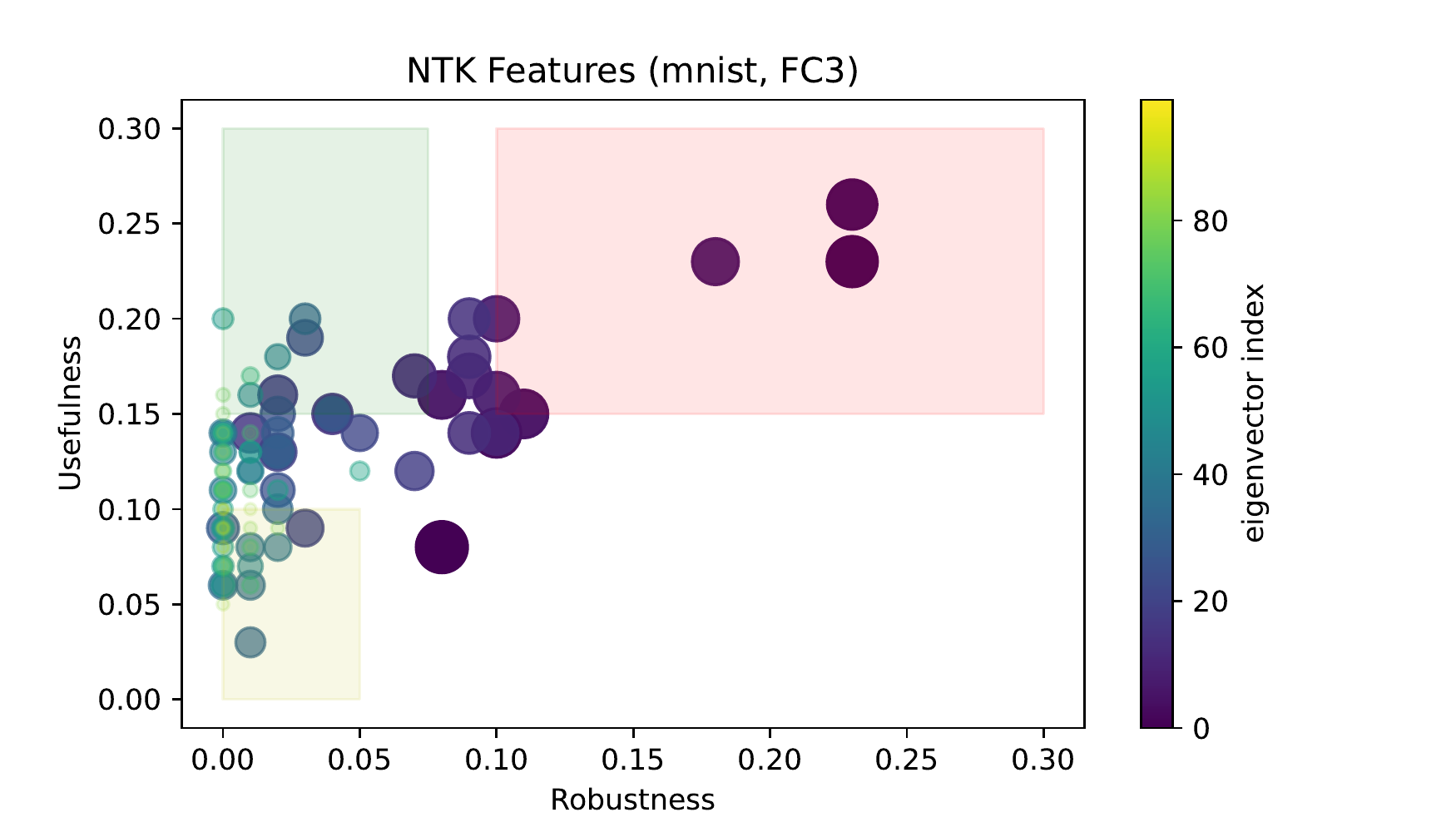}
    \includegraphics[scale=0.3]{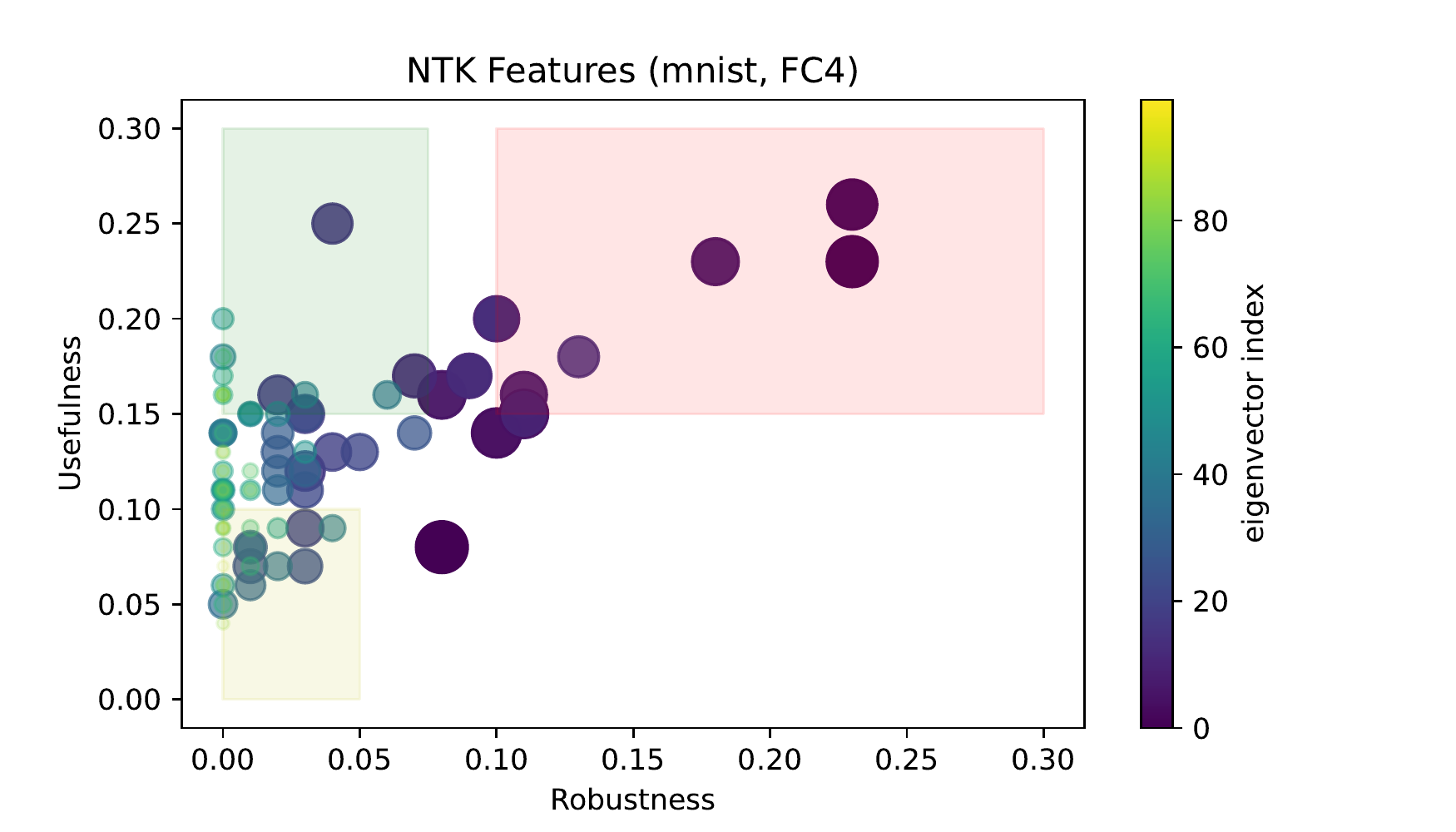}
    \includegraphics[scale=0.3]{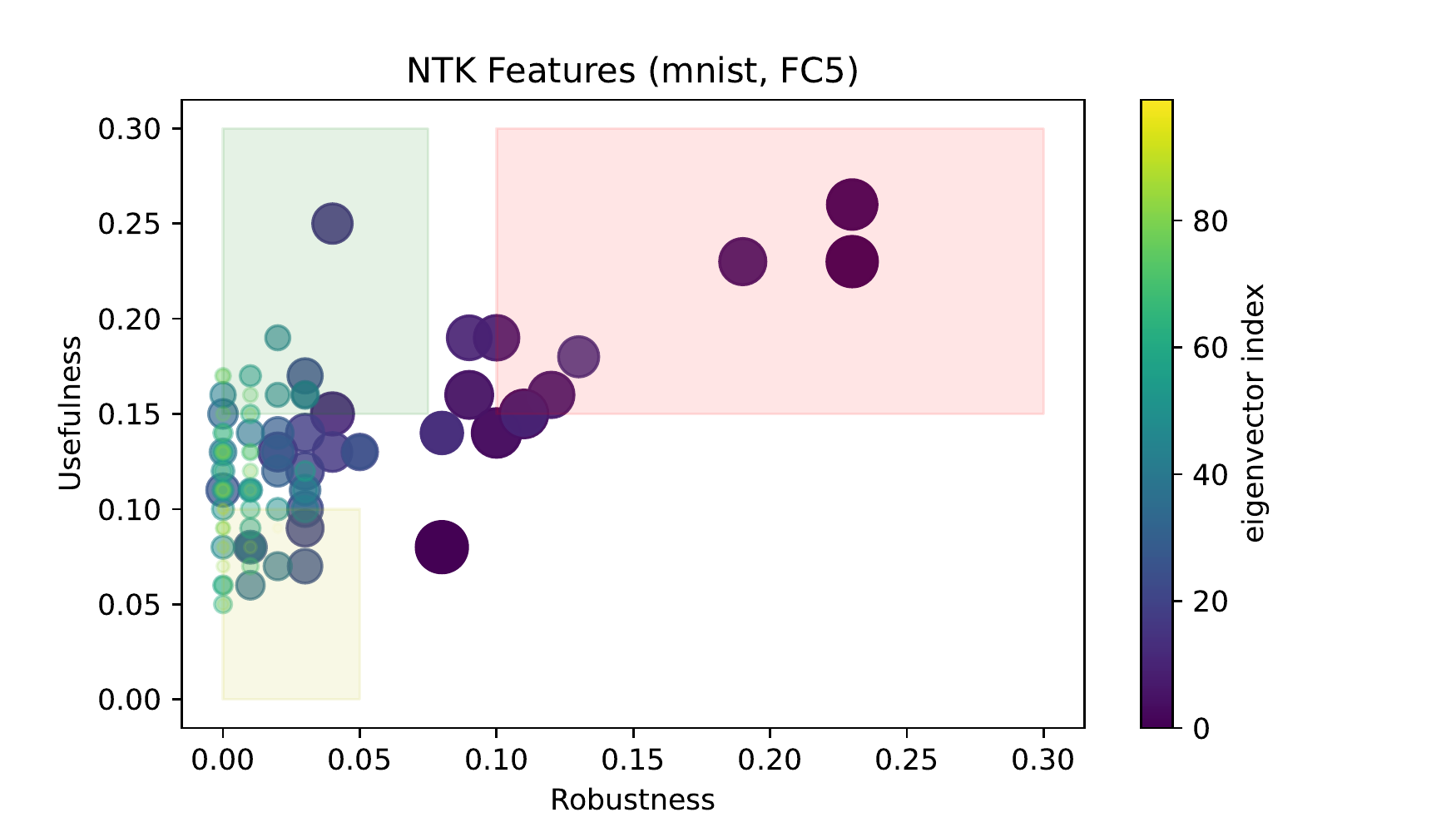}
    \caption{Robustness Usefulness space for various kernels, MNIST multiclass. The axes lie in $[0, 1]$. "Useful" features have usefulness above 0.1 (the random guessing probability for our balanced data set). The colored red, green and yellow boxes are arbitrary, meant to visually distinguish useful-robust from other features.}
    \label{fig:rob_vs_usef_mnist}
\end{figure}

\begin{proposition}
The loss gradient of $f_{\infty}$ can be decomposed as follows:
\begin{equation}
    \nabla_\mathbf{x} \mathcal{L} (f_{\infty}(\mathbf{x}), y) = \sum_{i = 1}^n \alpha_i \nabla_\mathbf{x} \mathcal{L} (f^{(i)}(\mathbf{x}), y),
\end{equation}
where $\alpha_i$ is a quantity that depends on $\mathbf{x}, y$.
\end{proposition}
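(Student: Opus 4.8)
The plan is to exploit the factorization of the sigmoid cross-entropy loss gradient already derived in Appendix~\ref{App:fgsm_ntk}: for any scalar function $g$, the input gradient splits as a scalar residual times the function's own gradient,
\begin{equation*}
\nabla_\mathbf{x} \mathcal{L}(g(\mathbf{x}), y) = \left(\sigma(g(\mathbf{x})) - \hat{y}\right)\, \nabla_\mathbf{x} g(\mathbf{x}),
\end{equation*}
where $\hat{y} = (y+1)/2 \in \{0,1\}$. Once this identity is applied both to $f_\infty$ and to each feature $f^{(i)}$, the entire proposition collapses to the linearity of the gradient operator. This is precisely why, as the main text notes, the binary case is ``particularly elegant'': the loss gradient is always a scalar multiple of $\nabla_\mathbf{x} f$, so the additive decomposition $f_\infty = \sum_i f^{(i)}$ passes through to the loss gradients up to scalar coefficients.

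First I would write $\nabla_\mathbf{x} \mathcal{L}(f_\infty(\mathbf{x}), y) = \left(\sigma(f_\infty(\mathbf{x})) - \hat{y}\right)\nabla_\mathbf{x} f_\infty(\mathbf{x})$ and use $f_\infty = \sum_{i=1}^n f^{(i)}$ together with linearity of $\nabla_\mathbf{x}$ to replace $\nabla_\mathbf{x} f_\infty(\mathbf{x})$ by $\sum_{i=1}^n \nabla_\mathbf{x} f^{(i)}(\mathbf{x})$. Next I would invert the same factorization applied to each individual feature, writing
\begin{equation*}
\nabla_\mathbf{x} f^{(i)}(\mathbf{x}) = \frac{1}{\sigma(f^{(i)}(\mathbf{x})) - \hat{y}}\, \nabla_\mathbf{x} \mathcal{L}(f^{(i)}(\mathbf{x}), y),
\end{equation*}
and substitute this back in. Collecting scalar factors yields the claimed form with
\begin{equation*}
\alpha_i = \frac{\sigma(f_\infty(\mathbf{x})) - \hat{y}}{\sigma(f^{(i)}(\mathbf{x})) - \hat{y}},
\end{equation*}
a quantity depending only on $\mathbf{x}$ and $y$ (the latter through $\hat{y}$), exactly as stated.

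The only point requiring care — and the main potential obstacle — is the well-definedness of the coefficients $\alpha_i$, i.e. that no denominator vanishes. This follows from the strict bound $\sigma(u) \in (0,1)$ for every $u \in \mathbb{R}$: since $\hat{y} \in \{0,1\}$, the residual $\sigma(f^{(i)}(\mathbf{x})) - \hat{y}$ lies in $(0,1)$ when $\hat{y}=0$ and in $(-1,0)$ when $\hat{y}=1$, hence is never zero regardless of the feature value. Thus each $\alpha_i$ is finite and the decomposition holds at every input $\mathbf{x}$. I would emphasize that this scalar factorization is special to the binary/sigmoid setting; in the multiclass case the loss gradient is a genuine linear combination of the gradients of several output coordinates (cf. Eq.~\eqref{eq:ntk-fgsm-multi}), so the simple scalar collapse no longer applies and a clean single-coefficient decomposition is not expected.
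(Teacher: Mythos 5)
Your proposal is correct and follows essentially the same route as the paper's own proof: factor the sigmoid cross-entropy gradient as $\left(\sigma(f(\mathbf{x})) - \hat{y}\right)\nabla_\mathbf{x} f(\mathbf{x})$, use linearity of the gradient over $f_\infty = \sum_i f^{(i)}$, invert the factorization per feature, and read off $\alpha_i$ as the ratio of residuals. Your added remark on non-vanishing denominators (via $\sigma(u) \in (0,1)$ and $\hat{y} \in \{0,1\}$) is a welcome point of rigor that the paper leaves implicit.
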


\begin{proof}
    Recall from Eq.~\eqref{eq:ntk-fgsm}, that $\nabla_{\mathbf{x}} \mathcal{L} (f (\mathbf{x}), y) = \left(\sigma (f(x)) - \frac{y+1}{2}\right) \nabla_{\mathbf{x}} f(\mathbf{x})$. Then, we have
    \begin{equation}
        \begin{split}
            \nabla_{\mathbf{x}} \mathcal{L} (f_{\infty}(\mathbf{x}), y) & = \left(\sigma (f_{\infty}(x)) - \frac{y+1}{2}\right) \nabla_{\mathbf{x}} f_{\infty}(\mathbf{x}) \\
            & = \left(\sigma (f_{\infty}(x)) - \frac{y+1}{2}\right) \nabla_{\mathbf{x}} \sum_{i = 1}^n f^{(i)}(\mathbf{x}) \\
            & = \left(\sigma (f_{\infty}(x)) - \frac{y+1}{2}\right)  \sum_{i = 1}^n \nabla_{\mathbf{x}} f^{(i)}(\mathbf{x}) \\
            & = \left(\sigma (f_{\infty}(x)) - \frac{y+1}{2}\right)  \sum_{i = 1}^n \frac{1}{\left(\sigma (f^{(i)}(x)) - \frac{y+1}{2}\right)} \nabla_{\mathbf{x}} \mathcal{L} (f^{(i)}(\mathbf{x}), y) \\
            & = \sum_{i = 1}^n \underbrace{\frac{\left(\sigma (f_{\infty}(x)) - \frac{y+1}{2}\right)}{\left(\sigma (f^{(i)}(x)) - \frac{y+1}{2}\right)}}_{\alpha_i} \nabla_{\mathbf{x}} \mathcal{L} (f^{(i)}(\mathbf{x}), y).
        \end{split}
    \end{equation}
\end{proof}

\subsection{Additional Plots}

Complementing Fig.~\ref{fig:features} in the main text, we show (the first 100) NTK features in Robustness - Usefulness space defined in Sec. 4 for a larger number of architectures for both MNIST and CIFAR in Fig. \ref{fig:rob_vs_usef_mnist} and \ref{fig:rob_vs_usef_cifar10}. We use available analytical NTK expressions for standard FC\{1,2,3,4,5\} and CONV\{1,2\} architectures in the NTK regime to evaluate and decompose kernels on a subset of 10K MNIST training images and 10K binary CIFAR images - 5K cars and 5K airplanes. We note that within a dataset, the plots do not change much between architectures, speaking to the universal nature of these kernel-induced features.

\begin{figure}[h]
    \centering
    \includegraphics[scale=0.3]{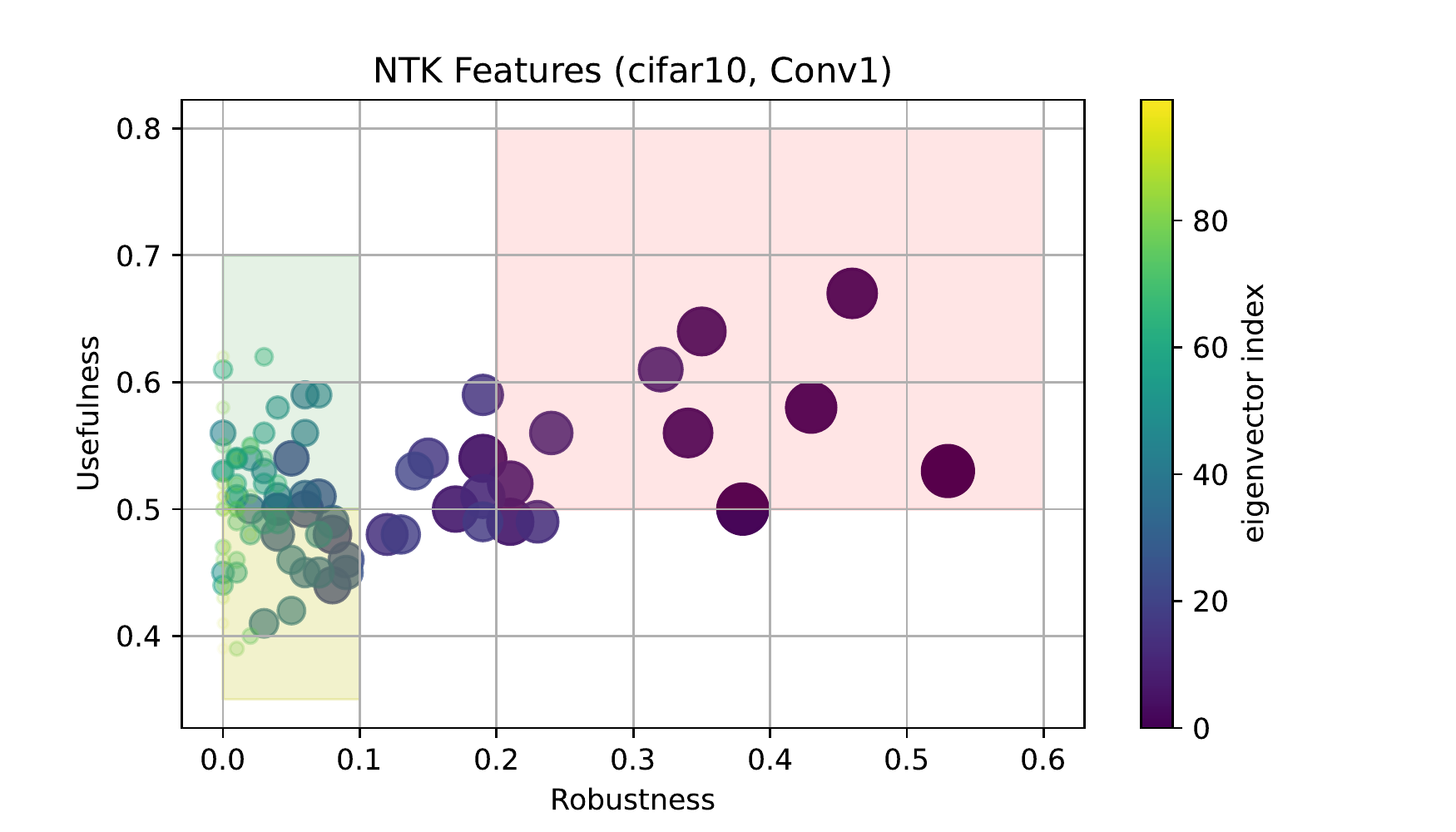}
    \includegraphics[scale=0.3]{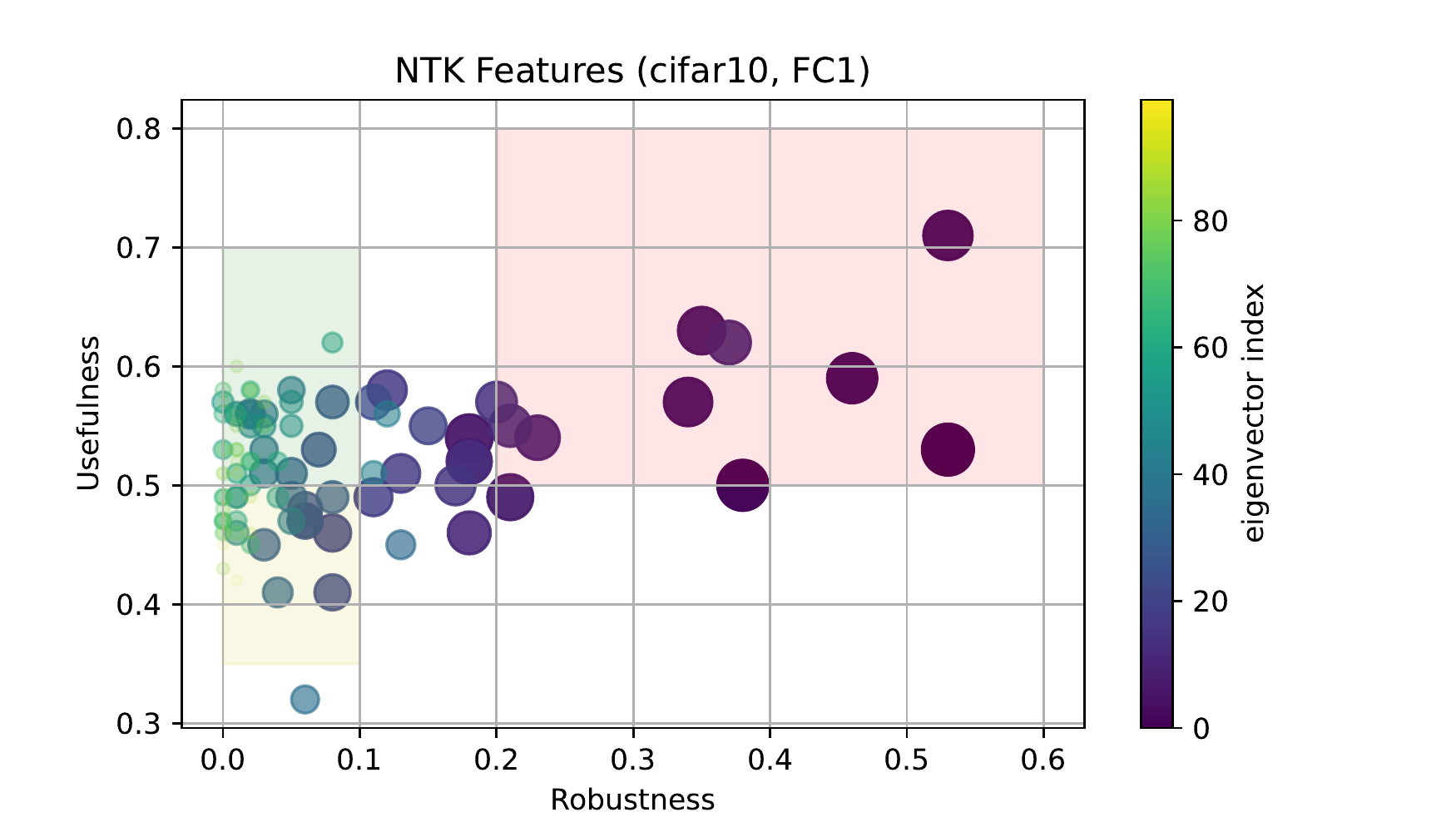}
    \includegraphics[scale=0.3]{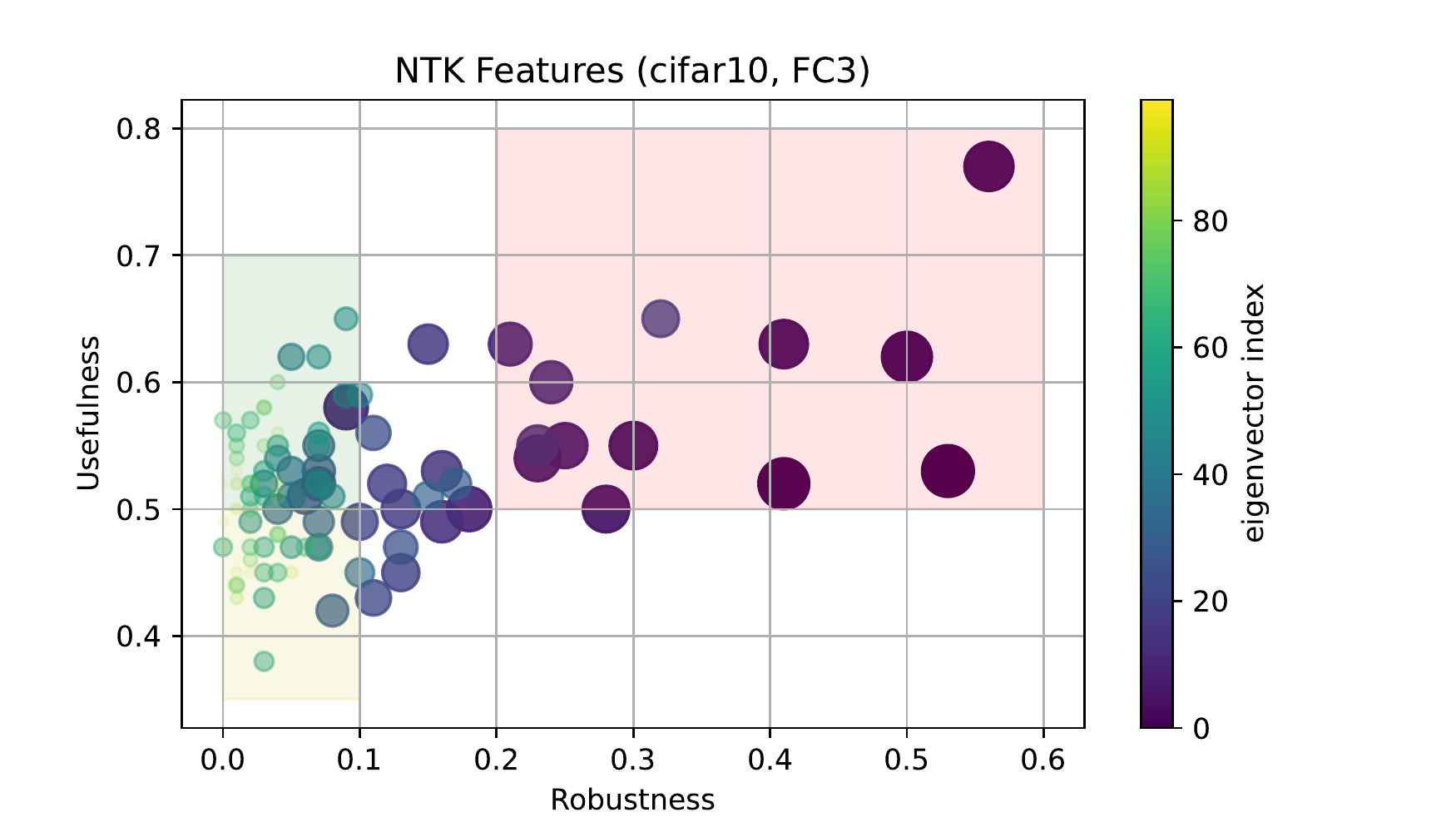}
    \includegraphics[scale=0.3]{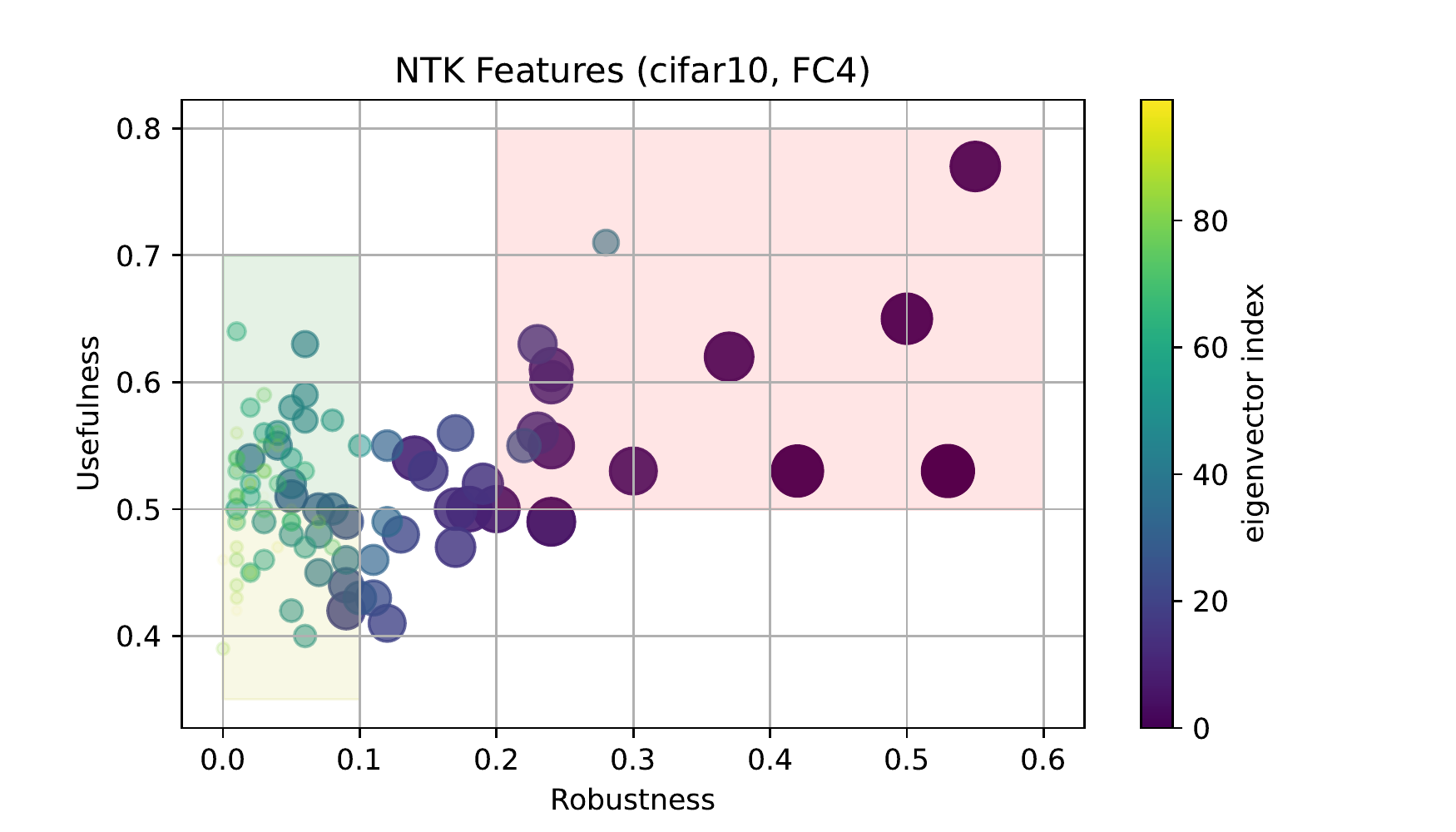}
    \includegraphics[scale=0.3]{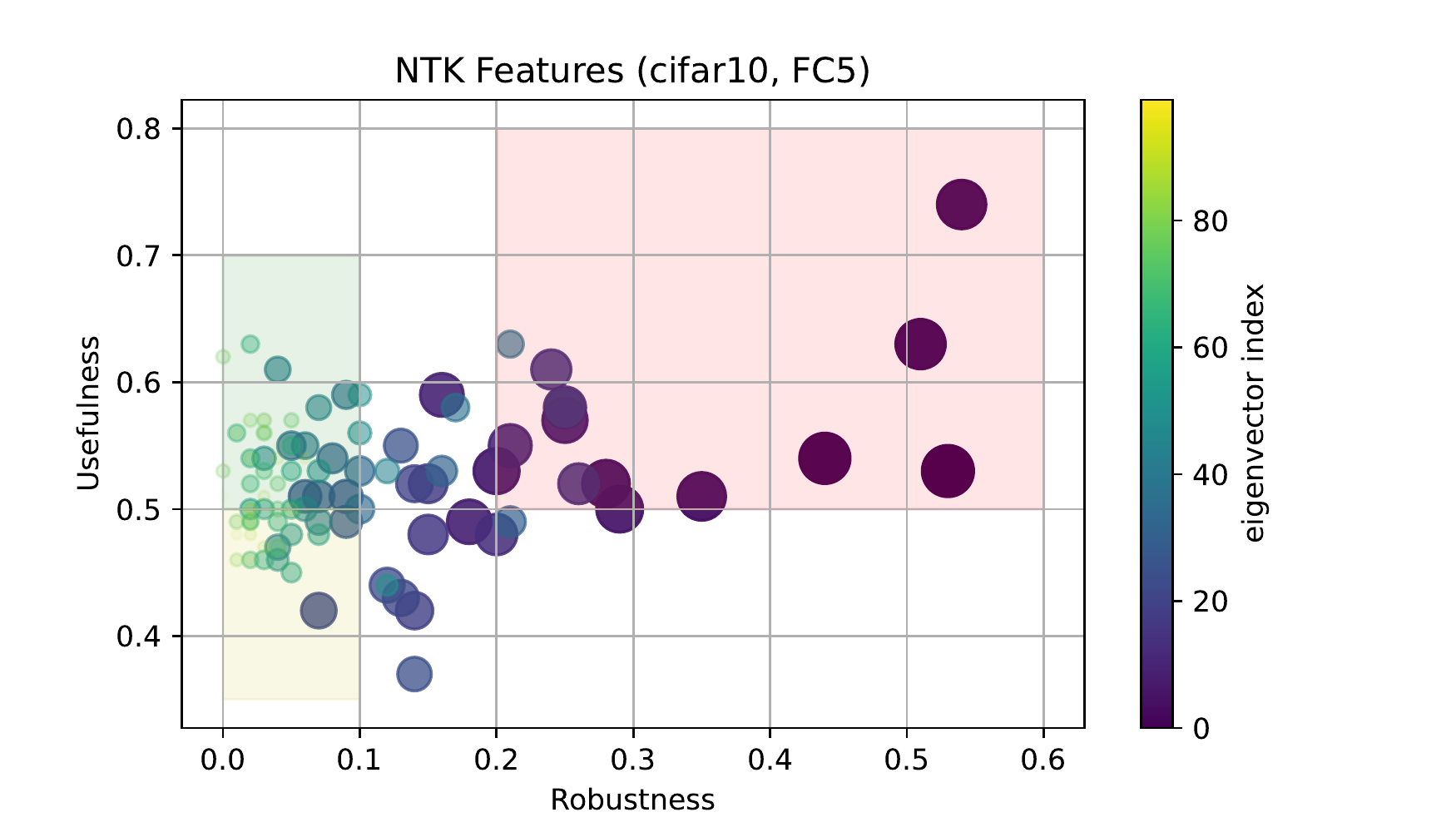}
    \caption{Robustness Usefulness space for various kernels, CIFAR-10 car vs plane. The axes lie in $[0, 1]$. Fig. \ref{fig:features} in the main text shows FC2 and CONV 2. "Useful" features have usefulness $>0.5$ (the random guessing probability for a binary balanced data set). The colored red, green and yellow boxes are arbitrary, meant to visually distinguish useful-robust from other features.}
    \label{fig:rob_vs_usef_cifar10}
\end{figure}

\begin{figure}[h]
    \centering
    \includegraphics[scale=0.3]{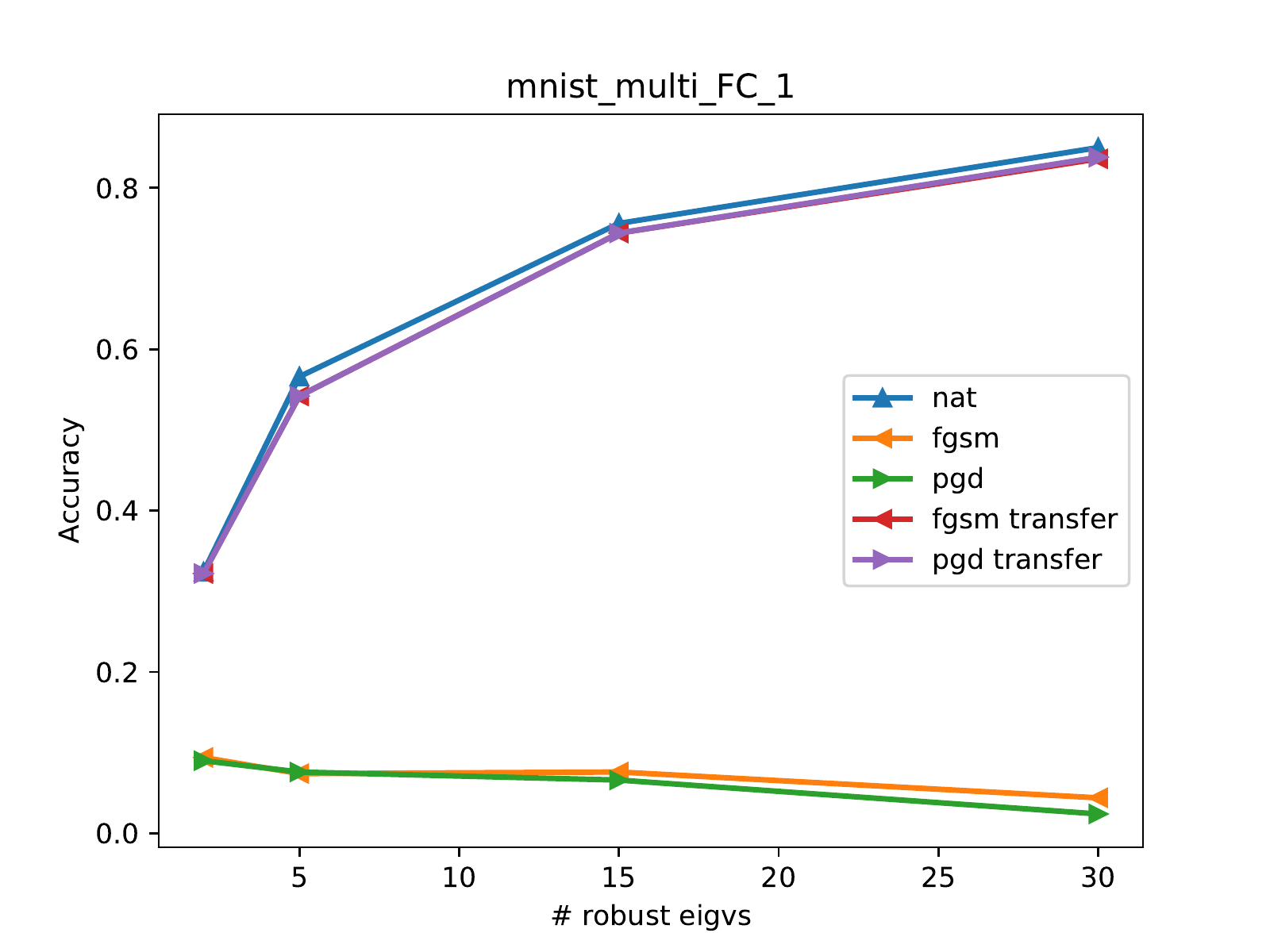}
    \includegraphics[scale=0.3]{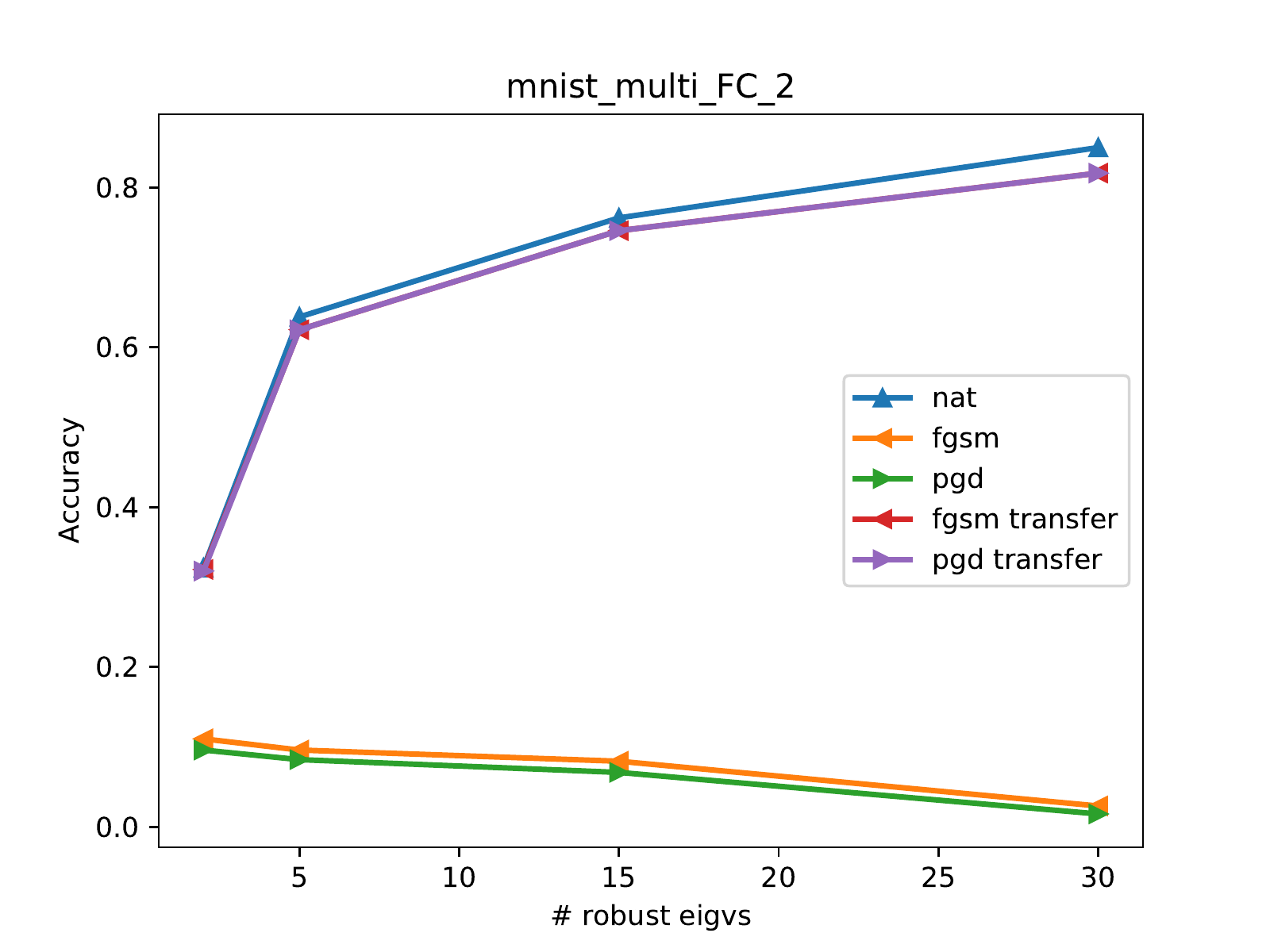}
    \includegraphics[scale=0.3]{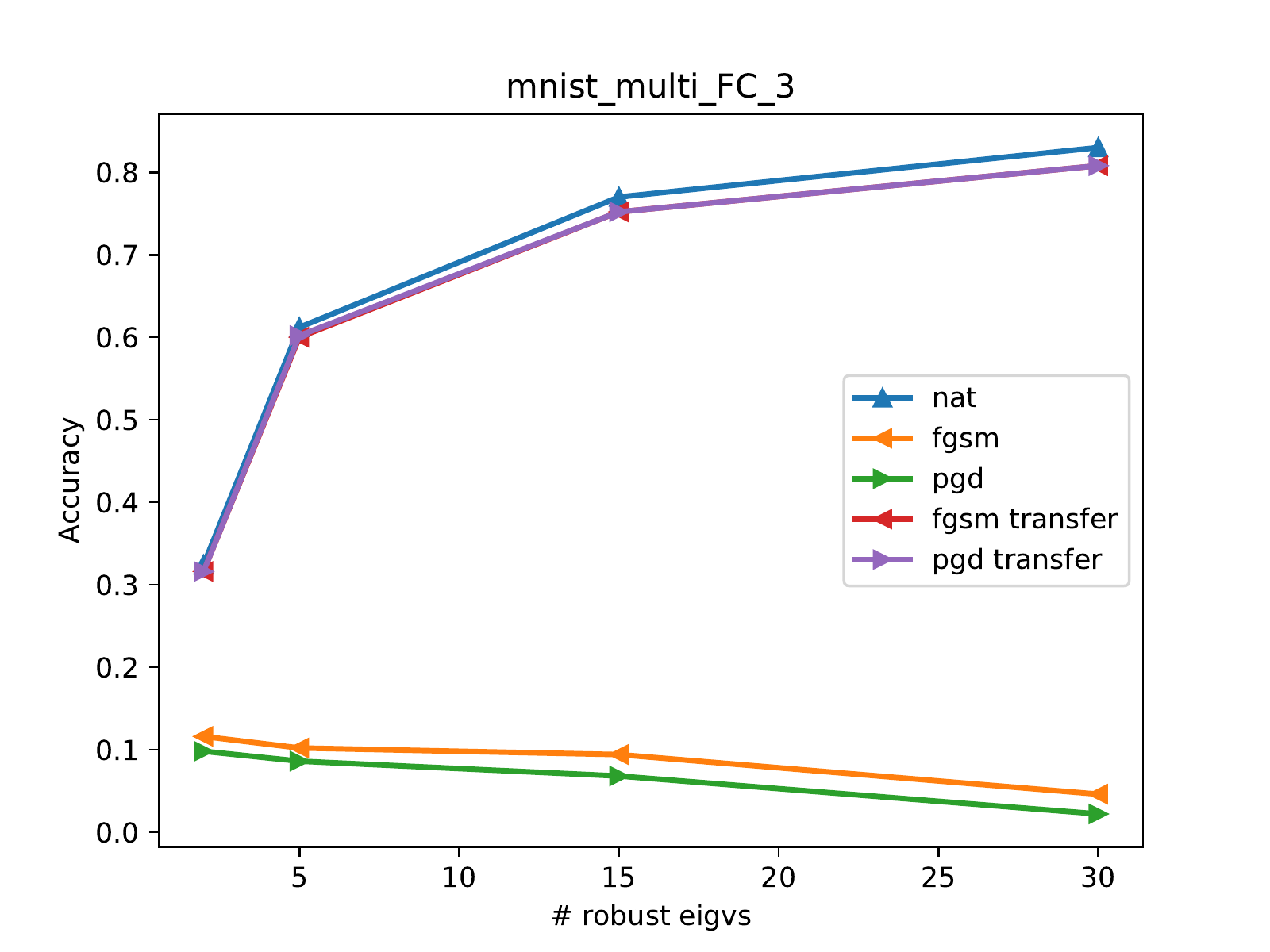}
    \includegraphics[scale=0.3]{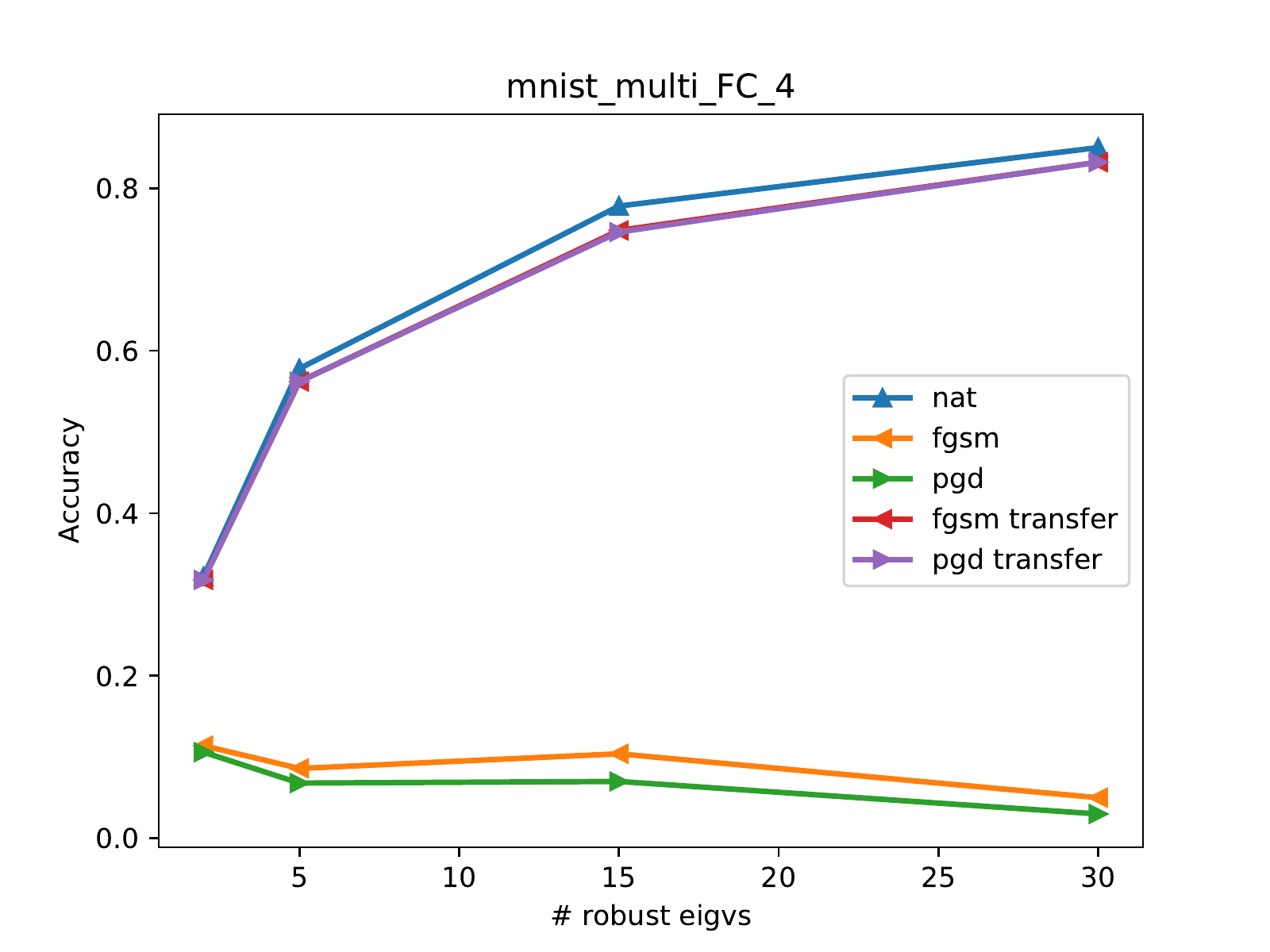}
    \includegraphics[scale=0.3]{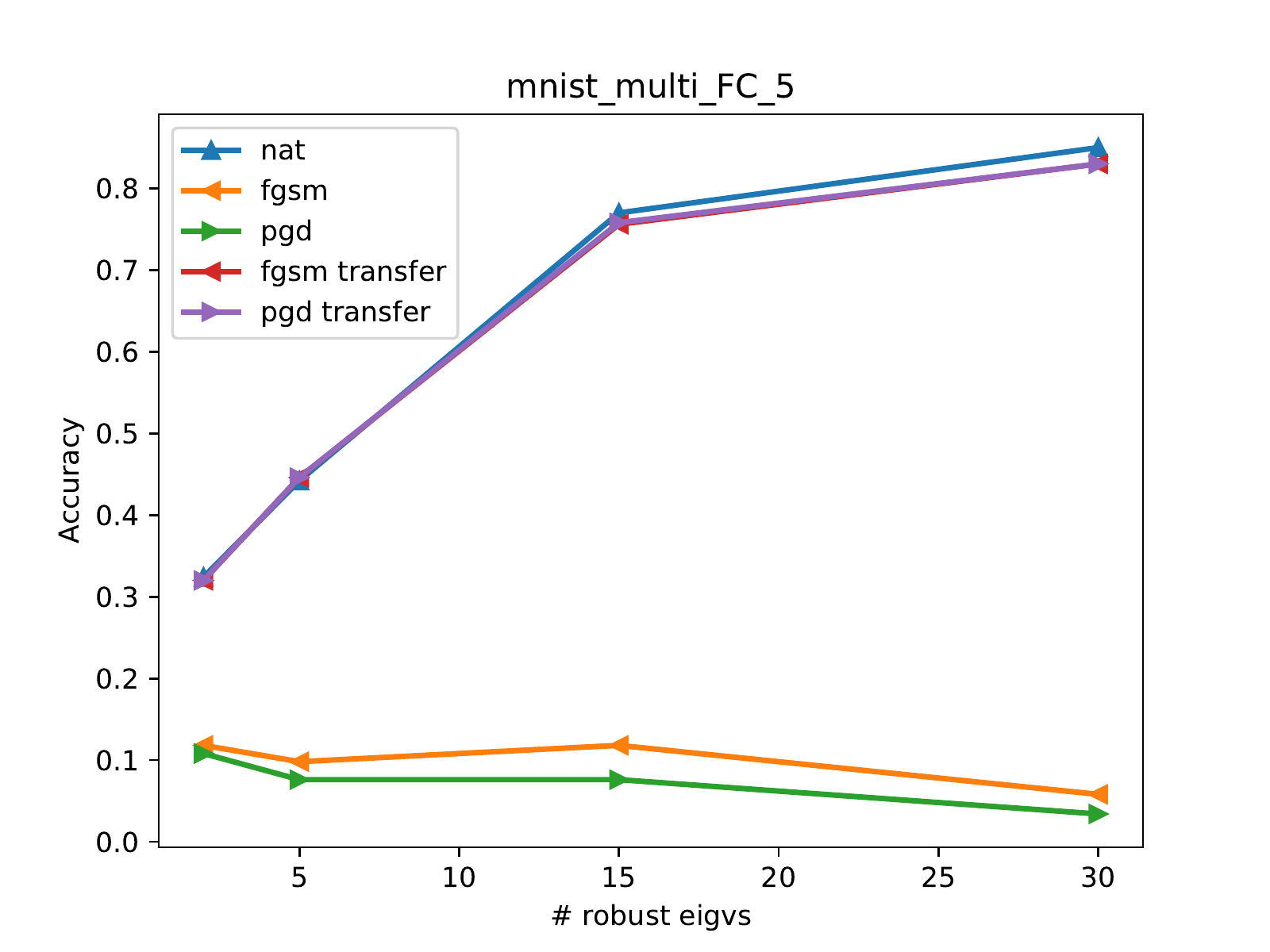}
    \caption{Robustness of kernel when keeping a few of the top robust features (MNIST). The Gram matrix is computed using 10k images from the training set. Blue lines show clean accuracy, red and purple (lines are overlapping) show accuracy against FGSM and PGD10 examples generated using the full kernel machines (consisting of all the features) and orange and green show the resulting robustness of the new model (FGSM and PGD10, respectively). Accuracy on the $y$-axis lies in $[0, 1]$.}
    \label{fig:mnist_ntk_robfeats}
\end{figure}

\subsection{Robust Features Alone are not Enough}
\label{ssec:notenough}

Feature definitions outlined in Sec. 4 open an avenue to use traditional feature selection methods to search for robust models. In particular, here we rank the features of an NTK based on their robustness on a validation set (accuracy against adversarial examples computed from the same feature - setting: FGSM with $\epsilon=0.3$ for MNIST or $\epsilon=8/255$ for CIFAR-10). Specifically, we test and rank each "one-feature kernel" function $f^{(i)} (\mathbf{x}) := \lambda_i^{-1} \Theta(\mathbf{x}, \mathcal{X})^\top \mathbf{v}_i \mathbf{v}_i^\top \mathcal{Y}$. Given this ranking, we construct a sequence of new kernels $\Theta^\prime_r(\mathcal{X},\mathcal{X})$ by progressively aggregating the $r$ most robust features with their original eigenvalues. This gives rise to kernel machines of the form $f^\prime_r (\mathbf{x}) = \Theta(\mathbf{x}, \mathcal{X})^\top \Theta_r^\prime(\mathcal{X}, \mathcal{X})^{-1} \mathcal{Y}$, where $r$ indicates the number of top robust features kept. We present the results of this approach in Figures \ref{fig:mnist_ntk_robfeats} (MNIST) and \ref{fig:cifar10_ntk_robfeats} (CIFAR-10), where we plot clean accuracy as well as robust accuracy against perturbation from the kernel $f^\prime_r$ itself as well as against "transfer" perturbations from the original (full) kernel.

On the binary classification task, some robustness can be garnered by keeping the most robust features and there seems to exist a sweet spot where the robustness is maximized (this seems to be consistent across other models as well).
On multiclass MNIST, however, despite the relative simplicity of the dataset, we are not able to obtain non-trivial performance without compromising robustness. We conclude that it is unlikely that robust features (of standard models) alone are sufficient for robust classification, and the burden of some data augmentation, like in the form of adversarial training, seems necessary, at least for the models considered in our experiments. 

\begin{figure}[h]
    \centering
    \includegraphics[scale=0.3]{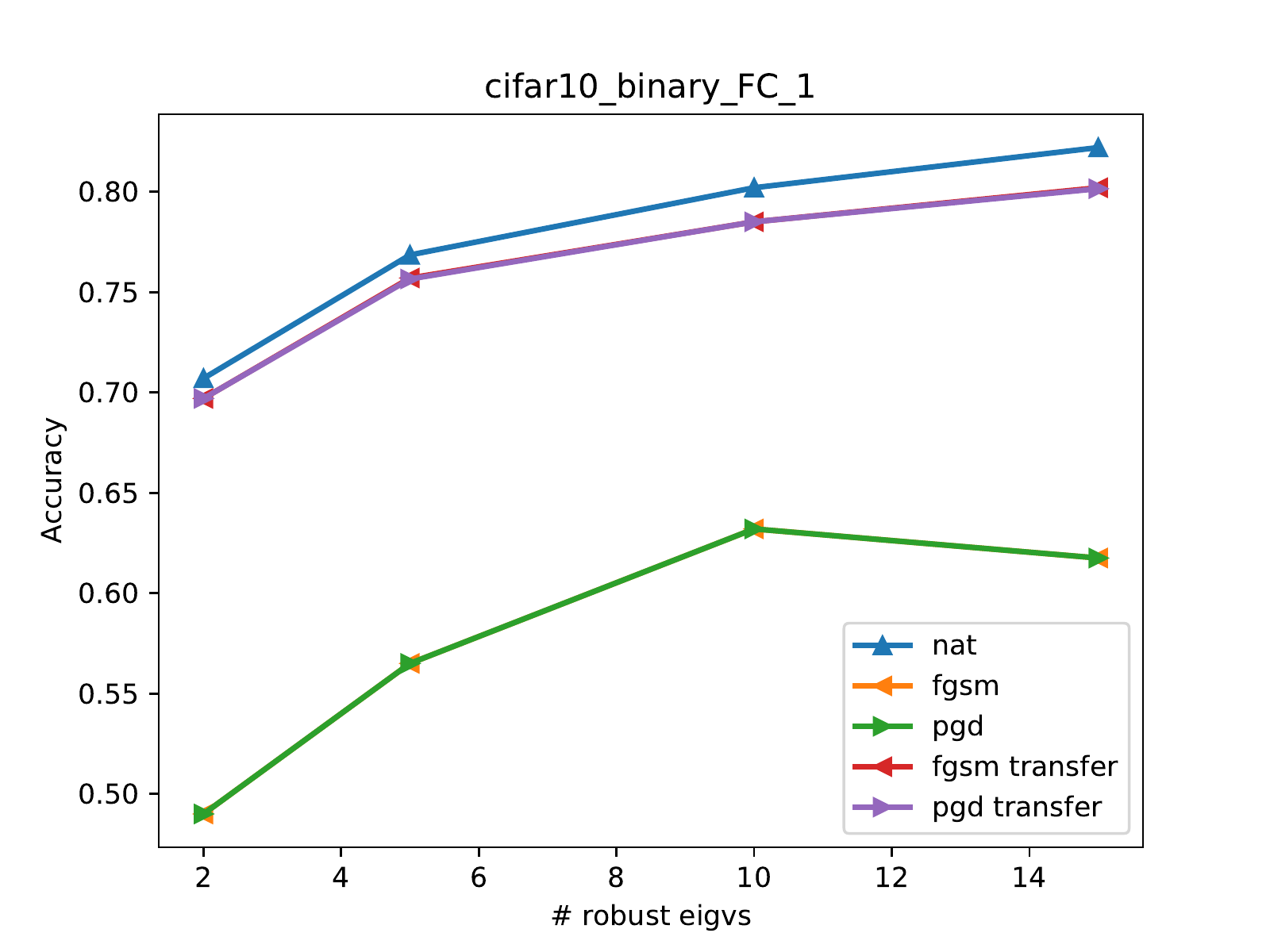}
    \includegraphics[scale=0.3]{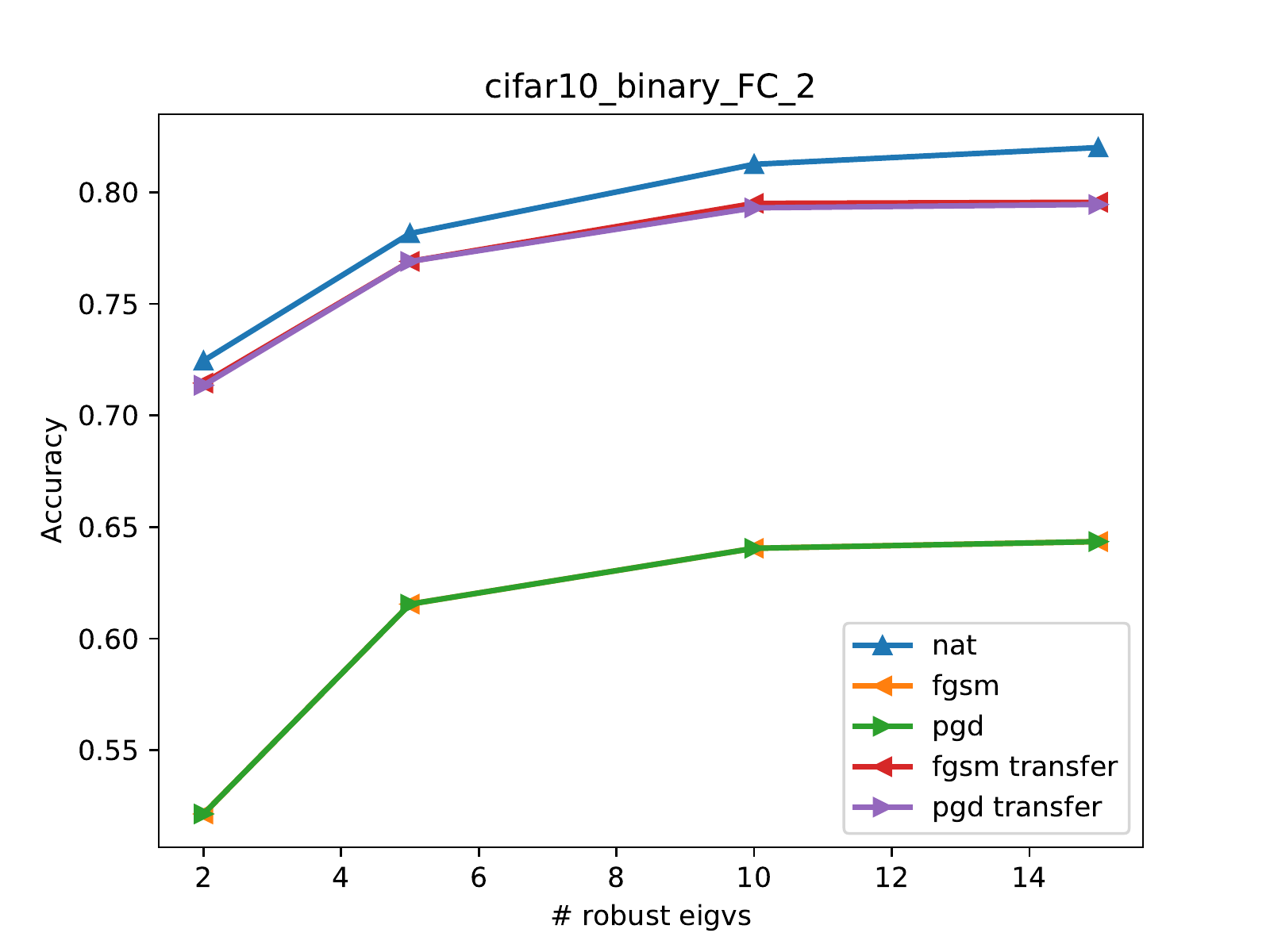}
    \includegraphics[scale=0.3]{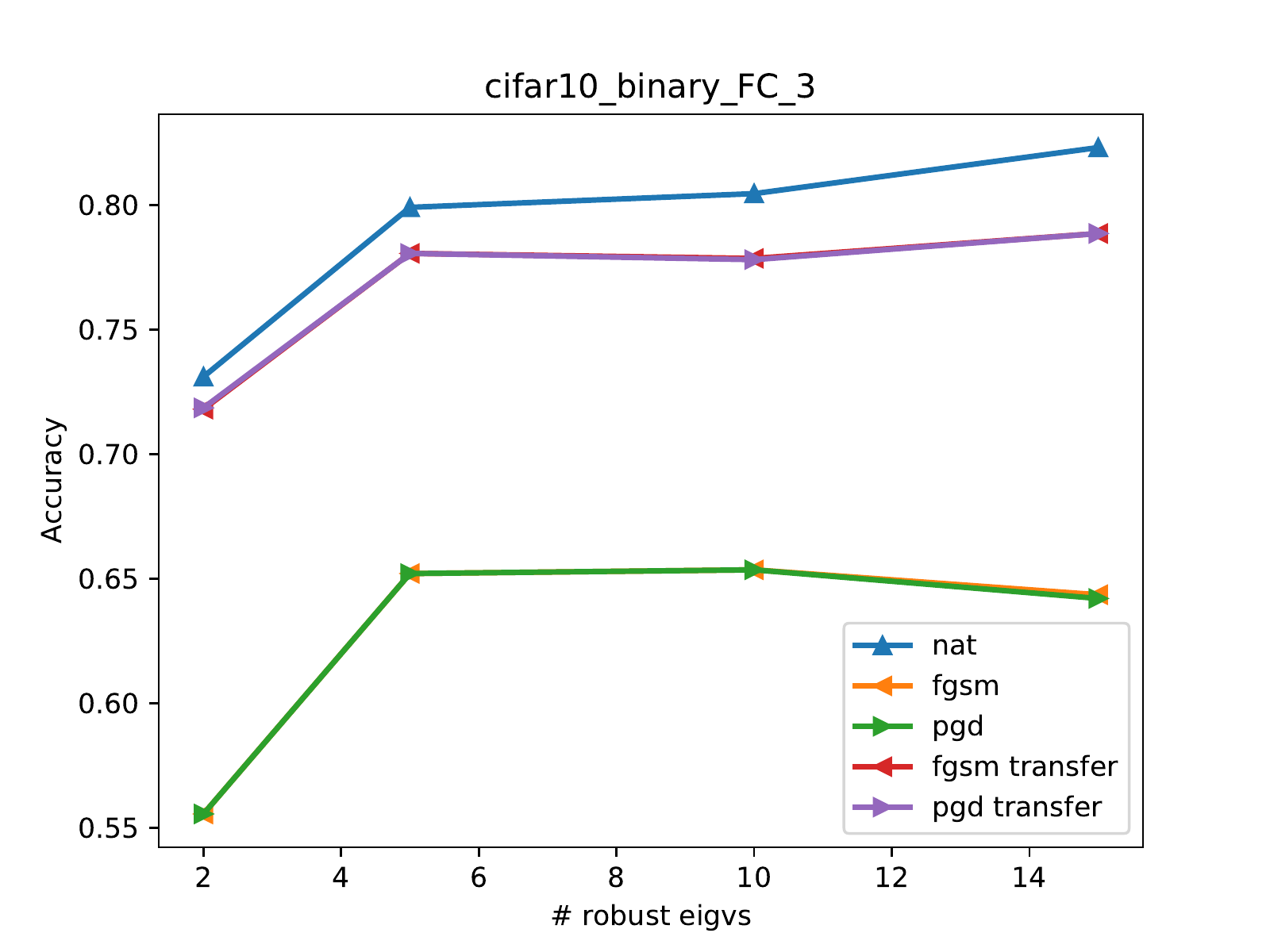}
    \includegraphics[scale=0.3]{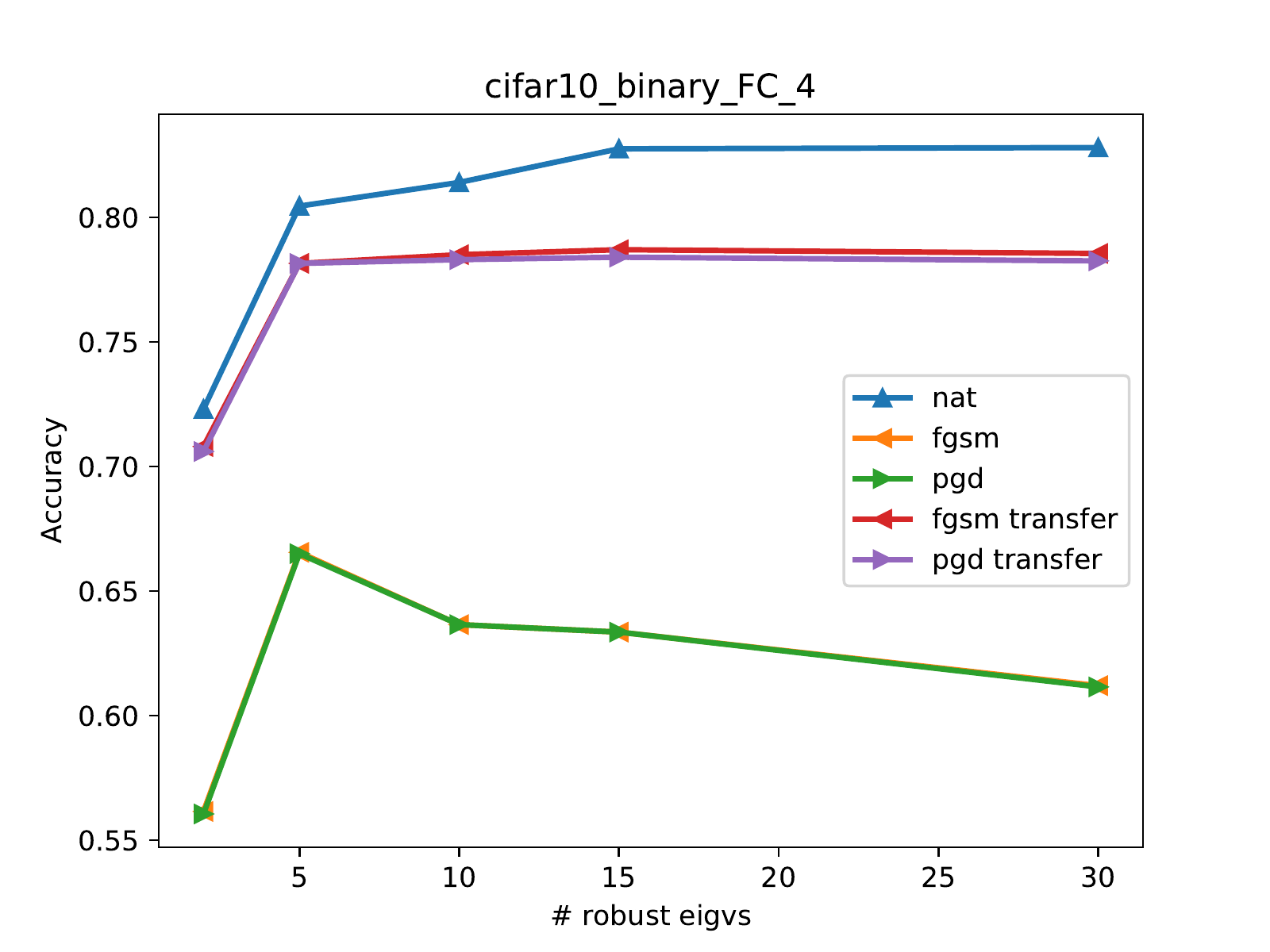}
    \includegraphics[scale=0.3]{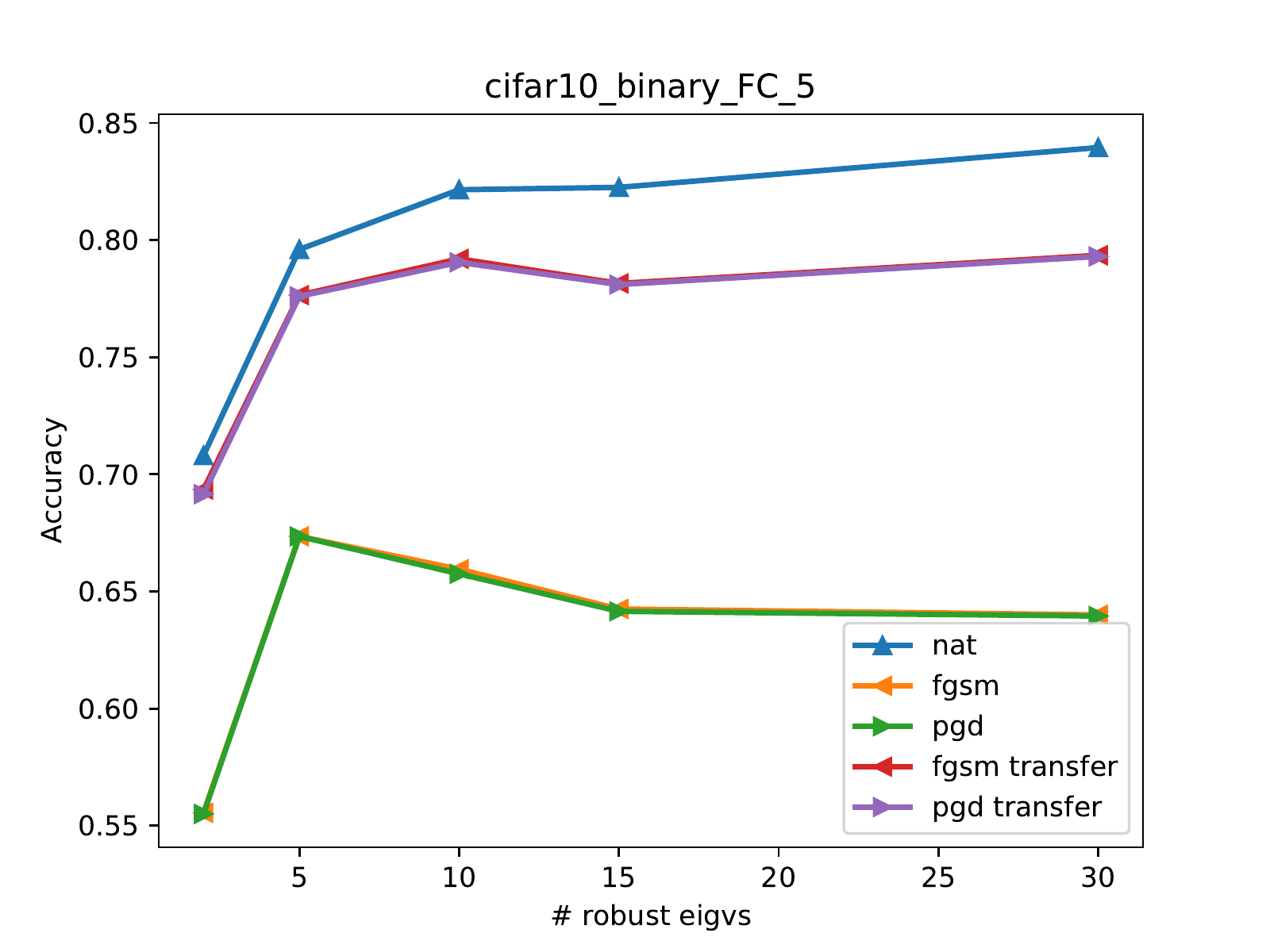}
    \caption{Robustness of kernel keeping a few of the robust features (CIFAR car vs plane). The Gram matrix is computed using all 10K images from the training set. Blue lines show clean accuracy, red and purple (lines are overlapping) show accuracy against FGSM and PGD10 examples generated using the full kernel machines (consisting of all the features) and orange and green show the resulting robustness of the new model (FGSM and PGD10, respectively). Accuracy on the $y$-axis lies in $[0, 1]$.}
    \label{fig:cifar10_ntk_robfeats}
\end{figure}

%%%%%%%%%%%%%%%%%%%%%%%%%%%%%%%%%%%%%%%%%%%%%%%%%%%%%%%%%%%%

% \begin{equation}
%     \begin{split}
%         f(x) & = \Theta(x, X)^\top \Theta^{-1}(X, X) (I - \exp{(-\eta t \Theta)}) Y \\
%         & = \Theta(x, X)^\top \left(V L^{-1} V^\top\right) \left(I - \exp{(-\eta t V L^{-1} V^\top)}\right) Y \\
%         & = \Theta(x, X)^\top \left(V L^{-1} V^\top\right) \left(I - V\exp{(-\eta t L^{-1} )}V^\top\right) Y \\
%         & = \Theta(x, X)^\top \left(V L^{-1} V^\top\right) V \left(I - \exp{(-\eta t L^{-1} )}\right) V^\top Y \\
%         & = \Theta(x, X)^\top \left(\sum_{i = 1}^n \frac{1 - e^{-\eta t \lambda_i}}{\lambda_i} v_i v_i^\top\right) Y.
%     \end{split}
% \end{equation}

\section{Experimental Details for the Kernel Dynamics Section}
\label{App:empirical_ntk}

Here we provide the details of our experiments in Sec. 5, where we compare standard and adversarial training by tracking several kernel quantities. 

For experiments with MNIST, we use a simple convolutional architecture with 3 layers. The first 2 layers compute a convolution (with a 3$\times$3 kernel), followed by a ReLU and then by an average pooling layer (of kernel size 2$\times$2 and stride 2). The 3rd layer is fully-connected with a ReLU non-linearity, followed by a linear prediction layer with 10 outputs. The layers have width 32, 64 and 256, respectively.

For CIFAR-10, we use a deeper architecture consisting of 6 layers. Layers 1 and 2, 3 and 4, 5 and 6 are fully convolutional with 32, 64 and 128 channels, respectively, and a kernel of size 3$\times$3. There is a max pooling operation after layer 2, and average pooling after the final layer, followed by a linear prediction layer. Both pooling operations use a kernel of size 2$\times$2 and stride 2.

We use a fixed learning rate of $10^{-2}$ for all experiments and no weight decay. We do not use any data augmentation, since we are interested in analyzing the behavior of kernels, rather than obtaining the best possible results. Stochastic gradient descent is used in all cases, with a batch size of 300 for MNIST and 250 for CIFAR-10. The kernels quantities are tracked for the same (first) batch during training. For adversarial training, we either used FGSM or PGD (for generating the adversarial examples) with 20 steps against $\ell_\infty$ adversaries. The maximum perturbation size is set to $\epsilon=0.3$ and $\epsilon=8/255$ (for MNIST and CIFAR-10, respectively), and in the case of PGD training we use an attack step size of $\alpha=0.1$ and $\alpha=2/255$, respectively. Experiments were run with JAX \citep{Brad+18}, and empirical NTKs were computed using the Neural Tangents Library \citep{Nov+20}. Neural nets were trained using Flax \citep{Hee+20} and the JaxOpt library \citep{Blo+21}, adapting code available from the JaxOpt repository. This code snippet was licensed under the Apache License, Version 2.0.

\begin{figure}[h]
    \centering
    \includegraphics[scale=0.23]{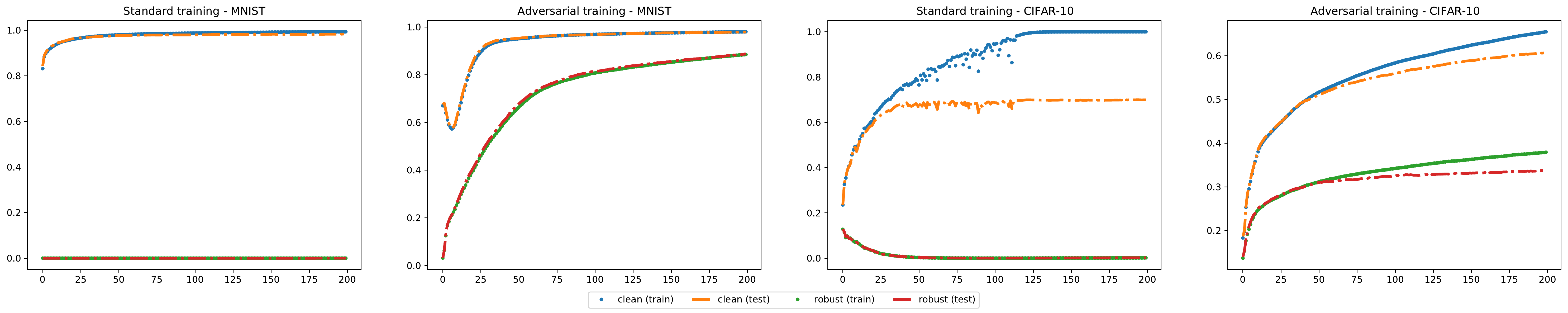}
    \caption{Training curves for networks trained in Sec. 5. From left to right: Standard training on MNIST, Adversarial (PGD-20) training on MNIST, Standard training on CIFAR-10, Adversarial (PGD-20) training on CIFAR-10. For each of the 4 settings, we show train/test accuracy on clean and on adversarially perturbed (PGD-20) data.}
    \label{fig:train_curves}
\end{figure}

Models were trained for 200 epochs. Fig. \ref{fig:train_curves} summarizes the performance of the networks during training. In Fig. \ref{fig:mass_extra}, we show how norm concentration evolves during training - similar to the plots for CIFAR-10 in Fig. \ref{fig:kernels}, but for MNIST and for two choices of eigenvalue index cut-off.

\begin{figure}[h]
    \centering
    \includegraphics[scale=0.3]{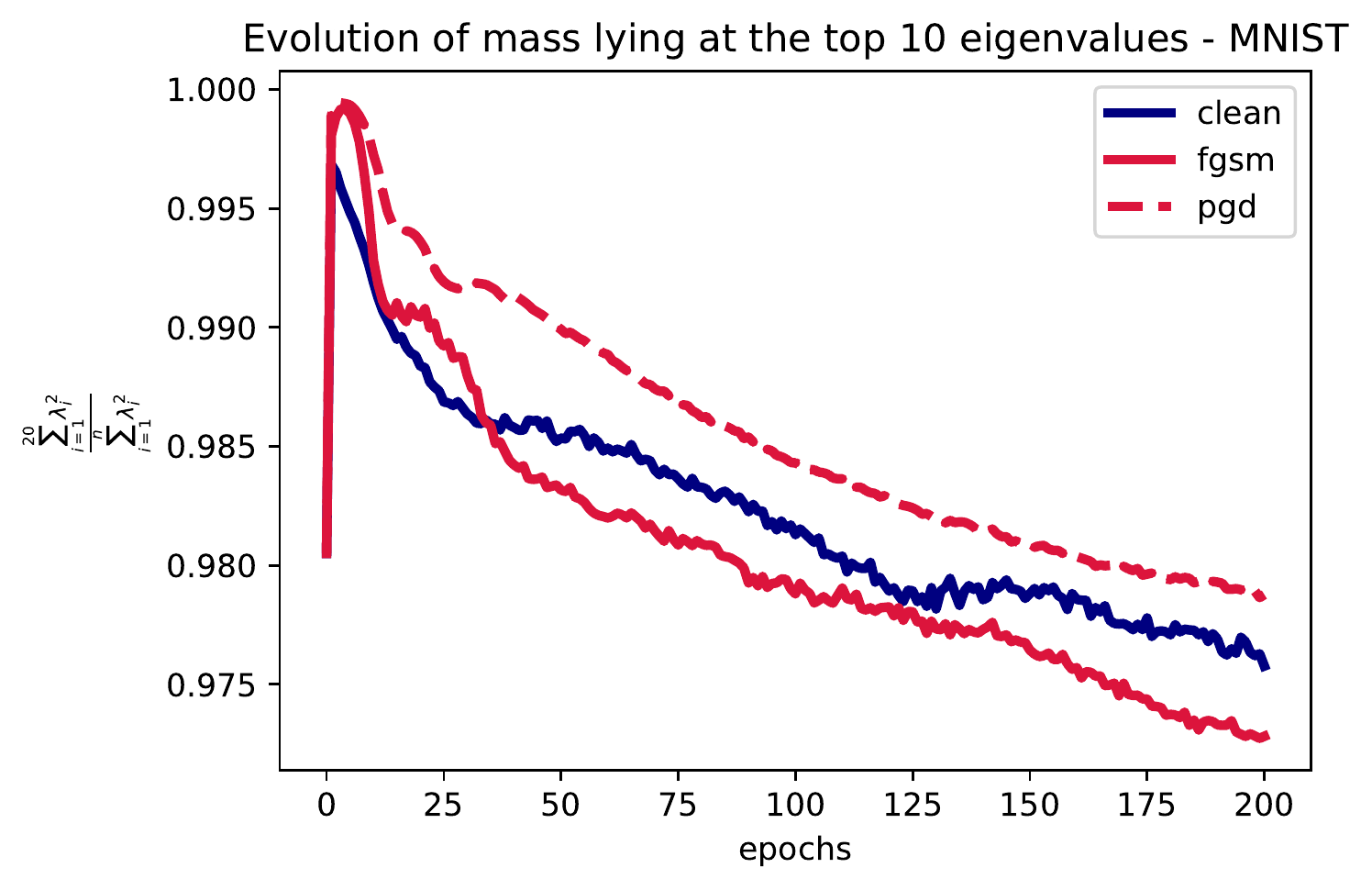}
    \includegraphics[scale=0.3]{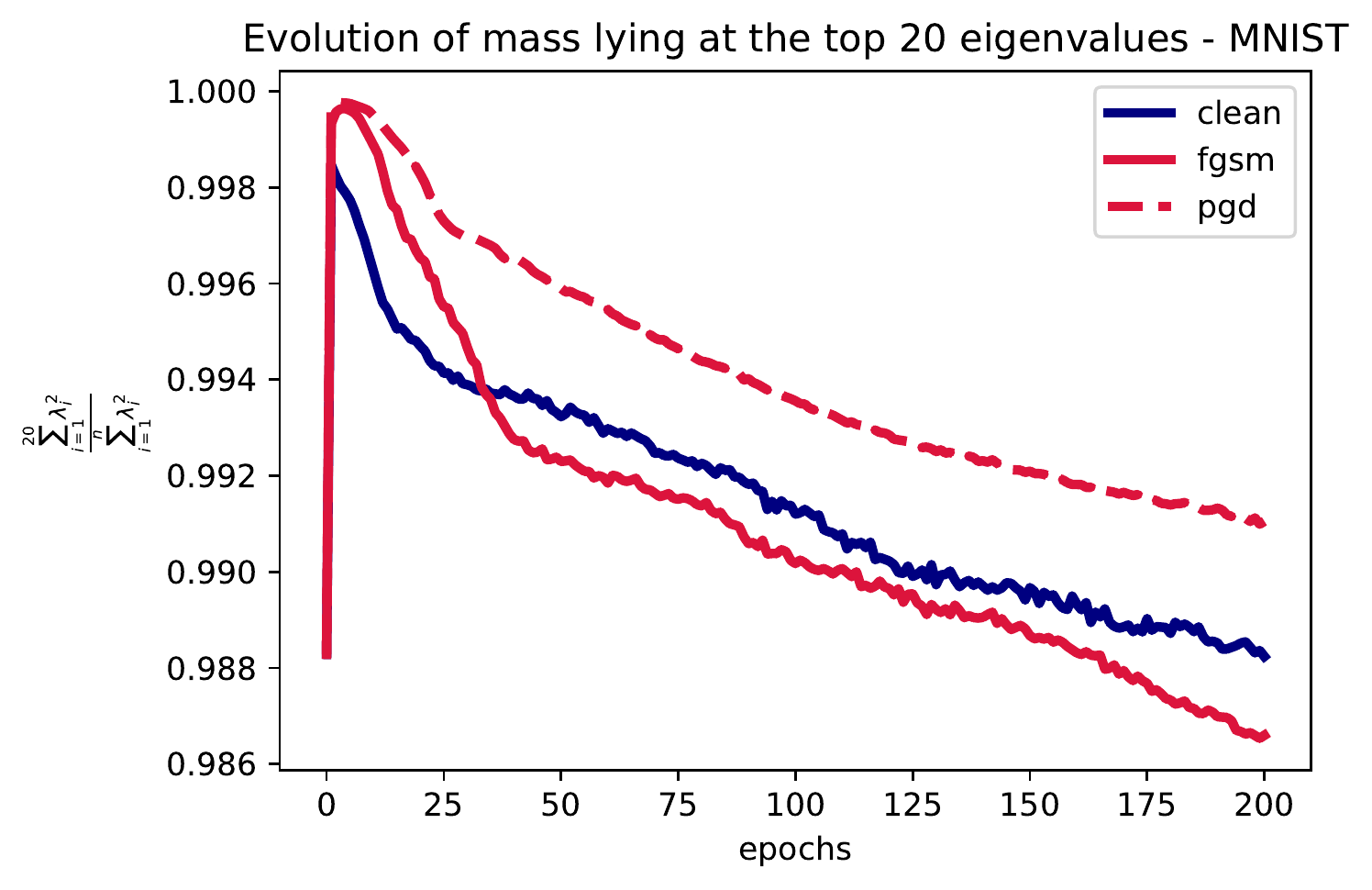}
    \includegraphics[scale=0.3]{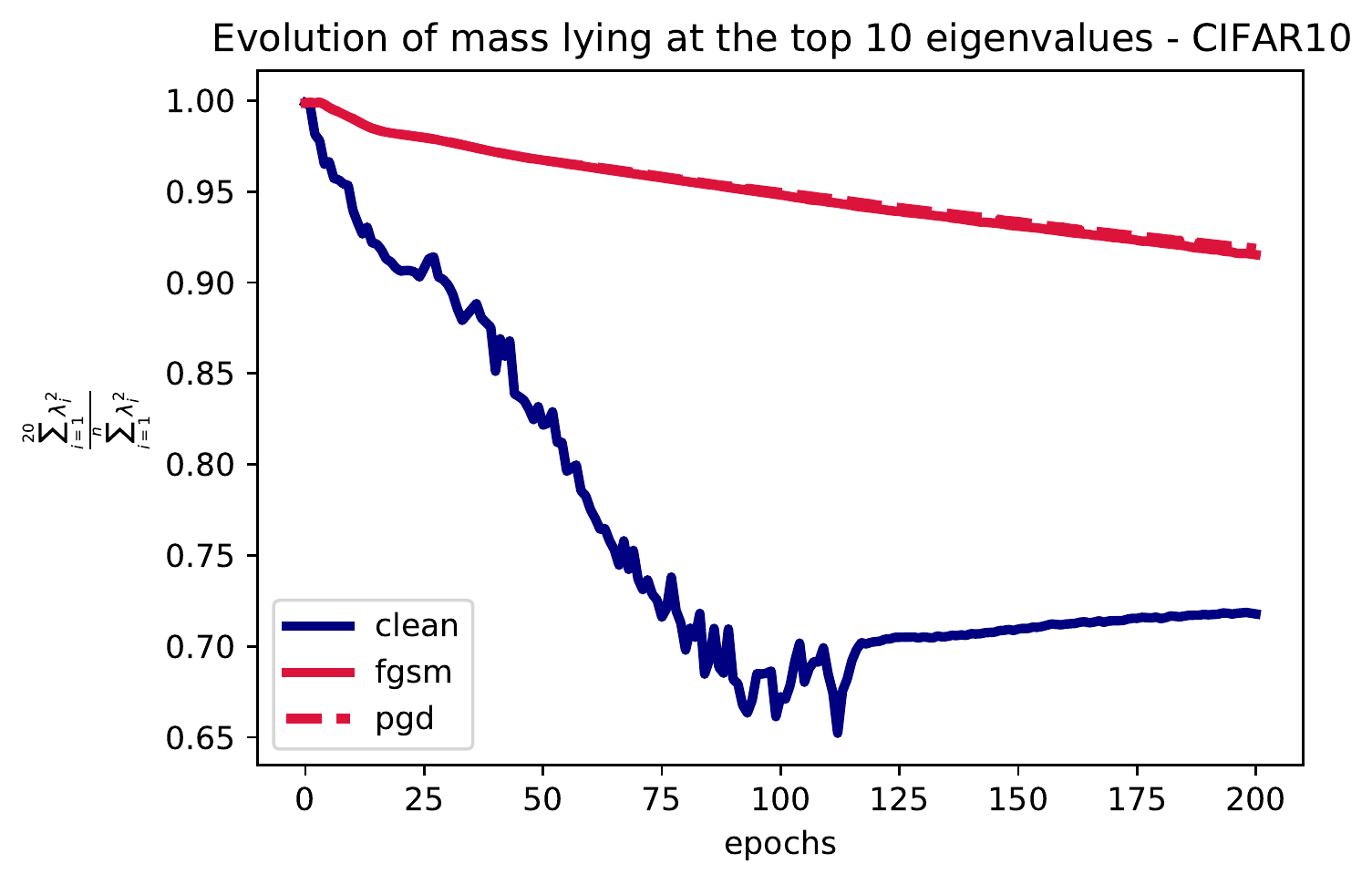}
    \caption{Concentration of norm during standard training vs adversarial training. \textbf{Left:} Concentration on top 10 (MNIST). \textbf{Middle:} Concentration on top 20 (MNIST). \textbf{Right.} Concentration on top 10 (CIFAR-10) (Fig. \ref{fig:kernels} in Sec. 5 shows Concentration on top 20 for CIFAR-10). For MNIST, we observe that when performing adversarial training with just one-step adversary (FGSM), the mass drops below the level of standard training. This is likely related to a phenomenon called catastrophic overfitting which is widespread in simple FGSM training settings \citep{Won+20}.}
    \label{fig:mass_extra}
\end{figure}

Fig.~\ref{fig:top_space_dynamics} shows the polar dynamics for the top space (top 20 eigenvalues) of the kernel. We observe little to no change for adversarial training from Fig. \ref{fig:polarrotation} in the main text that showed the same information for the entire space, though for standard training there is less rotation in the top space. We entertain this as an indication that adversarial training modifies the ``robust'' (top) features of the kernel more than standard training.

\begin{figure}[h]
    \centering
    \includegraphics[scale=0.3]{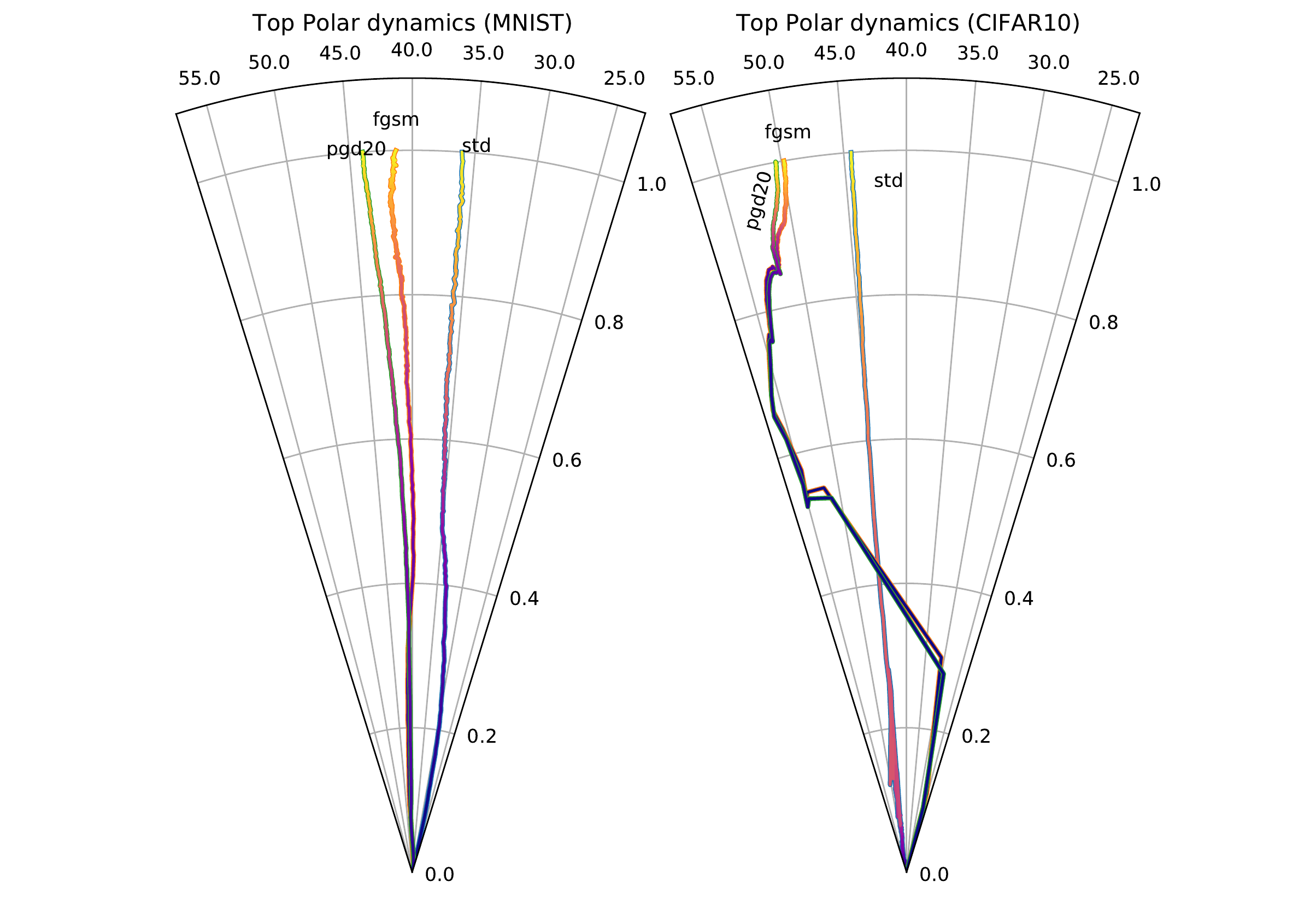}
    \caption{Top-20 dynamics on polar space.}
    \label{fig:top_space_dynamics}
\end{figure}

Finally, Fig. \ref{fig:kernel_images_mnist} shows the values within the kernel matrices before and after training for MNIST for standard and adversarial training. We draw the same conclusions as the main text, namely the ``standard'' kernel has significantly larger values than the ``adversarial'' one.

\begin{figure}[h]
    \centering
    \includegraphics[scale=0.25]{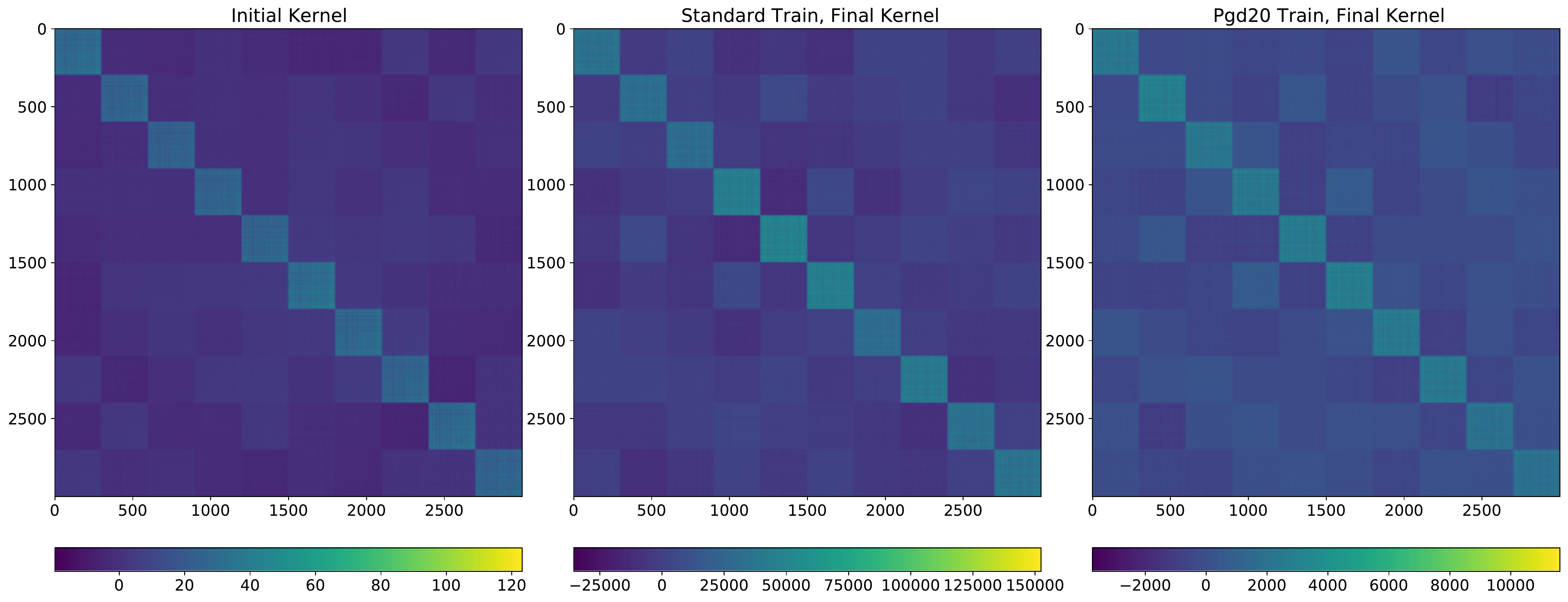}
    \caption{Kernel images for MNIST. Left to Right: Kernel at initialization, Kernel after standard training, Kernel after adversarial training (20 pgd steps). Notice that during training the values increase, but they do substantially more for standard training. Also, observe that for adversarial training there is more spread between different classes. Each little square in the diagonal corresponds to a different class.}
    \label{fig:kernel_images_mnist}
\end{figure}

% \begin{figure}[h]
%     \centering
%     \includegraphics[scale=0.25]{}
%     \caption{Kernel images for CIFAR-10. Left to Right: Kernel at initialization, Kernel after standard training, Kernel after adversarial training (20 pgd steps). (\comment{Copied over from the main text, to correct a minor problem with the values in the color bars.})}
%     \label{fig:kernel_images_cifar}
% \end{figure}

\subsection{Linearized Adversarial Training}
\label{App:linear_advtrain}

 Motivated by the apparent laziness of the kernel during adversarial training and the findings of prior works \citep{Gei+19,Fort+20} that considered linearization (with respect to the parameters) of the model after some epochs, we do the same for adversarial training.

% Our results in Sec. 5 and material in this section show that during adversarial training the kernel becomes relatively static relatively fast, indicating an apparent laziness of the kernel after a hort initial phase during adversarial training. Prior works \citep{Gei+19,Fort+20,Jim+21} that study this phenomenon for standard training have considered linearization (with respect to the parameters) of the model after some epochs to compare the linear dynamics (``lazy regime'') to the full (``rich feature'') regime of the neural net. 

\begin{figure}[ht]
    \centering
    \includegraphics[scale=0.27]{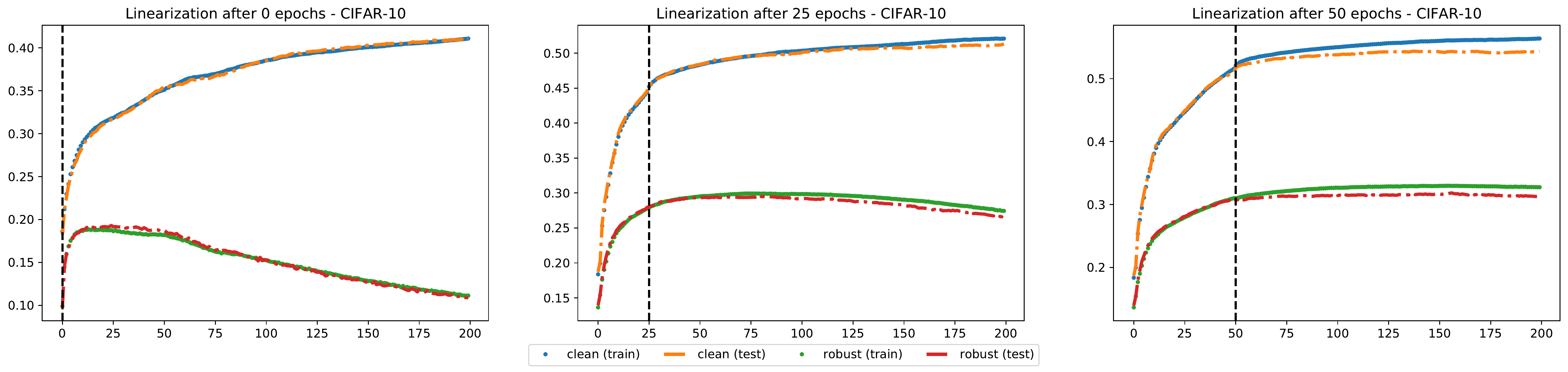}
    \caption{Linearized Adversarial Training on CIFAR-10. \textbf{Left} Linearized after 0 epochs. \textbf{Middle} Linearized after 25 epochs. \textbf{Right} Linearized after 50 epochs. Y-axis has range $(0, 1)$.}
    \label{fig:lin_adv_train}
\end{figure}

We include a small study that linearizes the kernel after a certain number of epochs.
In particular, Fig. \ref{fig:lin_adv_train} shows the training behavior after linearizing the CIFAR-10 model after 25 and 50 epochs, and also at initialization. After linearization, we continue adversarial training in this  simple linearized model (meaning we generate adversarial examples from the linear model). We observe that adversarial training continues, without a collapse of the training method. In comparison to non-linearized training (Fig. \ref{fig:train_curves}), training seems to stagnate. We also observe that the earlier we linearize, the greater the gap is between standard and robust performance. We leave the investigation of this intriguing phenomenon and a detailed comparison to standard training to future work. 

\end{document}